\pgfplotsset{compat=1.4}
\numberwithin{equation}{section}
\newcommand{\B}[1]{{\bf #1}} \newcommand{\Sc}[1]{{\mathcal{#1}}}
\newcommand{\R}[1]{{\rm #1}} \newcommand{\mB}[1]{{\mathbb{#1}}}
\newcommand{\set}[2]{\left\{#1\,\left\vert\, #2\right.\right\}}
\newcommand{\half}{\mbox{\small$\frac{1}{2}$}}
\DeclareMathOperator*{\argmin}{argmin}
\newcommand{\norm}[1]{\left\Vert #1\right\Vert}
\newcommand{\polar}[1]{#1^\circ} \newcommand{\horizon}[1]{#1^\infty}
\newcommand{\support}[2]{\delta^*\left(#1\left|\, #2\right.\right)}
\newcommand{\gauge}[2]{\gamma\left(#1\left|\, #2\right.\right)}
\newcommand{\indicator}[2]{\delta\left(#1\left|\, #2\right.\right)}
\newcommand{\ip}[2]{\left\langle #1,\, #2\right\rangle}
\newcommand{\dom}[1]{\R{dom}\left(#1\right)}
\newcommand{\lev}[2]{\R{lev}_{#1}\left(#2\right)}
\newcommand{\Snp}{\mathcal{S}^n_+}
\newcommand{\Snpp}{\mathcal{S}^n_{++}}
\newcommand{\ncone}[2]{N\left(#1\left|\, #2\right.\right)}
\newcommand{\proj}[3]{P_{#3}\left(#1\left|\, #2\right.\right)}
\newcommand{\dist}[2]{\mathrm{dist}\left(#1\left|\, #2\right.\right)}
\newcommand{\Rn}{\mB{R}^n} \newcommand{\Ran}[1]{\mathrm{Ran}\left(
    #1\right)}
\begin{document}

\title{Sprase/Robust Estimation and Kalman Smoothing with Nonsmooth Log-Concave Densities: Modeling, Computation, and Theory}
% \footnote{This work was in part financially supported by the
%    Natural Sciences and Engineering Research Council of Canada
%    Discovery Grant (22R81254) and the Collaborative Research and
%    Development Grant DNOISE II (375142-08). This research was carried
%    out as part of the SINBAD II project with support from the
%    following organizations: BG Group, BPG, BP, Chevron, Conoco
%    Phillips, Petrobras, PGS, Total SA, and WesternGeco. 
\footnote{The authors would like to thank Bradley M. Bell for insightful discussions and helpful suggestions.}

\author{%\name Aleksander Aravkin, James V. Burke, Alessandro Chiuso and Gianluigi~Pillonetto %,~\IEEEmembership{Member,~IEEE,}
        %,~\IEEEmembership{Member,~IEEE}% <-this % stops a space
        \name Aleksandr Y. Aravkin \email saravkin@us.ibm.com\\
        \addr IBM T.J. Watson Research Center\\
         Yorktown, NY 10598\\
%        \AND
%        \name Bradley M. Bell \email bradbell@snark.apl.washington.edu \\
%        \addr Applied Physics Lab\\
%        University of Washington\\
%        Seattle, WA, USA
        \AND
        \name James V. Burke \email burke@math.washington.edu\\
        \addr Department of Mathematics, University of Washington\\
        Seattle, WA, USA
        \AND
        \name Gianluigi Pillonetto \email giapi@dei.unipd.it\\
        \addr Department of Information Engineering, University of Padova\\
        Padova, Italy}
\editor{}%Leslie Pack Kaelbling}

\maketitle

\vspace{-.5in}
\begin{abstract}
\textcolor{black}{
We introduce a new class of quadratic support (QS) functions}, many of which
already play a crucial role in a variety of applications, 
including machine learning, robust statistical
inference, sparsity promotion, and inverse problems such as Kalman
smoothing. Well known examples of QS penalties include the $\ell_2$,
Huber, $\ell_1$ and Vapnik losses. 
\textcolor{black}{We build on a dual representation for QS functions, 
using it to characterize conditions necessary to interpret
these functions as negative logs of true probability densities.
This interpretation establishes the foundation for statistical modeling
 with both known and new QS loss functions, and 
enables construction of non-smooth multivariate distributions with specified means and
variances from simple scalar building blocks}. 

For a broad subclass of QS loss functions known as 
piecewise linear quadratic (PLQ) penalties, the dual 
representation allows for the development of efficient numerical estimation schemes.
\textcolor{black}{The main contribution of this paper is a 
flexible statistical modeling framework for a
variety of learning applications, together with 
a toolbox of efficient numerical methods for estimation using these densities.}
In particular, for PLQ densities, we show that interior point (IP) methods can be used. 
IP methods solve nonsmooth optimization problems 
by working directly with smooth systems of equations characterizing 
the optimality of these problems. We provide a few simple 
numerical examples, along with a code that can 
be used to prototype general PLQ problems. 

The efficiency of the IP approach depends on the structure
of particular applications. We consider the class of dynamic
inverse problems using Kalman smoothing. This class comprises a wide
variety of applications, where the aim is to reconstruct the state
of a dynamical system with known process and measurement models
starting from noisy output samples. In the classical case, Gaussian
errors are assumed both in the process and measurement models for
such problems. \textcolor{black}{We show that the extended framework allows arbitrary PLQ densities
to be used, and the that the proposed IP approach solves the
generalized Kalman smoothing problem while maintaining the linear
complexity in the size of the time series, just as in the Gaussian
case.} \textcolor{black}{This extends the computational efficiency of the Mayne-Fraser
and Rauch-Tung-Striebel algorithms to a much broader nonsmooth
setting, and includes many recently proposed robust and sparse
smoothers as special cases}. 
\end{abstract}

\begin{keywords}
statistical modeling; convex analysis; nonsmooth optimization; robust inference; sparsity optimization; Kalman smoothing; interior point methods
\end{keywords}

\section{Introduction}

Consider the classical problem of Bayesian parametric regression
\citep{MacKay,Roweis} where the unknown $x\in\mB{R}^n$ is a random
vector\footnote{All vectors are column vectors, unless otherwise specified},
with a prior distribution specified using a known invertible matrix $G\in\mB{R}^{n\times n}$ 
and known vector $\mu \in \mB{R}^n$ via
\begin{equation}
\label{LinearProcess}
\mu = Gx + w\;,
\end{equation}
where $w$ is a zero mean vector with covariance $Q$. 
Let $z$ denote a linear transformation of $x$
contaminated with additive zero mean measurement noise $v$ with covariance $R$,
\begin{equation}
\label{LinearModel}
z = H x + v\;,
\end{equation}
where $H\in\mB{R}^{\ell\times n}$ 
is a known matrix, while $v$ and $w$ are independent. 
It is well known that the (unconditional) minimum variance linear estimator of $x$,
as a function of $z$,
is the solution to the following optimization problem:
\begin{equation}
\label{MainObjective} 
\min_x
\quad (z - H x)^\R{T}R^{-1}(z - Hx)+ (\mu - G x)^\R{T}Q^{-1}(\mu - Gx)\;.
\end{equation}
As we will show, (\ref{MainObjective}) includes estimation problems
arising in discrete-time dynamic linear systems which admit a state
space representation \citep{Anderson:1979,Brockett}. In this
context, $x$ is partitioned into $N$ subvectors $\{x_k\}$, where
each $x_k$ represents the hidden system state at time instant $k$.
For known data $z$, the classical Kalman smoother exploits the
special structure of the matrices $H, G, Q$ and $R$ to compute the
solution of (\ref{MainObjective}) in $O(N)$ operations \citep{Gelb}.
This procedure returns the minimum variance estimate of the state
sequence $\{x_k\}$ when the
additive noise in the system is assumed to be Gaussian.\\
%%%%%%%%%%%%%%%%%%%%%%%%%%%%%%%%%%%%%%%%%%%
In many circumstances, the estimator (\ref{MainObjective})
performs poorly; put another way, quadratic penalization on
model deviation is a bad model in many situations.
For instance, it is not robust with
respect to the presence of outliers in the data
\citep{Hub,Gao2008,Aravkin2011tac,Farahmand2011} and may have
difficulties in reconstructing fast system dynamics, e.g. jumps in
the state values \citep{Ohlsson2011}. In addition,
sparsity-promoting regularization is often used in order to extract
a small subset from a large measurement or parameter vector which has
greatest impact on the predictive capability of the estimate for
future data. This sparsity principle permeates many well known
techniques in machine learning and signal processing, including
feature selection, selective shrinkage, and compressed sensing
\citep{Hastie90,LARS2004,Donoho2006}. In these cases,
(\ref{MainObjective}) is often replaced by a more general formulation
\begin{equation} 
\label{probTwo}
\min_{x} \quad  V \left(H x-z;R \right)  +  W\left(Gx-\mu; Q \right) \\
\end{equation}
where the loss $V$ may be the $\ell_2$-norm, the Huber penalty
\citep{Hub}, Vapnik's $\epsilon$-insensitive loss (used in support
vector regression \citep{Vapnik98} see also~\citep{Hastie01}) or the hinge loss
(leading to support vector classifiers
\citep{Evgeniou99,Pontil98,Scholkopf00}). The regularizer $W$ may be
the $\ell_2$-norm, the $\ell_1$-norm (as in the LASSO
\citep{Lasso1996}), or a weighted combination of the two, yielding
the elastic net procedure \citep{EN_2005}. Many  
learning algorithms using infinite-dimensional reproducing kernel
Hilbert spaces as hypothesis spaces \citep{Aronszajn,Saitoh,Cucker}
boil down to solving finite-dimensional problems of the form
(\ref{probTwo}) by
virtue of the representer theorem \citep{Wahba1998,Scholkopf01}.\\
%%%%%%%%%%%%%%%%%%%%%%%%%%%%%%%%%%%%%%%%%%%
These robust and sparse approaches can often be interpreted as
placing non-Gaussian priors on $w$ (or directly on $x$) and on the measurement noise $v$.
The Bayesian interpretation of (\ref{probTwo}) has been extensively
studied in the statistical and machine learning literature in recent
years and probabilistic approaches used in the analysis of
estimation and learning algorithms can be found e.g. in
\citep{McKayARD,Tipping2001,Wipf_IEEE_TIT_2011}. Non-Gaussian model
errors and priors leading to a great variety of loss and penalty
functions are also reviewed in \citep{Wipf_ARD_NIPS_2006} using
convex-type representations, and integral-type variational
representations related to
Gaussian scale mixtures. \\
%%%%%%%%%%%%%%%%%%%%%%%%%%%%%%%%%%%%%%%%%%%
In contrast to the above approaches, in the first part of the paper,
we consider a wide class of quadratic support (QS) functions
and exploit their dual representation. 
This class of functions generalizes the notion of  
piecewise linear quadratic (PLQ) penalties~\cite{RTRW}. 
The dual representation is the key to
identifying which QS loss functions can be associated with a density,
which in turn
allows us to interpret the solution to the problem (\ref{probTwo}) as
a MAP estimator when the loss functions $V$ and $W$ come from this
subclass of QS penalties.
This viewpoint allows statistical
modeling using non-smooth penalties, such as the
$\ell_1$, hinge, Huber and Vapnik losses, which are all PLQ penalties. 
\textcolor{black}{Identifying a statistical interpretation for this class of problems gives us several advantages, 
including a systematic constructive approach to prescribe 
mean and variance parameters for the corresponding model;
a property that is particularly important for Kalman smoothing.}
 \\
%%%%%%%%%%%%%%%%%%%%%%%%%%%%%%%%%%%%%%%%%%%
In addition,
the dual representation provides the foundation for efficient numerical methods 
in estimation based on interior point optimization technology.
In the second part of the paper, 
we derive the Karush-Kuhn-Tucker (KKT) equations 
for problem (\ref{probTwo}), and introduce interior point (IP) methods, 
which are iterative methods to solve the KKT equations using smooth approximations.  
This is essentially a smoothing approach 
to many (non-smooth) robust and sparse problems of interest to
practitioners.  Furthermore, we provide conditions under which the IP methods solve 
(\ref{probTwo}) when $V$ and $W$ come from PLQ densities, 
and describe implementation details for the entire class. \\
%%%%%%%%%%%%%%%%%%%%%%%%%%%%%%%%%%%%%%%%%%%
A concerted research effort has recently focused on the solution
of regularized large-scale inverse and learning problems, 
where computational costs and memory limitations are
critical. This class of problems includes the popular kernel-based methods
\citep{Rasmussen,Scholkopf01b,Smola:2003}, 
coordinate descent methods \citep{Tseng,Lucidi,Dinuzzo11} and
decomposition techniques \citep{Joachims,Lin,Lucidi}, one of which 
is the widely used sequential minimal optimization
algorithm for support vector machines \citep{Platt}. Other
techniques are based on kernel approximations, e.g. using incomplete
Cholesky factorization \citep{Fine}, approximate eigen-decomposition
\citep{Zhang} or truncated spectral representations
\citep{PilAuto2007}. Efficient interior point methods have been developed 
for $\ell_1$-regularized problems \citep{Kim}, and for support vector machines \citep{Ferris:2003}. \\
%%%%%%%%%%%%%%%%%%%%%%%%%%%%%%%%%%%%%%%%%%%
\textcolor{black}{In contrast, general and efficient solvers for state space estimation problems of the
form (\ref{probTwo}) are missing in the literature. 
The last part of this paper provides a contribution to fill this gap, 
specializing the general results to the dynamic case, and recovering
the classical efficiency results of the least-squares formulation. 
In particular, we design new Kalman smoothers tailored for
systems subject to noises coming from PLQ densities. Amazingly, it
turns out that the IP method used in \citep{Aravkin2011tac}
generalizes perfectly to the entire class of PLQ densities under a
simple verifiable non-degeneracy condition. }
In practice,  IP methods converge in a small number of iterations, 
and the effort per iteration depends on the structure of the underlying problem.
\textcolor{black}{We show that the IP iterations for all PLQ Kalman smoothing problems can be computed
with a number of operations that scales linearly in $N$, as in the
quadratic case. This theoretical foundation generalizes the results
recently obtained in
\citep{Aravkin2011tac,AravkinIFAC,Farahmand2011,Ohlsson2011}, framing
them as particular cases of the general framework presented here.}\\
%%%%%%%%%%%%%%%%%%%%%%%%%%%%%%%%%%%%%%%%%%%
The paper is organized as follows.
\textcolor{black}{In Section \ref{PLQP} we introduce the class of QS convex functions,
and give sufficient conditions that allow us to interpret
these functions as the negative logs of 
associated probability densities.
In Section \ref{PLQPTwo} we show how to
construct QS penalties and densities having a desired structure from basic components, 
and in particular how multivariate densities can be endowed with 
prescribed means and variances using scalar building blocks.
To illustrates this procedure, further details are provided for the 
Huber and Vapnik penalties.
In Section \ref{Optimization}, we focus on PLQ penalties, derive the associated KKT system,
and present a theorem that guarantees convergence of IP methods
under appropriate hypotheses.} 
 In Section~\ref{SimpleNumerics}, we present a few simple well-known problems, and compare a basic 
IP implementation for these problems with an ADMM implementation (all code is 
available online). 
In Section \ref{InteriorPointKS}, we present the Kalman smoothing dynamic model, 
\textcolor{black}
{formulate 
Kalman smoothing with PLQ penalties, present the KKT system for the dynamic case,
 and show that IP iterations for PLQ smoothing preserve the classical computational 
efficiency known for the Gaussian case. }
We present numerical examples using both simulated and real data 
in Section \ref{MethodComparisonSection}, and make some concluding remarks
in Section \ref{Conclusions}.
Section \ref{Appendix} serves as an appendix
where supporting mathematical results and proofs
are presented.

%%%%%%%%%%%%%%%%%%%%%%%%%%%%%%%%%%%%%%%%%
\section{Quadratic Support Functions and Densities}
\label{PLQP}
%%%%%%%%%%%%%%%%%%%%%%%%%%%%%%%%%%%%%%%%%

In this section, we introduce the class of Quadratic Support (QS) functions, 
characterize some of their properties, and show that many commonly
used penalties fall into this class. We also give a statistical interpretation 
to QS penalties by interpreting them as negative log likelihoods of probability densities;
this relationship allows prescribing means and variances along with the general
quality of the error model, an essential requirement of the Kalman smoothing
framework and many other areas. 

%%%%%%%%%%%%%%%%%%%%%%%%%%%%%%%%%%%%%%%%%
\subsection{Preliminaries}
\label{Prel}
%%%%%%%%%%%%%%%%%%%%%%%%%%%%%%%%%%%%%%%%%

We recall a few definitions from
convex analysis, required to specify the domains of 
QS penalties. The reader is referred to~\citep{RTR, RTRW} for 
more detailed reading.

\begin{itemize}
\item \index{affine hull} (Affine hull) Define the affine hull of any set
$C\subset\mB{R}^n$, denoted by $\R{aff}(C)$, as the smallest affine set 
(translated subspace) that
contains $C$.  
\item (Cone) For any set $C\subset\mB{R}^n$, denote by $\R{cone}\; C$ 
the set $\{tr | r \in C, t\in \mB{R}_+\}$.
\item (Domain) For  $f(x): \mathbb{R}^n\rightarrow \mathbb{\overline R} = \{\mB{R} \cup \infty\}$, 
$\R{dom}(f) = \{x: f(x) < \infty\}$.
\item \index{polar} (Polars of convex sets) For any convex set $C \subset \mB{R}^m$,
the polar of $C$ is defined to be
\[
\polar{C} := \{r | \langle r, d \rangle \leq 1 \; \forall \; d \in C\},
\]
and if $C$ is a convex cone, this representation is equivalent to 
\[
\polar{C} := \{r | \langle r, d \rangle \leq 0 \; \forall \; d \in C\}.
\]
\item (Horizon cone). Let $C\subset\mB{R}^n$ be a nonempty convex set. The horizon cone 
$\horizon{C}$ is the convex cone of `unbounded directions' for $C$,
i.e. $d\in \horizon{C}$ if $C + d \subset C$.
\item (Barrier cone). The barrier cone of a convex set $C$ is denoted by
  $\mathrm{bar}(C)$: 
\[
\mathrm{bar}(C) := \left\{x^* | \mbox{for some $\beta\in \mathbb{R}, \ip{x}{x^*}
\leq \beta\;\ \forall x \in C$}\right\}.
\]
\item (Support function). The support function for a set $C$ is denoted by 
$\support{x}{C}$:
\[
\support{x}{C} := \sup_{c\in C} \ip{x}{c}\;.
\]
\end{itemize}

%%%%%%%%%%%%%%%%%%%%%%%%%%%%%%%%%%%%%%%%%
\subsection{QS functions and densities}
%%%%%%%%%%%%%%%%%%%%%%%%%%%%%%%%%%%%%%%%%

We now introduce the QS functions and associated densities that are the focus of this paper.
We begin with the dual representation, which is crucial to both
establishing a statistical interpretation and to the development of a 
computational framework. 
\begin{definition}[Quadratic Support functions and penalties]
\label{generalPLQ}
A QS function is any function $\rho(U, M, b, B; \cdot): \mB{R}^n \rightarrow \mathbb{\overline R}$ 
having representation
\begin{equation}\label{PLQpenalty}
\rho(U, M, b, B; y) 
=
\sup_{u \in U}
\left\{ \langle u,b + By \rangle - \half\langle u, Mu
\rangle \right\} \;,
\end{equation}
where $U \subset \mB{R}^m$ is a nonempty convex set, 
$M\in \Snp$ the set of real symmetric positive semidefinite matrices,
and $b + By$ is an injective affine transformation in $y$, with $B\in\mB{R}^{m\times n}$, 
so, in particular, $m \leq n$ and $\R{null}(B) = \{0\}$. 

When $0 \in U$, we refer to the associated QS function as a {\it penalty}, since it 
is necessarily non-negative. 
\end{definition}
\begin{remark}
When $U$ is polyhedral, $0\in U$, $b = 0$ and $B = I$, we recover the basic piecewise linear-quadratic penalties 
characterized in \citep[Example 11.18]{RTRW}. 
\end{remark}
\begin{theorem}
\label{domainCharTheorem}  
Let $U, M, B, b$ be as in Definition~\ref{generalPLQ}, 
and set $K=\horizon{U} \cap \R{null}(M)$.
Then 
%Then \(\polar{(B^\R{T}K)} \subset
%\R{cone}\left(\mathrm{dom}(\rho)\right) \) and \([B^\R{T}(K\cap -K)]^\perp =
%\mathrm{aff}[\mathrm{dom}(\rho)]\).
%When $b = 0$ and $B = I$, this simplifies to  
\[
B^{-1}[\mathrm{bar}(U)+\Ran{M} - b]\subset\R{dom}[\rho(U, M, B, b; \cdot)]\subset B^{-1}[\polar{K} - b]
\; ,
\]
with equality throughout when $\mathrm{bar}(U)+\Ran{M}$ is closed, where
$\mathrm{bar}(U)=\dom{\support{\cdot}{U}}$ is the barrier cone of $U$. In particular, equality
always holds when $U$ is polyhedral.
\end{theorem}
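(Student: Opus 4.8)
The plan is to recognize $\rho$ as a conjugate function and reduce the whole statement to a characterization of the domain of that conjugate. Setting $\theta(u) = \half\ip{u}{Mu} + \indicator{u}{U}$, Definition~\ref{generalPLQ} gives $\rho(U,M,B,b;y) = \theta^*(b+By)$, so that $\dom{\rho(U,M,B,b;\cdot)} = B^{-1}[\dom{\theta^*} - b]$, where $B^{-1}[S]=\{y\mid By\in S\}$. Since the map $S \mapsto B^{-1}[S-b]$ preserves inclusions, it suffices to prove the corresponding chain and equality claim for $\theta^*$, namely $\mathrm{bar}(U) + \Ran{M} \subset \dom{\theta^*} \subset \polar{K}$, and then transport everything through $B^{-1}[\cdot - b]$.

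For the two inclusions I would argue by direct supremum estimates. For the left inclusion, fix $w = p + Mv$ with $p \in \mathrm{bar}(U)$ and $v\in\mB{R}^m$, and bound $\theta^*(w) = \sup_{u\in U}\{\ip{u}{p} + \ip{u}{Mv} - \half\ip{u}{Mu}\}$ termwise: the first term is at most $\support{p}{U}<\infty$ by definition of the barrier cone, while completing the square gives $\ip{u}{Mv}-\half\ip{u}{Mu} = \half\ip{v}{Mv} - \half\ip{u-v}{M(u-v)} \le \half \ip{v}{Mv}$ because $M \in \Snp$; hence $\theta^*(w) \le \support{p}{U} + \half\ip{v}{Mv} < \infty$. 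For the right inclusion, fix $d \in K = \horizon{U}\cap\R{null}(M)$ and $w\in\dom{\theta^*}$. For $u\in U$ and $t\ge 0$ the recession property yields $u + td\in U$, and $Md = 0$ gives both $M(u+td)=Mu$ and $\ip{d}{Mu}=\ip{Md}{u}=0$, so evaluating the supremum at $u+td$ produces $\theta^*(w) \ge \ip{u}{w} + t\ip{d}{w} - \half\ip{u}{Mu}$; letting $t\to\infty$ forces $\ip{d}{w}\le 0$. As $d\in K$ is arbitrary and $K$ is a cone, $w\in\polar{K}$.

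The crux is the equality case, where I would identify $\polar{K}$ with the closure of $\mathrm{bar}(U)+\Ran{M}$ via polarity calculus for the closed convex cones $\horizon{U}$ and $\R{null}(M)$. The three ingredients are: the polar of an intersection is the closure of the sum of the polars, $\polar{(\horizon{U}\cap\R{null}(M))} = \R{cl}\!\left(\polar{(\horizon{U})}+\polar{(\R{null}(M))}\right)$; the horizon and barrier cones are mutually polar, $\R{cl}(\mathrm{bar}(U)) = \polar{(\horizon{U})}$; and for the subspace $\R{null}(M)$ the polar is the orthogonal complement $\R{null}(M)^{\perp} = \Ran{M}$, using symmetry of $M$. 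Combining these with $\R{cl}(\R{cl}(A)+C)=\R{cl}(A+C)$ gives $\polar{K} = \R{cl}(\mathrm{bar}(U)+\Ran{M})$. Thus when $\mathrm{bar}(U)+\Ran{M}$ is closed, the outer two sets of the chain coincide, pinching $\dom{\theta^*}$ and forcing equality throughout. For the polyhedral case, $U$ polyhedral makes $\mathrm{bar}(U)$ polyhedral and $\Ran{M}$ a subspace, so their sum is polyhedral and hence automatically closed, giving equality without any extra hypothesis.

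The main obstacle I anticipate is this equality step: getting the polarity relations exactly right, in particular tracking the closure operations carefully and invoking the horizon/barrier-cone duality, while ensuring the cones involved are genuine closed convex cones so that the intersection-polar identity is applicable. By contrast, the two inclusions are routine comparison estimates on the defining supremum.
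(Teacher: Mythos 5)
Your proposal is correct and follows essentially the same route as the paper's proof: the same reduction of $\dom{\rho(U,M,B,b;\cdot)}$ to $B^{-1}[\dom{\theta^*}-b]$, the same completing-the-square bound for the left inclusion, the same recession-direction argument for the right inclusion (stated positively rather than by contraposition), and the same polarity calculus ($\polar{(K_1\cap K_2)}=\mathrm{cl}(\polar{K_1}+\polar{K_2})$ together with horizon/barrier duality and $\polar{\R{null}(M)}=\Ran{M}$) for the equality and polyhedral cases. Your explicit tracking of the closure operations in the equality step is, if anything, slightly more careful than the paper's, which first asserts that closedness of $\mathrm{bar}(U)+\Ran{M}$ forces closedness of $\mathrm{bar}(U)$.
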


%
%
%Note that the functions $\rho$ are still piecewise linear-quadratic.
We now show that many commonly used penalties are special cases of  
QS (and indeed, of the PLQ) class.%

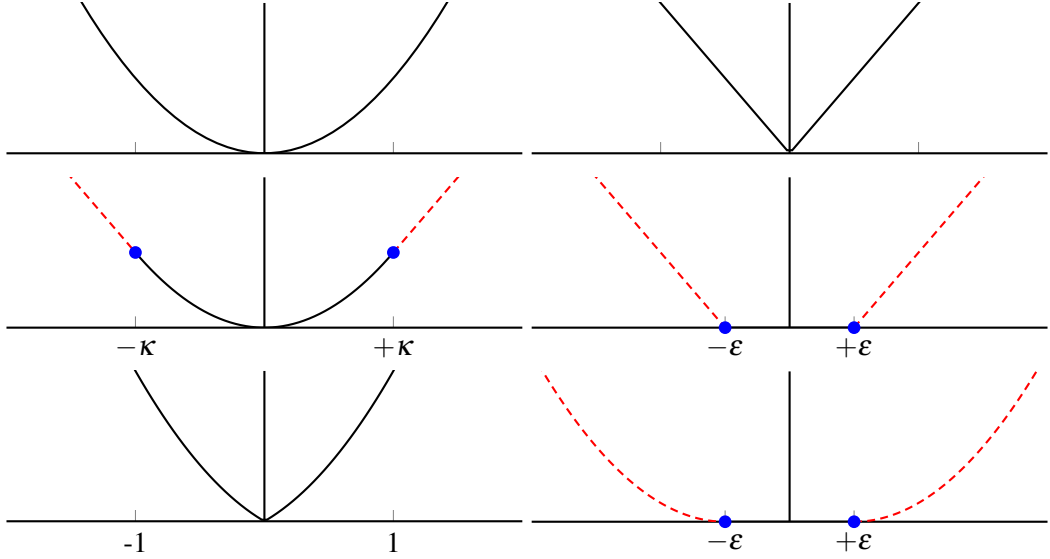
\begin{figure}
\centering
%%%%%%%%%%%%%%%%%%%%%
% L2 graph
%%%%%%%%%%%%%%%%%%%%%
\begin{tikzpicture}
  \begin{axis}[
    thick,
    width=.45\textwidth, height=2cm,
    xmin=-2,xmax=2,ymin=0,ymax=1,
    no markers,
    samples=50,
    axis lines*=left, 
    axis lines*=middle, 
    scale only axis,
    xtick={-1,1},
    xticklabels={},
    ytick={0},
    ] 
%\addplot[red,domain=-2:-1,densely dashed]{-x-.5};
\addplot[domain=-2:+2]{.5*x^2};
%\addplot[red,domain=+1:+2,densely dashed]{x-.5};
%\addplot[blue,mark=*,only marks] coordinates {(-1,.5) (1,.5)};
  \end{axis}
 %%%%%%%%%%%%%%%%%%%%%
% L1 graph
%%%%%%%%%%%%%%%%%%%%%
\end{tikzpicture}
\begin{tikzpicture}
  \begin{axis}[
    thick,
    width=.45\textwidth, height=2cm,
    xmin=-2,xmax=2,ymin=0,ymax=1,
    no markers,
    samples=100,
    axis lines*=left, 
    axis lines*=middle, 
    scale only axis,
    xtick={-1,1},
    xticklabels={},
    ytick={0},
    ] 
  \addplot[domain=-2:+2]{abs(x)};
  \end{axis}
\end{tikzpicture}
%%%%%%%%%%%%%%%%%%%%%
% Huber graph
%%%%%%%%%%%%%%%%%%%%%
\begin{tikzpicture}
  \begin{axis}[
    thick,
    width=.45\textwidth, height=2cm,
    xmin=-2,xmax=2,ymin=0,ymax=1,
    no markers,
    samples=50,
    axis lines*=left, 
    axis lines*=middle, 
    scale only axis,
    xtick={-1,1},
    xticklabels={$-\kappa$,$+\kappa$},
    ytick={0},
    ] 
\addplot[red,domain=-2:-1,densely dashed]{-x-.5};
\addplot[domain=-1:+1]{.5*x^2};
\addplot[red,domain=+1:+2,densely dashed]{x-.5};
\addplot[blue,mark=*,only marks] coordinates {(-1,.5) (1,.5)};
  \end{axis}
\end{tikzpicture}
%%%%%%%%%%%%%%%%%%%%%
% Vapnik Graph
%%%%%%%%%%%%%%%%%%%%%
\begin{tikzpicture}
  \begin{axis}[
    thick,
    width=.45\textwidth, height=2cm,
    xmin=-2,xmax=2,ymin=0,ymax=1,
    no markers,
    samples=50,
    axis lines*=left, 
    axis lines*=middle, 
    scale only axis,
    xtick={-0.5,0.5},
    xticklabels={$-\epsilon$,$+\epsilon$},
    ytick={0},
    ] 
    \addplot[red,domain=-2:-0.5,densely dashed] {-x-0.5};
    \addplot[domain=-0.5:+0.5] {0};
    \addplot[red,domain=+0.5:+2,densely dashed] {x-0.5};
    \addplot[blue,mark=*,only marks] coordinates {(-0.5,0) (0.5,0)};
  \end{axis}
\end{tikzpicture}
%%%%%%%%%%%%%%%%%%%%%
% Elastic Net Graph
%%%%%%%%%%%%%%%%%%%%%
\begin{tikzpicture}
  \begin{axis}[
    thick,
    width=.45\textwidth, height=2cm,
    xmin=-2,xmax=2,ymin=0,ymax=1,
    no markers,
    samples=100,
    axis lines*=left, 
    axis lines*=middle, 
    scale only axis,
    xtick={-1,1},
    xticklabels={-1, 1},
    ytick={0},
    ] 
\addplot[domain=-2:+2]{.5*x^2 + 0.5*abs(x)};
  \end{axis}
\end{tikzpicture}
%%%%%%%%%%%%%%%%%%%%%
% Soft insensitive loss graph
%%%%%%%%%%%%%%%%%%%%%
\begin{tikzpicture}
  \begin{axis}[
    thick,
    width=.45\textwidth, height=2cm,
    xmin=-2,xmax=2,ymin=0,ymax=1,
    no markers,
    samples=50,
    axis lines*=left, 
    axis lines*=middle, 
    scale only axis,
    xtick={-0.5,0.5},
    xticklabels={$-\epsilon$,$+\epsilon$},
    ytick={0},
    ] 
    \addplot[red,domain=-2:-0.5,densely dashed] {0.5*(-x-0.5)^2};
    \addplot[domain=-0.5:+0.5] {0};
    \addplot[red,domain=+0.5:+2,densely dashed] {0.5*(x-0.5)^2};
    \addplot[blue,mark=*,only marks] coordinates {(-0.5,0) (0.5,0)};
  \end{axis}
\end{tikzpicture}
 \caption{ Scalar $\ell_2$ (top left), $\ell_1$ (top right), Huber (middle left), Vapnik (middle right), elastic net (bottom left) and smooth insensitive loss (bottom right) penalties}
\label{HuberVapnikFig}
\end{figure}

%\begin{figure} \label{HuberVapnikFig}
%  \begin{center}
%   \begin{tabular}{cc}
%%\hspace{.05in}
% { \includegraphics[width=2in]{}}
%\hspace{.8in}
%{\includegraphics[width=2in]{}}
%    \end{tabular}
%  \begin{tabular}{cc}
% { \includegraphics[width=2in]{}}
%\hspace{.8in}
%{\includegraphics[width=2in]{}}
%    \end{tabular}   
%    \caption{ Scalar $\ell_2$ (top left), $\ell_1$ (top right), Huber (bottom left) and Vapnik (bottom right) penalties}
%     \end{center}
%\end{figure}
%
\begin{remark}[scalar examples]
\label{scalarExamples} 
$\ell_2$, $\ell_1$, elastic net, Huber, hinge, and Vapnik penalties 
are all representable using the notation of Definition
\ref{generalPLQ}.
\begin{enumerate}
%%%%%%%%%%%%%%%%%%%%%%%%%%%%%%%%%%%%%%%%%%%
\item\label{l2} $\ell_2$: Take $U = \mB{R}$, $M = 1$, $b = 0$, and $B = 1$. We obtain
\[ 
\displaystyle \rho(y) 
= \sup_{u \in \mB{R}}\left\{ uy -u^2/2 \right\}\;. 
\] 
The function inside the $\sup$ is
maximized at $u = y$, hence $\rho(y) = \frac{1}{2}y^2$, see top left 
panel of Fig.~\ref{HuberVapnikFig}. 
%%%%%%%%%%%%%%%%%%%%%%%%%%%%%%%%%%%%%%%%%%%

%%%%%%%%%%%%%%%%%%%%%%%%%%%%%%%%%%%%%%%%%%%
\item\label{l1} $\ell_1$: Take $U = [-1, 1]$, $M = 0$, $b = 0$, and $B = 1$. We obtain
\[ 
\displaystyle \rho(y) = \sup_{u \in [-1, 1]}\left\{ uy\right\}\;. 
\] The function inside the $\sup$ is maximized by
taking $u = \R{sign}(y)$, hence $\rho(y) = |y|$, see top right panel of Fig.~\ref{HuberVapnikFig}.
%%%%%%%%%%%%%%%%%%%%%%%%%%%%%%%%%%%%%%%%%%%

%%%%%%%%%%%%%%%%%%%%%%%%%%%%%%%%%%%%%%%%%%%
\item\label{elasticnet} Elastic net: $\ell_2 + \lambda \ell_1$. 
Take 
\[
U = \mB{R} \times [-\lambda, \lambda],\; 
b = \begin{bmatrix} 0 \\ 0 \end{bmatrix},\;
M = \begin{bmatrix} 1 & 0 \\ 0 & 0 \end{bmatrix},\;
B = \begin{bmatrix} 1 \\ 1\end{bmatrix}\;.
\]
This construction reveals the general calculus of PLQ addition, see Remark~\ref{sumsELQP}. See bottom right panel of Fig.~\ref{HuberVapnikFig}.
%%%%%%%%%%%%%%%%%%%%%%%%%%%%%%%%%%%%%%%%%%%

%%%%%%%%%%%%%%%%%%%%%%%%%%%%%%%%%%%%%%%%%%%
\item\label{huber} Huber: Take $U = [-\kappa, \kappa]$, $M = 1$, $b = 0$, and $B = 1$.
We obtain 
\[ \displaystyle \rho(y) = \sup_{u \in [-\kappa, \kappa]}\left\{ uy - u^2/2 \right\}\;,\] 
with three explicit cases:
\begin{enumerate}
\item If $ y < -\kappa $, take $u = -\kappa$ to obtain
$-\kappa y  -\frac{1}{2}\kappa^2$.
\item If $-\kappa  \leq y \leq \kappa $, take $u = y$ to obtain
$\frac{1}{2}y^2$.
\item If $y > \kappa  $, take $u = \kappa $ to obtain
a contribution of $\kappa y -\frac{1}{2}\kappa^2$.
\end{enumerate}
This is the Huber penalty, shown in the middle left
panel of Fig.~\ref{HuberVapnikFig}.
%%%%%%%%%%%%%%%%%%%%%%%%%%%%%%%%%%%%%%%%%%%

%%%%%%%%%%%%%%%%%%%%%%%%%%%%%%%%%%%%%%%%%%%
\item\label{hinge} Hinge loss:
Taking $B = 1$, $b = -\epsilon$, $M = 0$ and $U = [0, 1]$ we have 
\[
\rho(y) = \sup_{u \in U}\left\{ (y - \epsilon) u\right\} = (y - \epsilon)_+. 
\]
To verify this, just note that if $y < \epsilon$, $u^* = 0$; otherwise $u^* = 1$. 
%%%%%%%%%%%%%%%%%%%%%%%%%%%%%%%%%%%%%%%%%%%

%%%%%%%%%%%%%%%%%%%%%%%%%%%%%%%%%%%%%%%%%%%
\item\label{vapnik} Vapnik loss is given by $(y-\epsilon)_+  + ( -y- \epsilon)_+$. 
We immediately obtain its PLQ representation by  
taking 
\[
B = \begin{bmatrix}1 \\ -1\end{bmatrix},\;
b =  - \begin{bmatrix} \epsilon \\ \epsilon \end{bmatrix},\; 
M= \begin{bmatrix} 0 & 0 \\ 0 & 0\end{bmatrix}, \;
U = [0, 1]\times [0, 1]\;
\]
 to yield 
\[
\rho(y) 
= 
\sup_{u \in U}
\left\{
\left\langle \begin{bmatrix}y - \epsilon \\ 
-y - \epsilon \end{bmatrix} , u \right\rangle
\right\} 
= 
(y - \epsilon)_+ + (-y-\epsilon)_+. 
\]
The Vapnik penalty is shown in the middle
right panel of Fig.~\ref{HuberVapnikFig}.
%%%%%%%%%%%%%%%%%%%%%%%%%%%%%%%%%%%%%%%%%%%

\item\label{shlf} Soft hinge loss function ~\citep{Chu01aunified}.
Combining ideas from examples~\ref{huber} and~\ref{hinge}, we can construct a `soft' hinge loss;
i.e. the function 
\[
\rho(y)= \begin{cases}
0 & \text{if}\quad y < \epsilon\\
\frac{1}{2}(y - \epsilon)^2 & \text{if} \quad \epsilon < y < \epsilon + \kappa \\
\kappa (y - \epsilon) - \frac{1}{2}(\kappa)^2 &\text{if}\quad  \epsilon + \kappa < y\;.
\end{cases}
\]

that has a smooth (quadratic) transition rather than a kink at $\epsilon:$
Taking $B = 1$, $b = -\epsilon$, $M = 1$ and $U = [0, \kappa]$ we have 
\[
\rho(y) = \sup_{u \in [0,\kappa]}\left\{ (y - \epsilon) u\right\} -\half u^2\;.
\]
To verify this function has the explicit representation given above, note that if $y < \epsilon$, $u^* = 0$; if $\epsilon < y < \kappa + \epsilon$, we have $u^* = (y - \epsilon)_+$, 
and if $\kappa + \epsilon < y$, we have $u^*= \kappa$.

\item \label{silf} Soft insensitive loss function~\citep{Chu01aunified}. Using example~\ref{shlf}, 
we can create a symmetric soft insensitive loss function (which one might term the Hubnik) by 
adding together to soft hinge loss functions:  
\[
\begin{aligned}
\rho(y) &= \sup_{u \in [0,\kappa]}\left\{ (y - \epsilon) u\right\} -\half u^2 + \sup_{u \in [0,\kappa]}\left\{ (-y - \epsilon) u\right\} -\half u^2\\
& = \sup_{u \in [0,\kappa]^2}
\left\{
\left\langle \begin{bmatrix}y - \epsilon \\ 
-y - \epsilon \end{bmatrix} , u \right\rangle
\right\} 
- \half u^T \begin{bmatrix}1 & 0 \\ 0 & 1 \end{bmatrix}u\;.
\end{aligned}
\]
See bottom bottom right panel of Fig.~\ref{HuberVapnikFig}.
\end{enumerate}
\end{remark}
Note that the affine generalization (Definition \ref{generalPLQ}) is
needed to form the elastic net, the Vapnik penalty, and the SILF function, as all of these
are sums of simpler QS penalties. These sum constructions
are examples of a general calculus which allows the modeler to build up a QS
density having a desired structure.  This calculus is described in the following remark. 
\begin{remark}
\label{sumsELQP}
Let $\rho_1(y)$ and $\rho_2(y)$ be two QS penalties specified by
$U_i, M_i, b_i, B_i$, for $i = 1, 2$. Then the sum 
$\rho(y) = \rho_1(y) + \rho_2(y)$
is also a QS penalty, with 
\[
U = U_1 \times U_2,\; 
M = \begin{bmatrix} M_1 & 0 \\ 0 & M_2 \end{bmatrix},\;
b = \begin{bmatrix} b_1 \\ b_2\end{bmatrix},\;
B = \begin{bmatrix} B_1 \\ B_2 \end{bmatrix}\;. 
\]
\end{remark}
Notwithstanding the catalogue of scalar QS functions in Remark 4 and the gluing 
procedure described in Remark \ref{sumsELQP}, the supremum in Definition 
\ref{generalPLQ} appears to be a significant roadblock to understanding and designing a QS
function having specific properties. However, with some practice
the design of QS penalties is not as daunting a task as it first appears.
A key tool in understanding the structure of QS functions are Euclidean norm 
projections onto convex sets.

\begin{theorem}[Projection Theorem for Convex Sets]\label{projection theorem}[\cite{Z71}]
Let $Q\in\mB{R}^{n\times n}$ be symmetric and positive definite and let $C\subset\mB{R}$ be non-empty, closed
and convex. Then $Q$ defines an inner product on $\mB{R}^n$ by $\ip{x}{t}_Q=x^TQy$ with associated
Euclidean norm $\| x\|_Q=\sqrt{\ip{x}{x}_Q}$. The projection of a point $y\in\mB{R}^n$ onto $C$ in norm
$\|\cdot\|_Q$ is the unique point $P_Q(y\mid C)$ solving the least distance problem
\begin{equation}\label{proj dist}
\inf_{x\in C}\| y-x\|_Q,
\end{equation}
and 
%$z=P_Q(y\mid C)$ 
$z=\proj{y}{C}{Q}$
if and only if $z\in C$ and
\begin{equation}\label{proj oc}
\ip{x-z}{y-z}_Q\le 0\quad\forall\; x\in C\ .
\end{equation}
\end{theorem}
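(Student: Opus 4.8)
The plan is to establish the three assertions of the theorem in sequence: existence of a minimizer for the least distance problem \eqref{proj dist}, its uniqueness, and the equivalence between minimality and the variational inequality \eqref{proj oc}. Throughout I would work in the Hilbert-space geometry that $Q$ induces on $\mB{R}^n$, using only the inner product $\ip{\cdot}{\cdot}_Q$ and its norm $\|\cdot\|_Q$.

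First I would settle existence. Fix any $x_0\in C$ and restrict attention to the sublevel set $C_0=\{x\in C : \|y-x\|_Q\le \|y-x_0\|_Q\}$. Since $Q$ is positive definite, $\|\cdot\|_Q$ is equivalent to the standard Euclidean norm, so $C_0$ is closed and bounded, hence compact; the continuous function $x\mapsto \|y-x\|_Q$ therefore attains its infimum over $C_0$, and any such point is a global minimizer over all of $C$. For uniqueness I would invoke strict convexity of the squared $Q$-norm: if $z_1,z_2\in C$ both attain the optimal value $d$, then convexity of $C$ places the midpoint in $C$, and the parallelogram identity in $(\mB{R}^n,\ip{\cdot}{\cdot}_Q)$ gives
\[
\left\|y-\tfrac{z_1+z_2}{2}\right\|_Q^2 = d^2-\tfrac14\|z_1-z_2\|_Q^2,
\]
which is strictly below $d^2$ unless $z_1=z_2$; minimality then forces $z_1=z_2$.

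For the characterization, in the sufficiency direction I would take $z\in C$ satisfying \eqref{proj oc} and, for arbitrary $x\in C$, expand
\[
\|y-x\|_Q^2 = \|y-z\|_Q^2 - 2\ip{x-z}{y-z}_Q + \|x-z\|_Q^2 ,
\]
then discard the nonnegative quadratic term and use \eqref{proj oc} to drop the cross term, concluding $\|y-x\|_Q\ge\|y-z\|_Q$, so $z=\proj{y}{C}{Q}$. For necessity I would exploit convexity once more: assuming $z$ minimizes, the segment $z+t(x-z)$ lies in $C$ for every $x\in C$ and $t\in[0,1]$, so the scalar function $\phi(t)=\|y-z-t(x-z)\|_Q^2$ is minimized at $t=0$; computing $\phi'(0)=-2\ip{x-z}{y-z}_Q\ge 0$ yields exactly \eqref{proj oc}.

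I expect the only genuine subtlety to lie in the existence step, where the possibly unbounded feasible set $C$ must be reduced to a compact sublevel set and the equivalence of $\|\cdot\|_Q$ with the Euclidean norm invoked to guarantee compactness. The remaining steps are direct consequences of the inner-product geometry determined by $Q$, and the one-sided derivative argument in the necessity direction is what pins down \eqref{proj oc} as precisely the first-order optimality condition for \eqref{proj dist}.
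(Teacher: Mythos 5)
Your proof is correct. Note that the paper itself offers no proof of this theorem: it is stated as a classical result and attributed to the cited reference [Z71] (Zarantonello), so there is no in-paper argument to compare against. What you have written is the standard Hilbert-space proof — compactness of a sublevel set for existence, the parallelogram identity for uniqueness, the expansion of $\|y-x\|_Q^2$ about $z$ for sufficiency of \eqref{proj oc}, and the one-sided derivative of $t\mapsto\|y-z-t(x-z)\|_Q^2$ at $t=0$ for necessity — and all four steps are carried out correctly. The one point worth making explicit is the observation you already flag: since $Q$ is positive definite, $\|\cdot\|_Q$ is equivalent to the Euclidean norm on $\mB{R}^n$, so closedness and boundedness of the sublevel set do give compactness; in infinite dimensions this step would instead require weak compactness and weak lower semicontinuity of the norm, but in the finite-dimensional setting of the theorem your argument is complete.
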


Note that the least distance problem \eqref{proj dist} is equivalent to the problem
\[
\inf_{x\in C}\half \| y-x\|^2_Q\ .
\]
In the following lemma we use projections as well as duality theory  
to provide alternative representations for QS
penalties.

\begin{theorem}\label{QS representations}
Let $M\in\mB{R}^{n\times n}$ be symmetric and positive semi-definite matrix, let $L\in\mB{R}^{n\times k}$
be any matrix satisfying $M=LL^T$ where $k= \mathrm{rank}(M)$, and let
$U\subset\mB{R}^n$ be a non-empty, closed and convex set that contains the origin. Then the QS function
$\rho:=\rho(U,M,0,I;\cdot)$ has the primal representations
\begin{equation}\label{QS primal}
\rho(y)=\inf_{s\in\mB{R}^k}\left[ \half\Vert s\Vert^2_2+\support{y-Ls}{U}\right]
=\inf_{s\in\mB{R}^k}\left[ \half\Vert s\Vert^2_2+\gauge{y-Ls}{U^\circ}\right]\ ,
\end{equation}
where, for any convex set $V$,
\[
\support{z}{V}:=\sup_{v\in V}\ip{z}{v}\quad\mbox{ and }\quad 
\gauge{z}{V}:=\inf\set{t}{t\ge 0,\ z\in tV}
\]
are the support and gauge functionals for $V$, respectively.\\
If it is further assumed that $M\in\Snpp$ the set of positive definite matrices, 
then $\rho$ has the representations
\begin{eqnarray}
\rho(y)&=&
\inf_{s\in\mB{R}^k}\left[ \half\Vert s\Vert^2_{M}+\gauge{M^{-1}y-s}{M^{-1}U^\circ}\right]
\label{QS primal M}
\\ &=&
\half\| \proj{M^{-1}y}{U}{M}\|_{M}^2+\gauge{M^{-1}y-P_{M}(M^{-1}y|\, U)}{M^{-1}U^\circ}
\label{QS primal M 2}
\\ &=&
\inf_{s\in\mB{R}^k}\left[ \half\Vert s\Vert^2_{M^{-1}}+\gauge{y-s}{U^\circ}\right]
\label{QS primal M inv}
\\ &=&
\half\| P_{M^{-1}}(y|\, MU)\|_{M^{-1}}^2+\gauge{y-P_{M^{-1}}(y|\, MU)}{U^\circ}
%\mbox{\hskip 1.9in $\empty$}
\label{QS primal M inv 2}
\\
%&=&
%\inf_{s\in\mB{R}^k}\left[ \half\Vert s\Vert^2_{M^{-1}}+\support{y-s}{U}\right]
%\\
%&=&
%\inf_{s\in\mB{R}^k}\left[ \half\Vert s\Vert^2_{M^{-1}}+\gauge{y-s}{U^\circ}\right]
%%\mbox{\hskip 1.9in $\empty$}
%\label{QS dual 2}
%\\
&=&\half y^TM^{-1}y-\inf_{u\in U}\half\Vert u-M^{-1}y\Vert_M^2
\label{QS U proj}
\\
&=&\half\|P_M(M^{-1}y|\, U)\|_M^2+\ip{M^{-1}y-P_M(M^{-1}y|\, U)}{P_M(M^{-1}y|\, U)}_{M}
\label{QS U proj 2}
\\
&=&\half y^TM^{-1}y-\inf_{v\in MU}\half\Vert v-y\Vert_{M^{-1}}^2
\label{QS MU proj}
\\
&=&\half\| P_{M^{-1}}(y|\, MU)\|_{M^{-1}}^2+\ip{y-P_{M^{-1}}(y|\, MU)}{P_{M^{-1}}(y|\, MU)}_{M^{-1}}\  .\label{QS MU proj 2}
\end{eqnarray}
%where $V=MU$ and $\Vert z\Vert_M=\sqrt{z^TMz}$ for any symmetric positive definite matrix $M$.
In particular, \eqref{QS MU proj} says $\rho(y)=\half y^TM^{-1}y$
whenever $y\in MU$.
Also note that, by \eqref{QS primal}, one can replace the gauge functionals in 
\eqref{QS primal M}-\eqref{QS primal M inv 2} by the support functional of the appropriate set
where $M^{-1}U^\circ=(MU)^\circ$. 
\end{theorem}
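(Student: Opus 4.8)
The anchor for the entire theorem is the primal representation \eqref{QS primal}; every subsequent identity can be obtained either from it or directly from Definition~\ref{generalPLQ} by elementary manipulations, so the plan is to establish \eqref{QS primal} first. I would begin by recognizing $\rho=\rho(U,M,0,I;\cdot)$ as a conjugate: since $\rho(y)=\sup_{u\in U}\{\ip{u}{y}-\half\ip{u}{Mu}\}$, we have $\rho=g^*$ with $g(u):=\half\ip{u}{Mu}+\indicator{u}{U}$. Writing $M=LL^T$ gives $\half\ip{u}{Mu}=\half\norm{L^Tu}^2=(h\circ L^T)(u)$ with $h=\half\norm{\cdot}_2^2$ self-conjugate on $\mB{R}^k$. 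I would then apply the Fenchel conjugate sum rule to $g=(h\circ L^T)+\indicator{\cdot}{U}$. The qualification condition holds trivially because $\dom{h\circ L^T}=\mB{R}^n$ meets $\R{ri}(U)$, so $g^*=(h\circ L^T)^*\,\square\,\support{\cdot}{U}$ with the infimal convolution attained. Using $(h\circ L^T)^*(w)=\inf\set{\half\norm{s}_2^2}{Ls=w}$ and merging the two infima yields exactly $\rho(y)=\inf_{s}[\half\norm{s}_2^2+\support{y-Ls}{U}]$, and the standard identity $\support{z}{U}=\gauge{z}{\polar{U}}$ for closed convex $U\ni 0$ supplies the second form in \eqref{QS primal}.

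Next I would dispatch the projection identities \eqref{QS U proj}--\eqref{QS MU proj 2}, which do not require \eqref{QS primal} at all and only use the positive definiteness of $M$. Completing the square in the definition, the elementary identity $\ip{u}{y}-\half\ip{u}{Mu}=\half y^TM^{-1}y-\half\norm{u-M^{-1}y}_M^2$ turns the supremum into $\rho(y)=\half y^TM^{-1}y-\inf_{u\in U}\half\norm{u-M^{-1}y}_M^2$, which is \eqref{QS U proj}; the substitution $v=Mu$ together with $\norm{u-M^{-1}y}_M^2=\norm{v-y}_{M^{-1}}^2$ produces the $MU$-version \eqref{QS MU proj}. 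Since $U$ (hence $MU$) is closed and convex and the norms $\norm{\cdot}_M,\norm{\cdot}_{M^{-1}}$ come from inner products, Theorem~\ref{projection theorem} guarantees that these infima are attained at the projections $\proj{M^{-1}y}{U}{M}$ and $\proj{y}{MU}{M^{-1}}$, respectively. Substituting the projector and expanding the square then gives the closed forms \eqref{QS U proj 2} and \eqref{QS MU proj 2} after a short algebraic simplification. Setting the projection equal to its argument recovers the concluding observation that $\rho(y)=\half y^TM^{-1}y$ whenever $y\in MU$.

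Finally I would derive the gauge representations \eqref{QS primal M}--\eqref{QS primal M inv 2} from \eqref{QS primal} by invertible changes of variable, noting that $M\in\Snpp$ forces $k=n$ and $L$ invertible. The substitution $w=Ls$ in \eqref{QS primal}, with $\norm{s}_2^2=\norm{w}_{M^{-1}}^2$, gives \eqref{QS primal M inv}; the further substitution $s=M\tilde s$, using $\norm{s}_{M^{-1}}^2=\norm{\tilde s}_M^2$ and the gauge scaling law $\gauge{Mz}{\polar{U}}=\gauge{z}{M^{-1}\polar{U}}$, gives \eqref{QS primal M}. The two closed-form variants \eqref{QS primal M 2} and \eqref{QS primal M inv 2} follow once the optimal $s$ in these infima is identified with the corresponding $M$-norm projection onto $U$, exactly as in the previous paragraph. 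The closing remark is then immediate from the polarity identity $\polar{(MU)}=M^{-1}\polar{U}$ (valid since $M$ is symmetric and invertible) together with $\gauge{\cdot}{\polar{V}}=\support{\cdot}{V}$.

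The one genuinely delicate step is the passage from the $\sup$ in Definition~\ref{generalPLQ} to the $\inf$ in \eqref{QS primal}: I expect the \emph{main obstacle} to be verifying the constraint qualification and the exactness (attainment) in the Fenchel sum rule, together with the bookkeeping needed to compute $(h\circ L^T)^*$ as a constrained infimum over $\{s:Ls=w\}$ and to fold it into a single infimum over $s$. All remaining steps are substitutions, completions of the square, and applications of Theorem~\ref{projection theorem}, and should be routine once \eqref{QS primal} is in hand.
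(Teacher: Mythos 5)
Your proposal is correct and follows essentially the same route as the paper: conjugate calculus (the Fenchel sum rule with the decomposition $\half\ip{u}{Mu}=\half\|L^Tu\|_2^2$) for \eqref{QS primal}, invertible changes of variable for \eqref{QS primal M} and \eqref{QS primal M inv}, and completing the square plus Theorem~\ref{projection theorem} for \eqref{QS U proj}--\eqref{QS MU proj 2}. The only place you are lighter than the paper is in identifying the minimizers of \eqref{QS primal M} and \eqref{QS primal M inv} with the $M$- and $M^{-1}$-norm projections, which the paper does via the optimality condition $Ms\in\partial\support{M^{-1}y-s}{MU}$ and the variational inequality \eqref{proj oc} rather than by ``expanding the square,'' but this is a routine subdifferential computation and not a gap in the approach.
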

The formulas \eqref{QS primal M}-\eqref{QS MU proj 2} show how one can build PLQ penalties 
having a wide range of desirable properties. 
%For example, suppose one has on hand an $n\times n$ symmetric positive definite matrix
%$Q$ and a non-empty closed polyhedral convex set $V$ containing the origin, and one wishes to
%design a PLQ penalty 
%%gives little insight into how one should choose the
%%inputs $(U,M,b,B)$ to achieve a desired goal. We now pause to give further 
%%insight into how to design these functions with a specific goal in mind.
%%Let $Q$ be an $n\times n$ symmetric positive definite matrix and let 
%%$V\subset\mB{R}^n$ be 
%%Suppose one wishes to design a convex PLQ function 
%$\rho$ such that  
%$\rho(y)=\half y^TQ^{-1}y$ for all $y\in V$ and 
%$\rho$ linearly off of the set $U$. We claim that the PLQ
%function
%\[
%\rho(y):=\sup_{u\in U}
%\left\{ \langle u,\, y\rangle-\half u^TQu\right\},
%\]  
%where $U=Q^{-1}V$,
%satisfies this design criteria. Indeed, the optimality conditions for the supremum in
%the definition of $\rho$ are 
%\[
%y\in Qu+N_U(u).
%\]
%Hence, if $Q^{-1}y\in U$, or equivalenly, if $y\in V$, then
%$\rho(y)=\langle Q^{-1}y,\, y\rangle-\half ((Q^{-1}y)^TQ(Q^{-1}y)=\half y^TQ^{-1}y$
%as desired. Note that once $Q$ and $V$ have been specified, the behavior off of $V$
%is completely determined.
% in order to preserve convexity, continuity, and 
%the linearity of $\rho$ off of the set $V$.
%
%Further structure can be obtained by specifying further structure in the choice of $(U,M,b,B)$.
We now give a short list of a few examples illustrating how to make use
of these representations.
\begin{remark}[General examples]
\label{generalExamples} 
In this remark we show how the representations in 
Lemma~\ref{QS representations} can be used to build 
QS penalties with specific structure. In each example we specify the components $U,M,b,$ and $B$
for the QS function
$\rho:=\rho(U,M,b,B;\cdot)$.
\begin{enumerate}
\item Norms. Any norm $\|\cdot\|$ 
can be represented as a QS function by taking $M = 0$, $B = I$, $b = 0$, $U = \polar{\mB{B}}$, 
where $\mB{B}$ is the unit ball of the desired norm. 
Then, by~\eqref{QS primal}, $\rho(y) = \|y\| = \gauge{y}{\mB{B}}$.

\item Gauges and support functions. 
Let $U$ be any closed convex set containing the origin, and Take $M = 0, B = I, b = 0$. 
Then, by \eqref{QS primal}, $\rho(y) = \gauge{y}{\polar{U}}=\support{y}{U}$.

\item Generalized Huber functions. Take any norm $\|\cdot\|$ having closed unit ball  $\mB{B}$. 
Let $M\in\Snpp$, $B = I$, $b = 0$, and $U = \polar{\mB{B}}$.
Then, by the representation~\eqref{QS primal M inv 2}, 
\begin{equation}\label{g-Huber}
\rho(y) = 
\half P_{M^{-1}}(y|\, M\polar{\mB{B}})^T{M^{-1}}P_{M^{-1}}(y|\, M\polar{\mB{B}})
+\| y-P_{M^{-1}}(y|\, M\polar{\mB{B}})\|\ .
\end{equation}
In particular, for $y\in M\polar{\mB{B}}$, $\rho(y)=\half y^TM^{-1}y$.

If we take $M=I$ and $\|\cdot\|=\kappa^{-1}\|\cdot\|_1$ for $\kappa >0$ 
(i.e. $U=\kappa \mB{B}_\infty$ and $U^\circ=\kappa^{-1}\mB{B}_1$), then $\rho$ is the 
multivariate Huber function described in item 4 of Remark \ref{scalarExamples}.
In this way, Theorem \ref{QS representations} shows how to generalize the essence of
the Huber norm to any choice of norm.
For example, if we take $U=\kappa\mB{B}_M=\set{\kappa u}{\|u\|_M\le 1}$,
then, by \eqref{QS U proj 2}, 
\[
\rho(y)=
\begin{cases}
\half\|y\|_{M^{-1}}^2&,\mbox{if }\|y\|_{M^{-1}}\le \kappa \\ 
\kappa \|y\|_{M^{-1}}-\frac{\kappa^2}{2}&,\mbox{if }\|y\|_{M^{-1}}> \kappa\ .
\end{cases}
\]

\item Generalized hinge-loss functions. Let $\|\cdot\|$ be a norm with closed unit ball $\mB{B}$,
let $K$ be a non-empty closed convex cone in $\Rn$, and let $v\in\Rn$. Set $M=0$, $b=-v$, $B=I$, and 
$U=-(\mB{B}^\circ\cap K^\circ)=\mB{B}^\circ\cap (-K)^\circ$. Then, by \cite[Section 2]{burke87tr},
\[
\rho(y)=\dist{y}{v-K}=\inf_{u\in K}\| y-b+u\|\ .
\]
If we consider the order structure ``$\le_K$'' induced on $\Rn$ by 
\[
y\le_Kv\quad\iff\quad v-y\in K\ ,
\]
then $\rho(y)=0$ if and only if $y\le_Kv$.
By taking $\|\cdot\|=\|\cdot\|_1$, $K=\Rn_+$ so $(-K)^\circ=K$, and $v=\epsilon\mathbf{1}$, 
where $\mathbf{1}$
is the vector of all ones, we recover the multivariate hinge loss function in Remark \ref{scalarExamples}.

\item Order intervals and Vapnik loss functions. 
Let $\|\cdot\|$ be a norm with closed unit ball $\mB{B}$,
let $K\subset\Rn$ be a non-empty symmetric convex cone in the sense that $K^\circ=-K$, 
and let $w<_Kv$, or equivalently, $v-w\in\mathrm{intr}(K)$. 
Set 
\[
U=(\mB{B}^\circ\cap K) \times (\mB{B}^\circ\cap K^\circ),\quad
M=\begin{bmatrix}0&0\\ 0&0\end{bmatrix},\quad 
b=-\begin{pmatrix}v\\ w\end{pmatrix},\quad\mbox{and}\quad
B=\begin{bmatrix}I\\ I\end{bmatrix}\ .
\]
Then
\[
\rho(y)=\dist{y}{v-K}+\dist{y}{w+K}.
\]
Observe that $\rho(y)=0$ if and only if $w\le_K y\le_K v$. 
The set $\set{y}{w\le_K y\le_K v}$
is an ``order interval'' \citep{Schaefer70}.
If we take $w=-v$, then
$\set{y}{-v\le_K y\le_K v}$ is a symmetric neighborhood of the origin.
By taking $\|\cdot\|=\|\cdot\|_1$, $K=\Rn_+$, and $v=\epsilon \mathbf{1}$=-w, 
we recover the multivariate Vapnik loss function in Remark \ref{scalarExamples}.
Further examples of symmetric cones are $\Snp$ and the Lorentz or $\ell_2$ cone
\citep{GH02}.
\end{enumerate}
\end{remark}
The examples given above show that one can also construct 
generalized versions of the elastic net as well
as the soft insensitive loss functions defined in Remark \ref{scalarExamples}. In addition, 
cone constraints can also be added by using the identity $\support{\cdot}{K^\circ}=\indicator{\cdot}{K}$.
These examples serve to illustrate the wide variety of penalty functions representable as QS functions.
Computationally, one is only limited by the ability to compute projections described in
Theorem \ref{QS representations}. Further computational properties for QS functions are
described in \citep[Section 6]{ABF12}.

In order to characterize QS functions as negative logs
of density functions, we need to ensure the integrability of said
density functions. The function $\rho(y)$ is said to be \emph{coercive} if
$\lim_{\|y\|\rightarrow \infty}\rho(y) = \infty$, and coercivity
turns out to be the key property to ensure integrability.
The proof of this fact and the characterization of
coercivity for QS functions are the subject of the
next two theorems (see Appendix for proofs).

\begin{theorem}[QS integrability]
\label{PLQIntegrability} Suppose $\rho(y)$ is
a coercive QS penalty. Then the function $\exp[-\rho(y)]$
is integrable on $\R{aff}[\R{dom}(\rho)]$ with respect to the
$\R{dim}(\R{aff}[\R{dom}(\rho)])$-dimensional Lebesgue measure.
\end{theorem}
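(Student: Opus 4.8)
The plan is to reduce the statement to a Gaussian-type tail estimate by first producing an affine-plus-norm minorant for $\rho$ and then integrating the resulting exponential bound. Since $\rho=\rho(U,M,b,B;\cdot)$ is a pointwise supremum of functions that are affine in $y$, it is closed, proper, and convex; properness follows because $0\in U$ forces $\rho\ge 0$, while Theorem~\ref{domainCharTheorem} guarantees $\dom{\rho}\neq\emptyset$. Writing $V:=\R{aff}[\dom{\rho}]$ and $d:=\R{dim}(V)$, I would fix an affine isomorphism of $V$ with $\mB{R}^d$ and let $\tilde\rho$ denote the corresponding restriction of $\rho$. This $\tilde\rho$ is a closed proper convex function on $\mB{R}^d$ whose effective domain is full-dimensional, and it inherits coercivity from $\rho$ because $\rho\equiv+\infty$ off $V$. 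The measure in the statement is exactly $d$-dimensional Lebesgue measure transported to $\mB{R}^d$, so it suffices to prove $\int_{\mB{R}^d}\exp[-\tilde\rho(\eta)]\,d\eta<\infty$.

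The heart of the argument is the minorant
\[
\tilde\rho(\eta)\ge a\norm{\eta}+c\qquad\text{for all }\eta\in\mB{R}^d,
\]
for some $a>0$ and $c\in\mB{R}$, which I would obtain through the conjugate $\tilde\rho^*$. The horizon (recession) function $\horizon{\tilde\rho}$ equals the support function $\support{\cdot}{\dom{\tilde\rho^*}}$, and coercivity of $\tilde\rho$ is equivalent to $\horizon{\tilde\rho}(\eta)>0$ for every $\eta\neq 0$ (bounded level sets). Since $\horizon{\tilde\rho}$ is positively homogeneous and lower semicontinuous, its positivity on the compact unit sphere yields a uniform bound $\horizon{\tilde\rho}(\eta)\ge a\norm{\eta}$ for some $a>0$. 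Comparing support functions, this forces $a\mB{B}\subset\R{cl}(\dom{\tilde\rho^*})$, whence $0\in\R{int}(\dom{\tilde\rho^*})$; on a slightly smaller closed ball $\tilde\rho^*$ is then continuous, hence bounded above by some $-c$. Applying the Fenchel inequality $\tilde\rho(\eta)\ge\ip{v}{\eta}-\tilde\rho^*(v)$ with the choice $v=a\eta/\norm{\eta}$ (for $\eta\neq0$) delivers $\tilde\rho(\eta)\ge a\norm{\eta}+c$.

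With the minorant in hand the conclusion is immediate:
\[
\int_{\mB{R}^d}\exp[-\tilde\rho(\eta)]\,d\eta
\le e^{-c}\int_{\mB{R}^d}\exp[-a\norm{\eta}]\,d\eta
= e^{-c}\,d\,\omega_d\int_0^\infty r^{d-1}e^{-ar}\,dr<\infty,
\]
where $\omega_d$ is the volume of the unit ball in $\mB{R}^d$ and the radial integral is a value of the Gamma function. I expect the main obstacle to lie not in this final routine estimate but in the rigorous handling of the lower-dimensional effective domain: one must verify that coercivity of $\rho$ on $\Rn$ genuinely passes to coercivity of the restriction $\tilde\rho$ on $V$, and that the convex-analytic facts invoked (the recession/support-function identity and the equivalence between coercivity and $0\in\R{int}(\dom{\tilde\rho^*})$) are applied in the relative geometry of $V$ rather than in $\Rn$, where $\dom{\rho^*}$ would in general have empty interior. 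Reducing to $\mB{R}^d$ at the outset is precisely what makes these interior and continuity arguments available.
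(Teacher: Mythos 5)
Your proof is correct, and the overall skeleton matches the paper's: both arguments reduce the theorem to producing an affine-plus-norm minorant $\rho(x)\ge K\norm{x-\bar x}+c$ outside a bounded set and then integrating the resulting exponential tail (the final radial/Gamma-function estimate is the same routine step in both). Where you genuinely diverge is in how the minorant is obtained. The paper's route is elementary: after normalizing so that $\rho(0)=0=\inf\rho$, it picks $\alpha>0$, uses coercivity to get a radius $R$ with $\lev{\rho}{\alpha}\subset R\mB{B}$, and then exploits the convexity inequality $\rho\left(\frac{R}{\norm{x}}x\right)\le\frac{R}{\norm{x}}\rho(x)$ for $\norm{x}>R$ to conclude $\rho(x)\ge\frac{\alpha}{R}\norm{x}$ off $R\mB{B}$ --- three lines, no conjugates. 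Your route instead passes through duality: coercivity gives positivity of the recession function, hence $0\in\R{int}(\dom{\rho^*})$ after restricting to the affine hull, and Fenchel--Young with $v$ on a small sphere in $\dom{\rho^*}$ delivers the same kind of minorant. Your version costs more machinery (recession function as support function of $\dom{\rho^*}$, continuity of convex functions on the interior of their domain) but buys two things: it makes the restriction to $\R{aff}[\dom{\rho}]$ explicit and careful --- which the paper's proof glosses over, silently integrating as if the domain were full-dimensional --- and it exposes the quantitative link between the decay rate $a$ and the size of $\dom{\rho^*}$. One small slip: after shrinking to a ball $a'\mB{B}\subset\R{int}(\dom{\tilde\rho^*})$ on which $\tilde\rho^*$ is bounded, the Fenchel step should use $v=a'\eta/\norm{\eta}$ rather than $a\eta/\norm{\eta}$; this only changes the constant in the minorant and does not affect the conclusion.
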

%
%%%%%%%%%%%%%%%%%%%%%%%%%%%%%%%%%%%%
%

\begin{theorem}
\label{coerciveRho} A QS function  $\rho$ is {\it coercive} if
and only if $\polar{[B^\R{T}\mathrm{cone}(U)]} = \{0\}$.
%\vspace{-.55cm}
%\begin{flushright}
%$\blacksquare$
%\end{flushright}
\end{theorem}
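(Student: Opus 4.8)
The plan is to reduce the statement to a computation of the horizon (recession) function of $\rho$, and then to recognize the condition $\polar{[B^\R{T}\R{cone}(U)]}=\{0\}$ as the assertion that this recession function is strictly positive in every nonzero direction. First I would rewrite $\rho$ as a conjugate precomposed with the linear map $B$. Setting
\[
\theta(u):=\half\ip{u}{Mu}-\ip{b}{u}+\indicator{u}{U}\ ,
\]
Definition~\ref{generalPLQ} reads $\rho(y)=\sup_{u}\{\ip{u}{By}-\theta(u)\}=\theta^*(By)$, so $\rho=\theta^*\circ B$, where $\theta$ is a proper convex function with $\dom\theta=U$ (the quadratic and linear terms being finite everywhere). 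Because $\rho$ is a composition of a convex function with a linear map, its recession function obeys the identity $\rho^\infty(d)=(\theta^*)^\infty(Bd)$: for any base point $\bar y\in\dom\rho$ one has $\rho(\bar y+td)=\theta^*(B\bar y+tBd)$, and dividing by $t$ and letting $t\to\infty$ gives the claim directly, with no interchange of limit and supremum required.

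Next I would invoke the classical duality between recession functions and effective domains (see, e.g., \citep[Theorem 13.3]{RTR}): for a closed proper convex function, the recession function of its conjugate is the support function of its domain. Applying this to $\theta$ yields $(\theta^*)^\infty=\support{\cdot}{U}$. Note in particular that $M$ plays \emph{no} role in this computation, since $\dom\theta=U$ regardless of $M$; this matches the fact that the quadratic term cannot by itself produce coercivity. Combining the two observations,
\[
\rho^\infty(d)=\support{Bd}{U}=\sup_{u\in U}\ip{Bd}{u}=\sup_{u\in U}\ip{d}{B^\R{T}u}=\support{d}{B^\R{T}U}\ .
\]

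To finish, I would use the standard fact that a convex function is coercive, i.e. $\lim_{\|y\|\to\infty}\rho(y)=\infty$ (equivalently, all sublevel sets are bounded), if and only if $\rho^\infty(d)>0$ for every $d\neq 0$ \citep{RTR,RTRW}. Since $B^\R{T}\R{cone}(U)=\R{cone}(B^\R{T}U)$ is a convex cone, the polar characterization for cones gives
\[
\polar{[B^\R{T}\R{cone}(U)]}=\{d:\ \ip{d}{B^\R{T}u}\le 0\ \forall u\in U\}=\{d:\ \support{d}{B^\R{T}U}\le 0\}=\{d:\ \rho^\infty(d)\le 0\}\ ,
\]
which is exactly the recession cone of $\rho$. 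Hence $\polar{[B^\R{T}\R{cone}(U)]}=\{0\}$ iff $0$ is the only direction with $\rho^\infty(d)\le 0$ iff $\rho^\infty(d)>0$ for all $d\neq 0$ iff $\rho$ is coercive, which is the desired equivalence.

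The part requiring the most care is the recession-function machinery of the middle step. Two points deserve attention: (i) \emph{properness and closedness} --- $\rho$ is lower semicontinuous and convex as a pointwise supremum of affine functions, and is proper by the domain description in Theorem~\ref{domainCharTheorem}; when $U$ is not closed one must replace $\theta$ by its closure $\R{cl}\,\theta$ before applying the duality, then observe that $\support{\cdot}{U}$ is unchanged because a support functional depends only on the closure of its set and $\R{cl}(\dom \R{cl}\,\theta)=\R{cl}\,U$; and (ii) confirming that the quadratic term never affects coercivity, which the computation makes transparent. The polyhedral and closed cases are immediate, so the only genuine subtlety is this closure argument in the general convex case.
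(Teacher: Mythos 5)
Your proof is correct and takes a genuinely different route from the paper's. The paper argues both directions by hand: for one implication it picks a nonzero $\hat y$ with $B\hat y\in\polar{[\R{cone}(U)]}$ and bounds $\rho(t\hat y)$ above uniformly in $t$ by $\sup_{u\in U}[\ip{b}{u}-\half u^\R{T}Mu]$; for the converse it takes a sequence $\{y_k\}$ with $\|y_k\|\to\infty$ and $\rho(y_k)\le P$, normalizes so that $y_k/\|y_k\|\to\bar y\ne 0$, divides the defining inequality $\ip{b+By_k}{u}-\half u^\R{T}Mu\le P$ by $\|y_k\|$, and passes to the limit to conclude $\ip{B\bar y}{u}\le 0$ for all $u\in U$. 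Your argument packages exactly this computation into the recession-function calculus: writing $\rho=\theta^*\circ B$ and invoking the duality between the recession function of a conjugate and the support function of the domain gives $\rho^\infty(d)=\support{Bd}{U}$ in one step, after which the equivalence of coercivity with $\rho^\infty(d)>0$ for $d\ne 0$ finishes the proof. The paper's converse direction is, in effect, a bare-hands verification of the special case of that recession-function identity, so the two proofs are mathematically the same phenomenon seen at different levels of abstraction. What your route buys is the explicit formula for $\rho^\infty$ (slightly more information than the theorem asks for) and a transparent explanation of why $M$ is irrelevant to coercivity; what the paper's route buys is self-containedness, needing only the definition of $\rho$ and elementary limits. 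One shared caveat: both arguments implicitly assume $\rho$ is proper (yours explicitly flags this when applying the recession machinery; the paper's upper bound $\rho(t\hat y)\le\rho(U,M,0,I;b)$ is vacuous if that quantity is $+\infty$), so neither handles the degenerate case $\dom{\rho}=\emptyset$, which is harmless in context.
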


Theorem \ref{coerciveRho} can be used to show the coercivity
of familiar penalties. In particular, note that if $B=I$, then the QS
function is coercive if and only if $U$ contains the origin in its interior.
\begin{corollary} 
\label{coerciveCorollary}
The penalties $\ell_2$, $\ell_1$, elastic net, Vapnik, and
Huber are all coercive.
\end{corollary}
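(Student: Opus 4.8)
The plan is to verify, for each of the five listed penalties, the coercivity criterion of Theorem~\ref{coerciveRho}, namely that $\polar{[B^\R{T}\mathrm{cone}(U)]}=\{0\}$, using the explicit data $(U,M,b,B)$ already recorded for each penalty in Remark~\ref{scalarExamples}. Since all five penalties are scalar (so $n=1$), the condition reduces to showing that $B^\R{T}\mathrm{cone}(U)$ is all of $\mB{R}$: by the cone form of the polar, $\polar{\mB{R}}=\{r\mid rd\le 0\ \forall\, d\in\mB{R}\}=\{0\}$, since any $r\neq 0$ is contradicted by taking $d$ of the same sign. Thus the entire corollary will follow once the cone $B^\R{T}\mathrm{cone}(U)$ is identified in each case.

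For the three penalties with $B=1$ — namely $\ell_2$ (where $U=\mB{R}$), $\ell_1$ (where $U=[-1,1]$), and Huber (where $U=[-\kappa,\kappa]$, $\kappa>0$) — I would simply invoke the observation recorded immediately after Theorem~\ref{coerciveRho}: when $B=I$, coercivity holds if and only if $U$ contains the origin in its interior. Each of $\mB{R}$, $[-1,1]$, and $[-\kappa,\kappa]$ visibly has $0$ in its interior, so coercivity is immediate. (Equivalently, $\mathrm{cone}(U)=\mB{R}$ in each case, giving $B^\R{T}\mathrm{cone}(U)=\mB{R}$ directly.)

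The two remaining penalties have $B\neq I$ and require the full criterion. For the elastic net, $U=\mB{R}\times[-\lambda,\lambda]$ and $B=\begin{bmatrix}1\\1\end{bmatrix}$; here one computes $\mathrm{cone}(U)=\mB{R}^2$, since the factor $\mB{R}$ already contributes the full first axis while scaling the points $(0,\pm\lambda)$ contributes the full second axis, and a convex cone containing both axes is the whole plane. Consequently $B^\R{T}\mathrm{cone}(U)=\{s_1+s_2:(s_1,s_2)\in\mB{R}^2\}=\mB{R}$, and the polar is $\{0\}$.

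The main subtlety is the Vapnik penalty, where $U=[0,1]\times[0,1]$ places the origin at a \emph{vertex} of $U$ rather than in its interior, so the $B=I$ shortcut is unavailable and the sign structure of $B=\begin{bmatrix}1\\-1\end{bmatrix}$ must be used explicitly. Here $\mathrm{cone}(U)=\mB{R}_+^2$, the nonnegative orthant, so $B^\R{T}\mathrm{cone}(U)=\{s_1-s_2:s_1,s_2\ge 0\}$. The key step is to observe that letting $s_1\to\infty$ with $s_2=0$, and separately $s_2\to\infty$ with $s_1=0$, produces arbitrarily large values of each sign, so this set is all of $\mB{R}$ and its polar is $\{0\}$. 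This is the one place where an interiority check does not suffice and the cone computation is genuinely needed, and it is the step I would treat most carefully.
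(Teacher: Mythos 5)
Your proposal is correct and follows essentially the same route as the paper: both verify the hypothesis of Theorem~\ref{coerciveRho} case by case by computing $\R{cone}(U)$ and $B^\R{T}\R{cone}(U)$ from the data in Remark~\ref{scalarExamples} and checking the polar is $\{0\}$. Your treatment is somewhat more explicit than the paper's (which states the cone identifications without elaboration), particularly in spelling out the Vapnik case where $\R{cone}(U)=\mB{R}^2_+$ and the sign structure of $B$ is what makes $B^\R{T}\R{cone}(U)=\mB{R}$, but the underlying argument is the same.
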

\begin{proof}
We show that all of these penalties satisfy the hypothesis of Theorem
\ref{coerciveRho}.
\begin{enumerate}
%%%%%%%%%%%%%%%%%%%%%%%%%%%%%%%%%%%%%%%%%%%%%%%
\item[]$\ell_2$: $U = \mB{R}$ and $B = 1$, 
so $\polar{\left[B^\R{T}\R{cone}(U)\right]} = \polar{\mB{R}} = \{0\}$.
%%%%%%%%%%%%%%%%%%%%%%%%%%%%%%%%%%%%%%%%%%%%%%%
\item[]$\ell_1$: $U = [-1, 1]$, 
so $\R{cone}(U) = \mB{R}$, and $B = 1$. 
%and the proof reduces to that of case 1.
%%%%%%%%%%%%%%%%%%%%%%%%%%%%%%%%%%%%%%%%%%%%%%%
\item[]Elastic Net: In this case, $\R{cone}(U) = \mB{R}^2$ and 
$B = \left[ \begin{smallmatrix} 1\\1 \end{smallmatrix} \right]$.
%so the proof reduces to that of case 1.
%%%%%%%%%%%%%%%%%%%%%%%%%%%%%%%%%%%%%%%%%%%%%%%
\item[]Huber: $U = [-\kappa,\kappa]$, so $\R{cone}(U) = \mB{R}$, 
and $B = 1$.
%so proof reduces to that of case 1.
%%%%%%%%%%%%%%%%%%%%%%%%%%%%%%%%%%%%%%%%%%%%%%%
\item[]Vapnik: $U = [0,1] \times [0,1]$, so $\R{cone}(U) = \mB{R}^2_+$.
$B = \left[ \begin{smallmatrix} 1\\-1 \end{smallmatrix} \right]$, so
$B^\R{T}\R{cone}(U) = \mB{R}$. 
%and again we reduce to case 1.
\end{enumerate}
\end{proof}
%
%\begin{remark}
One can also show the coercivity of the above examples 
using their primal representations. 
However, our main objective is to pave the way for a modeling framework 
where multi-dimensional penalties can be constructed from simple 
building blocks and then solved by a uniform approach
using the dual representations alone. 

We now define a family of distributions on $\mB{R}^n$ by
interpreting piecewise linear quadratic functions $\rho$ as negative
logs of corresponding densities.
Note that the support of the distributions is always contained
in $\R{dom}\; \rho$, which is characterized in Theorem \ref{domainCharTheorem}.
\begin{definition}[QS densities]
\label{PLQDensityDef}
 Let $\rho(U, M, B, b; y)$ 
be any coercive extended QS penalty on 
$\mB{R}^n$. Define $\B{p}(y)$
to be the following density on $\mB{R}^n$:
\begin{equation}
\label{PLQdensity} \B{p}(y) =
\begin{cases}
c^{-1}\exp\left[- \rho(y)\right] & y \in \R{dom}\; \rho\\
0 & \R{else},
\end{cases}
\end{equation}
where
\[
c = \left(\int_{y \in \R{dom}\; \rho} \exp\left[-\rho(y)\right]dy\right),
\]
and the integral is with respect to the $\R{dim}(\R{dom}( \rho))$-dimensional Lebesgue measure.
\end{definition}

QS densities are true densities on the affine hull of the domain of $\rho$. 
The proof of Theorem \ref{PLQIntegrability} can be easily adapted to
show that they have moments of all orders.

%%%%%%%%%%%%%%%%%%%%%%%%%%%%%%%%%%%%%%%%
\section{Constructing QS densities}
\label{PLQPTwo}
%%%%%%%%%%%%%%%%%%%%%%%%%%%%%%%%%%%%%%%%

In this section, we describe how to construct multivariate QS densities 
with prescribed means and variances. We show how to compute normalization constants
to obtain scalar densities, and then extend to multivariate densities using linear transformations. 
Finally, we show how to obtain the data structures $U, M, B, b$ corresponding to multivariate
densities, since these are used by the optimization approach in Section~\ref{Optimization}.

We make use of the following definitions. 
Given a sequence of column vectors $\{ r_k \} = \{r_1, \dots, r_N\}$ 
and matrices $ \{ \Sigma_k\} = \{\Sigma_1, \dots, \Sigma_N\}$, we use the notation
\[
\R{vec} ( \{ r_k \} ) =
\begin{bmatrix}
r_1 \\ r_2  \\ \vdots \\ r_N
\end{bmatrix}
\; , \; \R{diag} ( \{ \Sigma_k \} ) =
\begin{bmatrix}
\Sigma_1    & 0      & \cdots & 0 \\
0      & \Sigma_2    & \ddots & \vdots \\
\vdots & \ddots & \ddots & 0 \\
0      & \cdots & 0      & \Sigma_N
\end{bmatrix} \; .
\]
%%%%%%%%%%%%%%%%%%%%%%%%%%%%%%%%%%%%%%%%%
%\subsection{Generating densities with prescribed means and variances}
%%%%%%%%%%%%%%%%%%%%%%%%%%%%%%%%%%%%%%%%%

In definition \ref{PLQDensityDef}, QS densities are defined
over $\mB{R}^n$. The moments of
these densities depend in a nontrivial
way on the choice of parameters $b, B, U, M$.
In practice, we would like to be able to construct these densities to
have prescribed means and variances. We show how this can be done
using  scalar QS random variables as the building blocks.
Suppose
${y} = \R{vec}(\{{y_k}\})$ is a vector of independent
(but not necessarily identical) QS random variables with mean $0$ and
variance $1$. Denote by 
$b_k, B_k, U_k, M_k$ the specification for the densities of ${y_k}$. To
obtain the density of ${y}$, we need only take
\[
\begin{aligned}
U &= U_1 \times U_2 \times \dots \times U_N\\
M &= \R{diag}(\{M_k\})\\
B &= \R{diag}(\{B_k\})\\
b &= \R{vec}(\{b_k\})\;.
\end{aligned}
\]
For example, the standard Gaussian distribution is specified by 
$U = \mB{R}^n$, $M = I$, $b = 0$, $B = I$, 
while the standard $\ell_1$-Laplace (see \citep{Aravkin2011tac}) 
is specified by $U = [-1,1]^n$, $M = 0$, $b=0$, $B = \sqrt{2}I$.\\
%%%%%%%%%%%%%%%%%%%%%%%%%%%%%%%%%%%%%%%%%
The random vector ${\tilde y} = Q^{1/2}({y}+\mu)$ 
has mean $\mu$ and variance $Q$.
If $c$ is the normalizing constant for the density
of ${y}$, then $c\det(Q)^{1/2}$ is
the normalizing constant for the density of ${\tilde y}$.\\
\begin{remark}
Note that only independence of the building blocks is required
in the above result. This allows the flexibility to impose different
QS densities on different errors in the model. Such flexibility
may be useful for example when
combining measurement data from different instruments, where some
instruments may occasionally give bad data (with outliers), 
while others have errors that are modeled well by Gaussian distributions.
\end{remark}
We now show how to construct
scalar building blocks with mean $0$ and variance $1$, i.e.
how to compute the key normalizing constants for any QS
penalty.
To this aim, suppose $\rho(y)$ is a scalar QS penalty that is symmetric
about $0$. We would like to construct a density
$\B{p}(y) = \exp\left[-\rho(c_2 y)\right]/c_1$
to be a true density with unit variance, that is,
%The following equations must then be satisfied:
\begin{equation}
\label{PLQconst}
%\begin{aligned}
%&
\frac{1}{c_1} \int \exp\left[-\rho(c_2 y)\right]dy = 1\quad\mbox{and}\quad
%&
\frac{1}{c_1}\int y^2\exp\left[-\rho(c_2 y)\right]dy = 1 ,
%\end{aligned}
\end{equation}
where the integrals are over $\mB{R}$. Using $u$-substitution,
these equations become
\[
%\begin{aligned}
c_1c_2 %&
= \int \exp\left[-\rho(y)\right]dy\quad\mbox{and}\quad
c_1c_2^3 %&
= \int y^2\exp\left[-\rho(y)\right]dy .
%\end{aligned}
\]

Solving this system yields 
\[
\begin{aligned}
c_2 
&= 
\sqrt{
	\left. \int y^2\exp\left[-\rho(y)\right]dy \right/ \int \exp\left[-\rho(y)\right]dy
}\\
c_1 &= \frac{1}{c_2} \int \exp\left[-\rho(y)\right]dy\;.
\end{aligned}
\]
These expressions can be used to obtain the normalizing constants
for any particular $\rho$ using simple integrals.

%%%%%%%%%%%%%%%%%%%%%%%%%%%%%%%%%%%%%%%%%
\subsection{Huber Density}
%%%%%%%%%%%%%%%%%%%%%%%%%%%%%%%%%%%%%%%%%

The scalar density corresponding to the
Huber penalty is constructed as follows. Set
\index{density!Huber scalar}
\begin{equation}
\label{HuberDensityScalar}
\B{p_H}(y) = \frac{1}{c_1}\exp[-\rho_H(c_2 y)] \;,
\end{equation}
where $c_1$ and $c_2$ are chosen as in (\ref{PLQconst}).
Specifically, we compute
\[
\begin{aligned}
\int \exp\left[-\rho_H(y)\right] dy
&=
2\exp\left[-\kappa^2/2\right]\frac{1}{\kappa} 
+ 
\sqrt{2\pi}[2\Phi(\kappa) - 1]\\
\int y^2\exp\left[-\rho_H(y)\right] dy
&=
4\exp\left[-\kappa^2/2\right]\frac{1+\kappa^2}{\kappa^3} 
+ 
\sqrt{2\pi}[2\Phi(\kappa) - 1]\;,
\end{aligned}
\]
where $\Phi$ is the standard normal cumulative density function.
The constants $c_1$ and $c_2$ can now be readily computed.\\
To obtain the multivariate Huber density with
variance $Q$ and mean $\mu$, let $U = [-\kappa, \kappa]^n$, $M = I$,
$B = I$ any full rank matrix, and $b = 0$.
This gives the desired density:
\index{density!Huber multivariate}
\begin{equation}
\label{HuberDensity}
\B{p_H}(y)
=
\frac{1}{c_1^n\det(Q^{1/2})}
\exp
\left[
-
\sup_{u \in U}\left\{
\left\langle
c_2Q^{-1/2}\left(y -\mu\right),
u \right\rangle
-
\frac{1}{2}u^\R{T}u
\right\}
\right].
\end{equation}

%%%%%%%%%%%%%%%%%%%%%%%%%%%%%%%%%%%%%%%%%
\subsection{Vapnik Density}
%%%%%%%%%%%%%%%%%%%%%%%%%%%%%%%%%%%%%%%%%

The scalar density associated with the Vapnik penalty
is constructed as follows. Set
\begin{equation}
\label{VapnikDensityScalar}
\B{p_V}(y) = \frac{1}{c_1}\exp\left[-\rho_V(c_2 y)\right] \;,
\end{equation}
where the normalizing constants $c_1$ and $c_2$ can be obtained
from
\index{density!Vapnik scalar}
\[
\begin{aligned}
\int \exp\left[-\rho_V(y)\right]dy &= 2(\epsilon + 1)\\
\int y^2\exp\left[-\rho_V(y)\right]dy &= \frac{2}{3}\epsilon^3 +
2(2 - 2\epsilon + \epsilon^2) ,
\end{aligned}
\]
using the results in Section \ref{PLQPTwo}. 
Taking $U = [0,1]^{2n}$, the multivariate Vapnik distribution 
with mean $\mu$ and variance $Q$ is
\index{density!Vapnik multivariate}
\begin{equation}
\label{VapnikDensity}
\B{p_V}(y)
=
\frac{1}{c_1^n\det(Q^{1/2})}
\exp
\left[
-
\sup_{u \in U}\left\{
\left\langle
c_2BQ^{-1/2}\left(y -\mu\right) - \epsilon\B{1}_{2n},
u \right\rangle
\right\}
\right]\;,
\end{equation}
where $B$ is block diagonal with each block of the form
 $B = \left[ \begin{smallmatrix} 1\\-1 \end{smallmatrix} \right]$, 
 and $\B{1}_{2n}$ is a column vector of $1$'s of length $2n$.

%%%%%%%%%%%%%%%%%%%%%%%%%%%%%%%%%%%%%%%%%%%%%%%%%%
\section{Optimization with PLQ penalties}
\label{Optimization}
%%%%%%%%%%%%%%%%%%%%%%%%%%%%%%%%%%%%%%%%%%%%%%%%%%

In the previous sections, QS penalties were  
characterized using their dual representation and 
interpreted as negative log likelihoods of true densities.
As we have seen, the scope of such densities is extremely broad. 
Moreover, these densities can
easily be constructed to possess specified moment properties.
In this section, we expand on their utility by
showing that the resulting estimation problems \eqref{probTwo} can be
solved with high accuracy
using standard techniques from numerical optimization for a large subclass of these penalties. 
We focus on PLQ penalties for the sake of simplicity in our presentation of an
interior point approach to solving these estimation problems.
However, the interior point approach applies in much more general settings, 
e.g. see \cite{NN94}. Nonetheless, the PLQ case is sufficient to cover all of the
examples given in Remark \ref{scalarExamples} while giving the flavor of how
to proceed in the more general cases.

% 
%The real power
%of the representation is in the formulation of
%optimization problems using PLQ penalties.
%In this context, 
We exploit the dual representation for the class of PLQ penalties \citep{RTRW} to 
%In particular, the class of PLQ penalties
%yields a simple duality theory detailed
%in \cite{RTRW}, which can be exploited to
explicitly construct the Karush-Kuhn-Tucker (KKT) conditions
for a wide variety of model problems of the form
(\ref{probTwo}). 
%The form of these KKT systems show that \eqref{probTwo} is
%Working with these KKT systems
%makes the solution of such optimization problems
%no more difficult than quadratic programming, 
Working with these systems opens the door
to using a wide variety of numerical methods for convex
quadratic programming to solve~(\ref{probTwo}). 

Let $\rho(U_v, M_v, b_v, B_v; y)$ and $\rho(U_w, M_w, b_w, B_w; y)$ 
be two PLQ penalties and define
\begin{equation}\label{PLQ-V}
V(v;R):=\rho(U_v, M_v, b_v, B_v; R^{-1/2}v)
\end{equation}
and
\begin{equation}\label{PLQ-W}
W(w;Q):=\rho(U_w, M_w, b_w, B_w; Q^{-1/2}w).
\end{equation}
Then \eqref{probTwo} becomes
\begin{equation}\label{newopt}
\min_{y\in\mB{R}^n}\rho(U,M,b,B;y),
\end{equation}
where 
\[
U:=U_v\times U_w,\ M:=\begin{bmatrix}M_v&0\\ 0&M_w\end{bmatrix},\ 
b:=\begin{pmatrix}b_v-B_vR^{-1/2}z\\ b_w-B_wQ^{-1/2}\mu\end{pmatrix},
\]
and
\[ 
B:=\begin{bmatrix}B_vR^{-1/2}H\\ B_wQ^{-1/2}G\end{bmatrix}.
\]
Moreover, the hypotheses in \eqref{LinearProcess},
\eqref{LinearModel}, \eqref{probTwo}, and \eqref{PLQpenalty}
 imply that the matrix $B$ in \eqref{newopt}
is injective. Indeed, $By=0$ if and only if $B_wQ^{-1/2}Gy=0$, but, since $G$ is nonsingular
and $B_w$ is injective, this implies that $y=0$. That is, $\R{nul}(B)=\{0\}$.
Consequently, the objective in \eqref{newopt} takes the form of a PLQ penalty function 
\eqref{PLQpenalty}. In particular, if \eqref{PLQ-V} and \eqref{PLQ-W} arise from PLQ densities
(definition \ref{PLQDensityDef}), then the solution to problem \eqref{newopt} 
is the MAP estimator in the statistical model \eqref{LinearProcess}-\eqref{LinearModel}.

To simplify the notational burden, in the remainder of 
this section we work with \eqref{newopt} directly and
assume that the defining objects in \eqref{newopt} have the dimensions
specified in \eqref{PLQpenalty}; 
\begin{equation}\label{PLQdimSpecs}
U\in\mB{R}^m,\ M\in\mB{R}^{m\times m},\ 
b\in\mB{R}^m,\mbox{  and  }B\in \mB{R}^{m\times n}.
\end{equation}

%Consider now a general optimization problem $\Sc{P}$
%of the form
%\begin{equation}
%\label{genELQP}
%\min_{y}\quad \rho(U, M, b, B; y)
%:=
%\sup_{u \in U}
%\left\{
%\left\langle u, b + By\right\rangle - \frac{1}{2}u^\R{T}Mu
%\right\}\;.
%\end{equation}
%It is possible to show that all problems of the form
%(\ref{probTwo}) with both $V$ and $W$ PLQ penalties
%can be written in this way, since sums of PLQ penalties
%are again PLQ penalties by Remark \ref{sumsELQP}.
The Lagrangian \citep{RTRW}[Example 11.47] for
problem (\ref{newopt}) is given by
\[
L(y, u) = b^\R{T}u -\frac{1}{2}u^\R{T}Mu + u^\R{T}By\;.
\]
By assumption $U$ is polyhedral, and so can be specified to take the form
%we can assume that is has the form
%it can be expressed
%by a linear system of inequalities:
\begin{equation}
\label{polyhedralU}
U = \{u: A^\R{T}u \leq a\}\;,
\end{equation}
where $A\in\mB{R}^{m\times \ell}$.
Using this reprsentation for $U$, the optimality
conditions for (\ref{newopt})~\citep{RTR,RTRW} are
\begin{equation}
\label{fullKKT}
\begin{aligned}
0 &= B^\R{T}u \\
0 &= b+By - Mu - Aq \\
0 &= A^\R{T}u + s - a\\
0 &= q_is_i \;, \  i=1,\dots,\ell\;, \; q, s \geq 0\;,
\end{aligned}
\end{equation}
where the non-negative slack variable $s$ is defined by the third equation
in \eqref{fullKKT}.
The non-negativity of $s$ implies that $u\in U$. 
%using (\ref{polyhedralU}):
%\[
%A^\R{T}u - a = s\;.
%\]
The equations $0 = q_is_i \;, \  i=1,\dots,\ell$ in (\ref{fullKKT}) are known as the
complementarity conditions. 
By convexity, solving the problem (\ref{newopt}) is
equivalent to satisfying (\ref{fullKKT}). There is a vast
optimization literature on working directly with the KKT system. 
In particular, interior point (IP) methods~\citep{KMNY91,NN94,Wright:1997} can be employed.
%% JIM: It would help to say a few sentences about 
%% why interior point methods are special; why they only take
%% a set number of iterations in practice, etc. 
In the Kalman filtering/smoothing application, IP methods
have been used to solve the KKT system (\ref{fullKKT}) in a
numerically stable and efficient manner, see e.g.
\citep{AravkinIFAC}. 
%%%%%%%%%%%%%%%%%%%%%%%%%%%%%%%%%%%%%%%%%
Remarkably, the IP approach used in \citep{AravkinIFAC}
generalizes to the entire PLQ class. For Kalman filtering and smoothing, the computational efficiency is also
preserved (see Section~\ref{InteriorPointKS}. 
Here, we show the general development for the entire PLQ class 
using standard techniques from the IP literature (see e.g.~\citep{KMNY91}).
%, e.g.\citep{KMNY91} is an excellent reference.

Let $U, M, b, B,$ and $A$ be as defined in \eqref{PLQpenalty} and \eqref{polyhedralU},
and let $\tau\in(0,+\infty]$.
We define the {\it $\tau$ slice of the strict feasibility region for \eqref{fullKKT}} to be the set
\[
\mathcal{F}_+(\tau)=
\set{(s, q, u, y)}{\begin{array}{c}
0<s,\ 0< q,\ s^\R{T}q\le\tau,\mbox{ and }\\
(s, q, u, y)\mbox{ satisfy
the affine equations in \eqref{fullKKT}}\end{array}}\; ,
\]
and the {\it central path for \eqref{fullKKT}} to be the set
\[
\mathcal{C}:=\set{(s, q, u, y)}{\begin{array}{c}
0<s,\ 0<q,\ \gamma = q_is_i \; \  i=1,\dots,\ell,\mbox{ and }\\
(s, q, u, y)\mbox{ satisfy
the affine equations in \eqref{fullKKT}}\end{array}}\; .
\]
For simplicity, we define $\mathcal{F}_+:=\mathcal{F}_+(+\infty)$.
The basic strategy of a primal-dual IP method is to follow the central path
to a solution of \eqref{fullKKT} as $\gamma\downarrow 0$
by applying a predictor-corrector damped Newton 
method to the function mapping
$\mB{R}^\ell\times\mB{R}^\ell\times\mB{R}^m\times \mB{R}^n$
to itself
%$\map{F_\gamma}{\mB{R}^\nu\times\mB{R}^\nu\times\mB{R}^m\times \mB{R}^n}
%{\mB{R}^\nu\times\mB{R}^\nu\times\mB{R}^m\times \mB{R}^n}$ 
given by
\begin{equation}
\label{rKKT}
F_\gamma(s, q, u, y)
=
\begin{bmatrix}
s + A^\R{T}u - a \\
D(q)D(s)\B{1} - \gamma \B{1}\\
By - Mu - Aq +b\\
B^\R{T}u 
\end{bmatrix}\; ,
\end{equation}
where $D(q)$ and $D(s)$ are diagonal matrices 
with vectors $q, s$ on the diagonal. 
\begin{theorem}
\label{IPMtheorem}
Let $U, M, b, B,$ and $A$ be as defined in \eqref{PLQpenalty} and \eqref{polyhedralU}.
Given $\tau>0$, let $\mathcal{F}_+$, $\mathcal{F}_+(\tau)$, and $\mathcal{C}$ be as defined above.
If 
\begin{equation}\label{key IP}
\mathcal{F}_+\ne\emptyset\quad\mbox{and}\quad 
\R{null}(M)\cap \R{null}(A^\R{T}) =\{0\}, 
\end{equation}
then the following statements hold.
\begin{enumerate}
\item[(i)] 
$F^{(1)}_\gamma(s, q, u, y)$ is invertible for all $(s, q, u, y)\in\mathcal{F}_+$.
\item[(ii)] 
Define 
$\widehat{\mathcal{F}}_+=\set{(s,q)}{\exists\, (u,y)\in\mB{R}^m\times \mB{R}^n
\mbox{ s.t. }(s,q,u,y)\in \mathcal{F}_+}$. Then for each $(s,q)\in\widehat{\mathcal{F}}_+$
there exists a unique $(u,y)\in \mB{R}^m\times \mB{R}^n$ such that 
$(s,q,u,y)\in \mathcal{F}_+$.
\item[(iii)] 
The set $\mathcal{F}_+(\tau)$ is bounded
for every $\tau>0$.
\item[(iv)]
For every $g\in\mB{R}^\ell_{++}$, there is a unique 
$(s,q,u,y)\in \mathcal{F}_+$ such that $g=(s_1q_1,s_2q_2,\dots,s_\ell q_\ell)^\R{T}$.
%\item[(v)] 
%The set $\mathcal{F}_+(\tau)$ is non-empty and bounded for all
%$\tau>0$.
\item[(v)] 
For every $\gamma>0$, there is a unique solution
$[s(\gamma),q(\gamma),u(\gamma),y(\gamma)]$ to
the equation $F_\gamma(s,q,u,y)=0$. Moreover, these points
form a differentiable trajectory in 
$\mB{R}^\nu\times\mB{R}^\nu\times\mB{R}^m\times \mB{R}^n$.
In particular, we may write
\[
 \mathcal{C}=\set{[s(\gamma),q(\gamma),u(\gamma),y(\gamma)]}{\gamma>0}\; .
\]
\item[(vi)]  
The set of cluster points of the central path as
$\gamma\downarrow 0$ is non-empty, and every such cluster point is a solution to
\eqref{fullKKT}.
%\item[(v)] The mapping $\map{\phi}{\mB{R}^\nu}{\mB{R}^\nu}$ given by
%$\phi(s,q)=(s_1,q_1,s_2q_2,\dots,s_\nuq_\nu)^\R{T}$ is a diffeomorphism between 
%$\widehat\mathcal{F}_+$ and $\mB{R}^\nu_{++}$.
\end{enumerate}
%$\mathcal{F}_+(\tau)$ is non-empty and bounded for all 
%$\tau\in(0,+\infty)$, 
%The KKT system (\ref{fullKKT}) can be solved using an interior point method if and only if
%$\R{null}(M)\cap \R{null}(A^\R{T}) =\{0\}$.
%This is implied by the condition
%$\R{dom}(\rho(U,M, 0, I; \cdot))=\mB{R}^m$.
\end{theorem}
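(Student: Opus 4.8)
The plan is to prove (i)--(vi) in sequence, invoking the non-degeneracy hypothesis \eqref{key IP} at each essential step and deriving the later statements from the earlier ones by standard interior-point arguments (cf.\ \citep{KMNY91}). For (i) I would write the Jacobian of $F_\gamma$ explicitly,
\[
F^{(1)}_\gamma=
\begin{bmatrix}
I & 0 & A^\R{T} & 0\\
D(q) & D(s) & 0 & 0\\
0 & -A & -M & B\\
0 & 0 & B^\R{T} & 0
\end{bmatrix},
\]
and show that on $\mathcal{F}_+$ (where $s,q>0$, so $D(s)$ is invertible and $D:=D(s)^{-1}D(q)$ is a positive diagonal matrix) its null space is trivial. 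A null vector $(\Delta s,\Delta q,\Delta u,\Delta y)$ satisfies $\Delta s=-A^\R{T}\Delta u$ and $\Delta q=DA^\R{T}\Delta u$ from the first two block rows; substituting into the third row and pairing with $\Delta u$ kills $\ip{\Delta u}{B\Delta y}$ via $B^\R{T}\Delta u=0$ and leaves $\ip{\Delta u}{(ADA^\R{T}+M)\Delta u}=0$. As $ADA^\R{T}$ and $M$ are both positive semidefinite, this forces $A^\R{T}\Delta u=0$ and $M\Delta u=0$, whence $\Delta u\in\R{null}(M)\cap\R{null}(A^\R{T})=\{0\}$ by \eqref{key IP}; the remaining components then vanish, the last using injectivity of $B$.

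Statement (ii) uses the identical pairing on the affine equations alone: the difference $(\Delta u,\Delta y)$ of two completions of a fixed $(s,q)$ obeys $A^\R{T}\Delta u=0$, $B^\R{T}\Delta u=0$, $B\Delta y=M\Delta u$, so pairing the last relation with $\Delta u$ gives $\ip{\Delta u}{M\Delta u}=0$ and again $\Delta u\in\R{null}(M)\cap\R{null}(A^\R{T})=\{0\}$, then $\Delta y=0$. The crux is the boundedness in (iii). Fixing a reference $(\bar s,\bar q,\bar u,\bar y)\in\mathcal{F}_+$ (nonempty by \eqref{key IP}), the differences $\delta(\cdot)$ of any point of $\mathcal{F}_+(\tau)$ from it solve the homogeneous affine system, and the same pairing yields the key identity $\ip{\delta u}{M\delta u}=\ip{\delta s}{\delta q}=\ip{s-\bar s}{q-\bar q}$. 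Expanding the right-hand side and using $s^\R{T}q\le\tau$ with $s,q,\bar s,\bar q>0$ gives $\bar q^\R{T}s+\bar s^\R{T}q\le\tau+\bar s^\R{T}\bar q$, which bounds $s,q$ componentwise because $\bar s,\bar q$ have strictly positive entries. The identity also bounds $\norm{M^{1/2}\delta u}$, controlling the $\Ran{M}$-component of $\delta u$; the complementary $\R{null}(M)$-component is bounded because $A^\R{T}$ is injective, hence bounded below, on $\R{null}(M)$ by \eqref{key IP}, while $A^\R{T}\delta u=-\delta s$ is bounded. Thus $u$ is bounded, and $B\delta y=M\delta u+A\delta q$ with $B$ injective bounds $y$.

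For (iv), uniqueness sharpens the pairing with the arithmetic--geometric mean inequality: for two feasible points with $s_iq_i=s_i'q_i'=g_i$ one gets $\ip{\delta s}{\delta q}=\sum_i(2g_i-s_iq_i'-s_i'q_i)\le0$, whereas $\ip{\delta u}{M\delta u}\ge0$, so the identity from (iii) forces equality in every AM--GM bound and hence $s=s'$, $q=q'$, and then $u=u'$, $y=y'$ as before. Existence is the main obstacle, and I would settle it topologically: the product map $\Phi(s,q,u,y)=(s_1q_1,\dots,s_\ell q_\ell)$ sends the $\ell$-dimensional manifold $\mathcal{F}_+$ to $\mB{R}^\ell_{++}$, it is a local homeomorphism by (i), and it is proper by (iii) (a compact set in $\mB{R}^\ell_{++}$ has products bounded above and below, placing its preimage in a bounded $\mathcal{F}_+(\tau)$ and keeping it off the boundary, since $s_i\to0$ with $s_iq_i$ bounded below would force $q_i\to\infty$). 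A proper local homeomorphism onto the connected, simply connected set $\mB{R}^\ell_{++}$ is a covering map, and being injective it is a bijection; this supplies the required point.

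Finally, (v) is the special case $g=\gamma\B{1}$ of (iv), with the differentiable dependence of $(s(\gamma),q(\gamma),u(\gamma),y(\gamma))$ on $\gamma$ furnished by the implicit function theorem through the invertibility in (i). For (vi), for all sufficiently small $\gamma$ the central-path points lie in the bounded set $\mathcal{F}_+(\tau)$, so cluster points as $\gamma\downarrow0$ exist; any such limit inherits the affine equations and $s,q\ge0$ by continuity, while $s_iq_i=\gamma\downarrow0$ delivers the complementarity conditions, so the limit solves \eqref{fullKKT}.
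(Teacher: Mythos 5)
Your proposal is correct, and while it follows the same overall architecture as the paper (establish (i)--(iii), then derive (iv)--(vi) from them), several individual steps take a genuinely different route. For (i) you show the null space of $F^{(1)}_\gamma$ is trivial by an energy/pairing argument, where the paper performs block row reduction to upper-triangular form and checks that the pivot blocks $T=M+AD(q)D(s)^{-1}A^\R{T}$ and $B^\R{T}T^{-1}B$ are invertible; these are equivalent, though the paper's reduction has the side benefit of exhibiting the factorization actually used in the Kalman-smoothing complexity analysis later. For (iii) both proofs hinge on the same identity $(s-\hat s)^\R{T}(q-\hat q)=(\hat u-u)^\R{T}M(\hat u-u)\ge 0$ to bound $(s,q)$, but you then bound $(u,y)$ directly by splitting $\delta u$ into its $\Ran{M}$ and $\R{null}(M)$ components (using that $A^\R{T}$ is bounded below on $\R{null}(M)$ by \eqref{key IP}), whereas the paper uses a normalization-and-contradiction compactness argument; yours is more quantitative, the paper's requires less bookkeeping. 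The largest divergence is in (iv): your uniqueness proof via equality in the AM--GM inequality is cleaner than the paper's componentwise case analysis, and for existence you replace the paper's homotopy continuation (implicit function theorem plus Part (iii) to push the path parameter to $t=1$) with a global topological argument, showing the product map $\Phi:\mathcal{F}_+\to\mB{R}^\ell_{++}$ is a proper local homeomorphism, hence (being injective, with nonempty domain and connected target) a bijection. Both arguments consume exactly the same ingredients --- (i) for local invertibility and (iii) for properness/boundedness --- so this is a matter of packaging; indeed, for your route simple connectedness of $\mB{R}^\ell_{++}$ is not even needed, since a proper local homeomorphism has open and closed image. Parts (ii), (v), and (vi) coincide with the paper's proofs.
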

Please see the Appendix for proof. 
Theorem \ref{IPMtheorem} shows that if the conditions \eqref{key IP} hold, then IP
techniques can be applied to solve the problem \eqref{newopt}. 
In all of the applications we consider, the condition 
$\R{null}(M)\cap \R{null}(A^\R{T}) =\{0\}$ is easily verified. 
For example, in the setting of \eqref{newopt} with 
\begin{equation}
\label{UvUw}
U_v=\set{u}{A_vu\le a_v}\quad\mbox{and}\quad
U_w=\set{u}{A_wu\le b_w}
\end{equation}
 this condition reduces to
\begin{equation}
\label{NullProperty}
\R{null}(M_v)\cap \R{null}(A_v^\R{T}) =\{0\}
\quad\mbox{and}\quad
\R{null}(M_w)\cap \R{null}(A_w^\R{T}) =\{0\}.
\end{equation}
\begin{corollary}
\label{corExamples}
The densities corresponding to $\ell_1, \ell_2$, Huber,
and Vapnik penalties
all satisfy hypothesis~\eqref{NullProperty}.
\end{corollary}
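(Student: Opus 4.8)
The plan is to reduce \eqref{NullProperty} to a simple dichotomy on the data $(M,A)$ of each penalty and then read off the verification from the parameters recorded in Remark \ref{scalarExamples}. Since the blocks $V$ and $W$ in \eqref{NullProperty} may each independently be any of the four penalties, it is enough to establish $\R{null}(M)\cap\R{null}(A^\R{T})=\{0\}$ for each type separately. I would first note that the multivariate $\ell_1$, $\ell_2$, Huber, and Vapnik densities are assembled blockwise from their scalar prototypes, so that $M$ is block diagonal, $U$ is a Cartesian product, and the constraint matrix $A$ in \eqref{polyhedralU} may be taken block diagonal as well. Hence $\R{null}(M)\cap\R{null}(A^\R{T})$ decomposes as a product over the scalar blocks, and it suffices to treat one scalar building block of each type (equivalently, one may check the condition directly on the multivariate data $M=\R{diag}(\{M_k\})$, $A=\R{diag}(\{A_k\})$).

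The crux is the observation that for each of the four penalties either $M$ is nonsingular or $U$ is bounded, and each regime forces the intersection to be trivial. If $M\succ0$---as for $\ell_2$ and Huber, where $M=I$---then $\R{null}(M)=\{0\}$ and the conclusion is immediate, independent of $A$. If instead $M$ is singular---as for $\ell_1$ and Vapnik, where $M=0$---I would use boundedness of $U$ together with the following representation-free argument: for any $d\in\R{null}(A^\R{T})$ and any $u\in U$ one has $A^\R{T}(u+td)=A^\R{T}u\le a$ for all $t\in\mB{R}$, so the entire line $\{u+td : t\in\mB{R}\}$ lies in $U$; since $U=[-1,1]^n$ (respectively $[0,1]^{2n}$) is bounded, this forces $d=0$, that is $\R{null}(A^\R{T})=\{0\}$. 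Either way $\R{null}(M)\cap\R{null}(A^\R{T})=\{0\}$, and assigning $\ell_2$, Huber to the first regime and $\ell_1$, Vapnik to the second completes all four cases.

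The only point that requires genuine care---and the step I expect to be the main obstacle---is that \eqref{NullProperty} is phrased in terms of a chosen inequality description $U=\{u:A^\R{T}u\le a\}$, so a priori $\R{null}(A^\R{T})$ could depend on how the constraints are written. The argument above sidesteps this by showing that $\R{null}(A^\R{T})$ is always contained in the set of directions along which $U$ is unbounded, a property intrinsic to the set $U$ rather than to its representation; this set is trivial exactly when $U$ is bounded, so the verification in the singular-$M$ cases is robust to the choice of $A$. With this in hand the corollary follows directly from the explicit $(U,M)$ data in Remark \ref{scalarExamples}.
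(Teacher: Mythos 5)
Your proof is correct and follows essentially the same route as the paper: the $M\succ 0$ case is immediate, and for singular $M$ you show $\R{null}(A^\R{T})$ lies in the horizon cone of $U$, which is trivial because $U$ is bounded. The only cosmetic difference is that you dispose of Huber via $M=I$ while the paper groups it with $\ell_1$ and Vapnik under bounded $U$ (both work, since Huber satisfies both conditions); your explicit remark that $\R{null}(A^\R{T})\subset\horizon{U}$ regardless of the chosen inequality representation is a detail the paper leaves implicit.
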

\begin{proof}
We verify that $\R{null}(M) \cap \R{null}(A^\R{T}) = 0$
 for each of the four penalties.
 In the $\ell_2$ case, $M$ has full rank.
 For the $\ell_1$, Huber, and Vapnik penalties, the respective
sets $U$ are bounded, so $\horizon{U}= \{0\}$.
\end{proof}
On the other hand, 
the condition $\mathcal{F}_+\ne\emptyset$ is typically more difficult to verify.
We show how this is
 done for two sample cases from class \eqref{probTwo}, where the non-emptiness of
$\mathcal{F}_+$ is established by constructing an element of this set. 
Such constructed points are useful for initializing the interior point algorithm. 
 
 \subsection{$\ell_1$ -- $\ell_2$:}
 Suppose  $V(v;R)=\norm{R^{-1/2}v}_1$ and
 $W(w;Q)=\half\norm{Q^{-1/2}w}_2^2$. In this case
 \begin{gather*}
 U_v=[-\B{1}_m,\B{1}_m],\ M_v=0_{m\times m},\ b_v=0_m,\ B_v=I_{m\times m},\\
 U_w=\mB{R}^n,\ M_w=I_{n\times n},\ b_w=0_n,\ B_w=I_{n\times n},
 \end{gather*}
 and $R\in\mB{R}^{m\times m}$ and $Q\in\mB{R}^{n\times n}$ are
 symmetric positive definite covariance matrices. Following the notation of \eqref{newopt}
 we have
 \[
 U=[-\B{1},\B{1}]\times \mB{R}^n,\
 M=\begin{bmatrix}0_{m\times m}&0\\ 0&I_{n\times n}\end{bmatrix},\ 
 b=\begin{pmatrix}-R^{-1/2}z\\ -Q^{-1/2}\mu\end{pmatrix},\
 B=\begin{bmatrix}R^{-1/2}H\\ Q^{-1/2}G\end{bmatrix}.
 \]
 The specification of $U$ in \eqref{polyhedralU}  is given by
 \[
 A^\R{T}=\begin{bmatrix}I_{m\times m}&0_{n\times n}\\ -I_{m\times m}&0_{n\times n}\end{bmatrix}
 \mbox{ and }a=\begin{pmatrix}\B{1}\\  -\B{1}\end{pmatrix}\; .
 \]
 Clearly, the condition $\R{null}(M)\cap \R{null}(A^\R{T}) =\{0\}$ in \eqref{key IP} is 
 satisfied. Hence, for Theorem \ref{IPMtheorem} to apply, we need only check that
 $\mathcal{F}_+\ne\emptyset$. This is easily established by noting that 
 $(s,q,u,y)\in \mathcal{F}_+$, where
 \[
 u
 =
 \begin{pmatrix}
 0
 \\ 0
 \end{pmatrix},\
 y
 =
 G^{-1}\mu,\ 
 s=
 \begin{pmatrix}
 \B{1}\\ \B{1}
 \end{pmatrix},
 \ 
 q
 =
 \begin{pmatrix}
 \B{1}+[R^{-1/2}(Hy-z)]_+
 \\ \B{1}-[R^{-1/2}(Hy-z)]_-
 \end{pmatrix},
 \]
 where, for $g\in\mB{R}^\ell$, $g_+$ is defined componentwise by $g_{+(i)}=\max\{g_i,0\}$
 and $g_{-(i)}=\min\{g_i,0\}$.
 
 \subsection{Vapnik -- Huber:}
 Suppose that $V(v;R)$ and $W(w;Q)$ are as in \eqref{PLQ-V} and 
 \eqref{PLQ-W}, respectively, with $V$ a Vapnik penalty and $W$ a Huber penalty:
 \begin{gather*}
 U_v=[0,\B{1}_m]\times[0,\B{1}_m],\
 M_v=%\begin{bmatrix} 0&0\\ 0&0\end{bmatrix}
 0_{2m\times 2m},\
 b_v=-\begin{pmatrix}\epsilon\B{1}_m\\ \epsilon\B{1}_m\end{pmatrix},\ 
 B_v=\begin{bmatrix}I_{m\times m}\\ -I_{m\times m}\end{bmatrix}\\
 U_w=[-\kappa\B{1}_n,\kappa\B{1}_n],\
 M_w=I_{n\times n},\
 b_w=0_n,\
 B_w=I_{n\times n}\ ,
 \end{gather*}
 and $R\in\mB{R}^{m\times m}$ and $Q\in\mB{R}^{n\times n}$ are
 symmetric positive definite covariance matrices. Following the notation of \eqref{newopt}
 we have
 \begin{gather*}
 U=([0,\B{1}_m]\times[0,\B{1}_m])\times [-\kappa\B{1}_n,\kappa\B{1}_n],\
 M=\begin{bmatrix} 0_{2m\times 2 m}&0\\ 0&I_{n\times n}\end{bmatrix},\\
 b=-\begin{pmatrix}\epsilon \B{1}_m +R^{-1/2}z\\ \epsilon \B{1}_m -R^{-1/2}z\\
      Q^{-1/2}\mu\end{pmatrix},\ 
B=\begin{bmatrix}R^{-1/2}H\\ - R^{-1/2}H\\ Q^{-1/2}G\end{bmatrix}\ .
 \end{gather*}
The specification of $U$ in \eqref{polyhedralU}  is given by
\[
A^\R{T}=\begin{bmatrix}I_{m\times m}&0&0\\ -I_{m\times m}&0&0\\
                                0&I_{m\times m}&0\\ 0&-I_{m\times m}&0\\
                                0&0&I_{n\times n}\\ 0&0&-I_{n\times n}\end{bmatrix}\mbox{ and }
a=\begin{pmatrix}\B{1}_m\\ 0 _m \\  \B{1}_m\\ 0 _m \\ \kappa\B{1}_n  \\  \kappa\B{1}_n
\end{pmatrix}\ .           
\]
Since $\R{null}(A^\R{T}) =\{0\}$,
the condition $\R{null}(M)\cap \R{null}(A^\R{T}) =\{0\}$ in \eqref{key IP} is 
 satisfied. Hence, for Theorem \ref{IPMtheorem} to apply, we need only check that
 $\mathcal{F}_+\ne\emptyset$. We establish this by constructing an element 
 $(s,q,u,y)$ of $\mathcal{F}_+$. For this, let 
 \[
 u=\begin{pmatrix}u_1\\ u_2\\ u_3\end{pmatrix},\ 
 s=\begin{pmatrix}s_1\\ s_2\\ s_3\\ s_4\\ s_5\\ s_6\end{pmatrix},\ 
 q=\begin{pmatrix}q_1\\ q_2\\ q_3\\ q_4\\ q_5\\ q_6\end{pmatrix},
 \]
 and set
 \[
y=0_n,\ u_1=u_2=\half\B{1}_\ell,\ u_3=0_n, \ s_1=s_2=s_3=s_4=\half\B{1}_\ell,\
s_5=s_6=\kappa\B{1}_n,
 \]
 and
 \begin{gather*}
 q_1=\B{1}_m -(\epsilon \B{1}_m+R^{-1/2}z)_-,\
 q_2=\B{1}_m +(\epsilon \B{1}_m+R^{-1/2}z)_+,\\
 q_3=\B{1}_m -(\epsilon \B{1}_m-R^{-1/2}z)_-,\
 q_4=\B{1}_m +(\epsilon \B{1}_m-R^{-1/2}z)_+,\\
q_5=\B{1}_n -(Q^{-1/2}\mu)_-,\
q_6=\B{1}_n +(Q^{-1/2}\mu)_+\ .
 \end{gather*}
 Then $(s,q,u,y)\in\mathcal{F}_+$. 

%%%%%%%%%%%%%%%%%%%%%%%%%%%%%%%%%%%%%%%%%%%%%%%%%%
\section{Simple Numerical Examples and Comparisons}
\label{SimpleNumerics}
%%%%%%%%%%%%%%%%%%%%%%%%%%%%%%%%%%%%%%%%%%%%%%%%%%

Before we proceed to the main application of interest (Kalman smoothing), 
we present a few simple and interesting problems in the PLQ class. 
An IP solver that handles the  problems discussed in this section is available through
\verb{github.com/saravkin/{, along with example files and ADMM implementations.
A comprehensive comparison with other methods 
is not in our scope, but we do compare the IP framework 
with the Alternating Direction Method of Multipliers (ADMM)(see~\cite{Boyd:2011} for a tutorial reference).
We hope that the examples and the code will help readers 
to develop intuition about these two methods. 
 
We focus on ADMM in particular because these 
methods enjoy widespread use in machine learning and other applications, due to
their versatility and ability to scale to large problems. The fundamental difference between ADMM and IP 
is that ADMM methods have at best linear convergence, so they cannot reach high accuracy 
in reasonable time (see~\cite[Section 3.2.2]{Boyd:2011}). 
In contrast, IP methods have a superlinear convergence rate 
(in fact, some variants have 2-step quadratic convergence, see~\cite{Ye:1993,Wright:1997}).

In addition to accuracy concerns, IP methods may be preferable to ADMM when
\begin{itemize}
\item objective contains complex non-smooth terms, e.g. $\|Ax -b\|_1$.
\item linear operators within the objective formulations are ill-conditioned. 
\end{itemize}

For formulations with well-conditioned linear operators and simple nonsmooth pieces (such as Lasso), 
ADMM can easily outperform IP.  
In these cases ADMM methods can attain moderate accuracy (and good solutions) very quickly, by exploiting 
partial smoothness and/or simplicity of regularizing functionals. 
For problems lacking these features, such as general formulations built from (nonsmooth) PLQ penalties 
and possibly ill-conditioned linear operators, 
IP can dominate ADMM, reaching the true solution while ADMM struggles.  
%Finally, ADMM methods perform best when there is some structural feature that can be exploited 
%(for example, partial smoothness or simple regularizer), which is not true of general nonsmooth PLQ problems.  

We present a few simple examples below, either developing the ADMM approach for each, or 
discussing the difficulties (when applicable). 
We explain advantages and disadvantages of using IP, and present numerical results. 
A simple IP solver that handles all of the examples, together with ADMM code used for the comparisons,
is available through \verb{github.com/saravkin/{. The Lasso example was taken directly from 
\verb{http://www.stanford.edu/~boyd/papers/admm/{, and we implemented the other ADMM examples
using this code as a template. 

\subsection{Lasso Problem}

Consider the Lasso problem 
\begin{equation}
\label{lasso}
\min_{x} \frac{1}{2}\|Ax - b\|_2^2 + \lambda \|x\|_1\; ,
\end{equation}
where $A\in\mathbb{R}^{n\times m}$. Assume that $m < n$. 
In order to develop an ADMM approach, we split the variables and introduce a constraint:
\begin{equation}
\label{splitLasso}
\min_{x,z} \frac{1}{2}\|Ax - b\|_2^2 + \lambda \|z\|_1 \quad \text{s.t.} \quad x = z\;.
\end{equation}
The augmented Lagrangian for~\eqref{splitLasso} is given by 
\begin{equation}
\label{LassoAugLag}
\Sc{L}(x, z, y) = \frac{1}{2}\|Ax - b\|_2^2 + \lambda \|z\|_1 + \eta y^T(z-x) + \frac{\rho}{2}\|z -x\|_2^2\;,
\end{equation}
where $\eta$ is the augmented Lagrangian parameter. The ADMM method now comprises the following
iterative updates:
\begin{equation}
\label{LassoADMM}
\begin{aligned}
x^{k+1} &= \argmin_x \frac{1}{2}\|Ax - b\|_2^2  + \frac{\eta}{2}\|x+ y^k-z^k \|_2^2\\
z^{k+1} &=  \argmin_z \lambda \|z\|_1 + \frac{\eta}{2}\|z -x^{k+1}+y^{k}\|_2^2 \\
y^{k+1} &=  y^k + (z^{k+1} - x^{k+1})\;.
\end{aligned}
\end{equation}

Turning our attention to the $x$-update, note that the gradient is given by 
\[
A^T(Ax - b) + \eta(x+ y^k-z^k) = (A^TA + I)x -A^Tb +\eta(y^k-z^k)\;.
\]
At every iteration, the update requires solving the same positive definite $m\times m$ symmetric system. 
Forming $A^TA + I$ is $O(nm^2)$ time, and obtaining a Cholesky factorization is $O(m^3)$, 
but once this is done, every $x$-update can be obtained in $O(m^2)$ time by doing 
two back-solves.

The $z$-update has a closed form solution given by soft thresholding:
\[
z^{k+1} = S(x^{k+1}-y^{k+1}, \lambda/\eta)\;,
\]
which is an $O(n)$ operation. The multiplier update is also $O(n)$. 
Therefore, the complexity per iteration is $O(m^2 + n)$, making ADMM a great method for this problem.

In contrast, {\it each iteration} of IP is dominated by the complexity of forming
a dense $m\times m$ system  $A^T D^k A$, where $D^k$ is a diagonal matrix that depends
on the iteration. So while 
both methods require an investment of $O(nm^2)$ to form and $O(m^3)$ to factorize the system, 
ADMM requires this only at the outset, while
IP has to repeat the computation for every iteration. 
A simple test shows ADMM can find a good answer, with a significant speed
advantage already evident for moderate ($1000\times 5000$) 
well-conditioned systems (see Table~\ref{table:SimpleResults}).

\subsection{Linear Support Vector Machines} 
The support vector machine problem can be formulated as the PLQ (see~\cite[Section 2.1]{Ferris:2003})
\begin{equation}
\label{SVM}
\min_{w, \gamma} \frac{1}{2} \|w\|^2 + \lambda \rho_+(1 - D(Aw- \gamma \B{1}))\;,
\end{equation} 
 where $\rho_+$ is the hinge loss function, $w^Tx = \gamma$ is the hyperplane being sought, 
 $D\in\mathrm{R}^{m\times m}$ is a diagonal matrix
 with $\{\pm 1\}$ on the diagonals (in accordance to the classification of the training data), 
 and $A \in \mathbb{R}^{m\times k}$ is the observation matrix, where each row gives the features corresponding 
 to observation $i \in \{1, \dots, m\}$. 
 The ADMM details are similar to the Lasso example, so we omit them here. The interested reader can 
 study the details in the file \verb{linear_svm{ available through \verb{github/saravkin{. 
 
 The SVM example turned out to be very interesting. We downloaded the 9th Adult example from the 
 SVM library at \verb{http://www.csie.ntu.edu.tw/~cjlin/libsvm/{. The training set has 32561 examples, 
 each with 123 features. When we formed the operator $A$ for problem~\eqref{SVM}, we found 
 it was very poorly conditioned, with condition number $7.7\times 10^{10}$. 
 It should not surprise the reader that after running for 653 iterations, ADMM is still 
 appreciably far away --- its objective value is higher, and in fact the relative norm distance to the (unique)
 true solution is 10\%. 
 
It is interesting to note that in this application, high optimization accuracy does not mean 
 better classification accuracy on the test set --- indeed, the (suboptimal) ADMM solution achieves a lower
 classification error on the test set (18\%, vs. 18.75\% error for IP). Nonetheless, this is not an advantage 
 of one method over another --- one can also stop the IP method early.  
The point here is that from the optimization perspective, SVM illustrates the advantages of Newton 
methods over methods with a linear rate.

\subsection{Robust Lasso}

For the examples in this section, we take $\rho(\cdot)$ to be 
a robust convex loss, either the 1-norm or the Huber function, 
and consider the robust Lasso problem
\begin{equation}
\label{RobustLasso}
\min_{x} \rho(Ax - b) + \lambda\|x\|_1\;.
\end{equation}

First, we develop an ADMM approach that works for both losses, exploiting the simple
nature of the regularizer. Then, we develop
a second ADMM approach when $\rho(x)$ is the Huber function by exploiting 
partial smoothness of the objective.  

Setting $z = Ax - b$, we obtain the augmented Lagrangian
\begin{equation}
\label{RobustLassoAugLagOne}
\Sc{L}(x, z, y) = \rho(z) + \lambda \|x\|_1 + \eta y^T(z-Ax+b) + \frac{\rho}{2}\|-z +Ax-b\|_2^2\;.
\end{equation}
The ADMM updates for this formulation are 
\begin{equation}
\label{RobustLassoADMMOne}
\begin{aligned}
x^{k+1} &= \argmin_x \lambda \|x\|_1 + \frac{\eta}{2}\|Ax-y^k-z^k \|_2^2\\
z^{k+1} &=  \argmin_z \rho(z)  + \frac{\eta}{2}\|z +y^k -Ax^{k+1}+b\|_2^2\\
y^{k+1} &=  y^k + (z^{k+1} - Ax^{k+1}+b)\;.
\end{aligned}
\end{equation}
The $z$-update can be solved using thresholding, or modified thresholding, 
in $O(m)$ time when $\rho(\cdot)$ is the Huber loss or 1-norm. 
Unfortunately, the $x$-update now requires solving a LASSO problem. 
This can be done with ADMM (see previous section), 
but the nested ADMM structure does not perform as well 
as IP methods, even for well conditioned problems. 

When $\rho(\cdot)$ is smooth, such as in the case of the Huber loss, 
 the partial smoothness of the objective can be exploited by setting $x =z$, 
obtaining \begin{equation}
\label{RobustLassoAugLagTwo}
\Sc{L}(x, z, y) = \rho(Ax - b) + \lambda \|z\|_1 + \eta y^T(zx) + \frac{\rho}{2}\|x-z\|_2^2\;.
\end{equation}
The ADMM updates are:
\begin{equation}
\label{RobustLassoADMMTwo}
\begin{aligned}
x^{k+1} &= \argmin_x \rho(Ax - b) + \frac{\eta}{2}\|x-z^k+y^k \|_2^2\\
z^{k+1} &=  \argmin_z \lambda \|z\|_1  + \frac{\eta}{2}\|z +(x^{k+1} + y^k)\|_2^2\\
y^{k+1} &=  y^k + (z^{k+1} - x^{k+1})\;.
\end{aligned}
\end{equation}
The problem required for the $x$-update is smooth, and can be solved by a
fast quasi-Newton method, such as L-BFGS. L-BFGS is implemented using 
only matrix-vector products, and for well-conditioned problems, the ADMM/LBFGS 
approach has a speed advantage over IP methods. 
% since L-BFGS can quickly solve the $x$-update to high accuracy.
For ill-conditioned problems, L-BFGS  has to work harder 
to achieve high accuracy, and inexact solves may destabilize the overall ADMM approach. 
IP methods are more consistent (see Table~\ref{table:SimpleResults}).

Just as in the Lasso problem, the IP implementation is dominated by the formation
of $A^TD^kA$ at every iteration with complexity $O(mn^2)$. However, a simple change
of penalty makes the problem much harder for ADMM, especially when the operator $A$
is ill-conditioned.

 \subsection{Complex objectives}
 
Many problems (including Kalman smoothers in the next section), do not have the simplifying
features exhibited by Lasso, SVM, and robust Lasso problems. Consider the general regression problem
\begin{equation}
\label{genProb}
\rho(Ax-b) + \|Cx\|_1\;,
\end{equation} 
where $\rho$ may be nonsmooth, and $C$ is in $\mathbb{R}^{k\times n}$. 

Applying ADMM to these objectives requires a bi-level implementation. 
For example, when $\rho(x)$ is the 1-norm, the 
$x$-update for ADMM requires solving 
\[
\min_x \|Ax - b\|_1 + \frac{\eta^2}{2}\|Bx - z - y\|_2^2\;,
\]
which is more computationally expensive than the Lasso subproblem. 
In particular, an ADMM implementation requires iteratively solving subproblems of the form 
\[
\min_x \|Bx - c\|_2^2 + \frac{\xi}{2}\|Ax - d\|_2^2\;.
\]
Since $B\in\mathbb{R}^{k\times n}$ and $A \in \mathbb{R}^{m\times n}$, 
a cholesky approach to the above problem requires forming an $n\times n$
matrix and factoring it. Since it was already observed that ADMM struggles to achieve moderate 
accuracy in the L1 Lasso case, we did not build an ADMM implementation in this more
general setting. 

However, applying the IP solver is straightforward, and  
we illustrate by solving the problem where $\rho(\cdot)$ is the 1-norm. In this case, 
the objective is a linear program with special structure, so it is  not
surprising that IP methods work well. 

We hope that the toy problems, results, and code that we developed in order to write 
this section have given the reader a better intuition for IP methods. 
Before moving on, note that the Kalman smoothing problems in the next section
have the flavor of the general L1-L1 example, since they must balance tradeoffs between
process and measurement models. Either penalty can be taken to be the 1-norm,
or any other PLQ penatly, and we will show that IP methods can be specifically designed to exploit 
the time series structure and preserve classical Kalman smoothing computational efficiency results.

\begin{table}
\label{table:SimpleResults}
\caption{For each problem, we give iteration counts for IP, outer ADMM iterations, 
the maximum cap for inner ADMM iterations (if applicable). 
We also give total computing time 
for both algorithms ($t_{ADMM}$, $t_{IP}$) on a 2.2 GHz dual-core Intel machine, 
and the objective difference f(xADMM) - f(xIP). This difference is always positive, 
since in all experiments IP found a lower objective value. Therefore, the magnitude of the objective
difference can be used as an accuracy heuristic for ADMM in each experiment, where lower difference
means higher ADMM accuracy. $\kappa(A)=$ condition number of $A$.
}
\begin{center}
\begin{tabular}{|c|c|c|c|c|c|c|}\hline
Problem & ADMM Iters& ADMM Inner & IP Iters & $t_{ADMM}$ (s) & $t_{IP}$ (s) & ObjDiff \\ \hline 
{\bf Lasso} &&&&&&
\\
$A: 1500\times 5000$ &  15& ---& 18 & 2.0 & 58.3 &  0.0025 \\ \hline
{\bf SVM} &&&&&&
\\
$\kappa(A) = 7.7\times 10^{10}$; $A: 32561\times 123$ &  653& ---& 77 & 41.2 & 23.9 &  0.17 \\ \hline
{\bf Huber Lasso} &&&&&&
\\
{\it ADMM/ADMM} &&&&&&
\\
$\kappa(A) = 5.8$; $A: 1000\times 2000$ &  26 & 100 & 20 & 14.1 & 10.5 &  $0.00006$ \\ 
$\kappa(A) = 1330$; $A: 1000\times 2000$ &  27 & 100& 24 & 40.0 & 13.0 &  $0.0018$ \\
{\it ADMM/L-BFGS} &&&&&&
\\
$\kappa(A) = 5.8$; $A: 1000\times 2000$ &  18&--- & 20 & 2.8 & 10.3 &  $1.02$ \\ 
$\kappa(A) = 1330$; $A: 1000\times 2000$ &  22&--- & 24 & 21.2 & 13.1 &  $1.24$ \\ \hline
{\bf L1 Lasso} &&&&&& 
\\
ADMM/ADMM&&&&&&
\\
$\kappa(A) = 2.2$; $A: 500\times 2000$ &  104 & 100 &29 & 57.4 & 5.9 &  $0.06$ \\ 
$\kappa(A) = 1416$; $A: 500\times 2000$ &  112 & 100& 29 & 81.4 & 5.6&0.21 \\ \hline

{\bf General L1-L1} &&&&&& 
\\
$C: 500\times 2000$;
$A: 1000\times 2000$ &  --- & --- & 11 & --- &21.4&--- \\\hline 
\end{tabular}
\end{center}
\end{table}

%%%%%%%%%%%%%%%%%%%%%%%%%%%%%%%%%%%%%%%%%%%%%%%%%%
\section{Kalman Smoothing with PLQ penalties}
\label{InteriorPointKS}
%%%%%%%%%%%%%%%%%%%%%%%%%%%%%%%%%%%%%%%%%%%%%%%%%%

Consider now a dynamic scenario, where the system state $x_k$
evolves according to the following stochastic discrete-time linear model
\begin{equation}
\label{LinearGaussModel}
\begin{aligned}
    x_1&=x_0 + w_1
   \\
    x_k & =  G_k x_{k-1}  + w_k,   \qquad k=2,3,\ldots,N
    \\
    z_k & =  H_k x_k      + v_k,    \quad \qquad k=1,2,\ldots,N
\end{aligned}
\end{equation}
where $x_0$ is known, $z_k$ is the $m$-dimensional subvector of $z$
containing the noisy output samples collected at instant $k$, $G_k$
and $H_k$ are known matrices. Further, we consider
the general case where $\{w_k\}$ and $\{v_k\}$ are
mutually independent zero-mean random variables which
can come from any of the
densities introduced in the previous section, with positive
definite covariance matrices denoted by $\{Q_k\}$ and $\{R_k\}$, respectively.\\
%%%%%%%%%%%%%%%%%%%%%%%%%%%%%%%%%%%%%%%%%
In order to formulate the Kalman smoothing problem over the entire
sequence $\{x_k\}$, define
%We make the following definitions.
\[
\begin{aligned}
& x = \R{vec}\{x_1, \cdots, x_N\}\;,\qquad
&w = \R{vec}\{w_1, \cdots, w_N\}\; \\
&v = \R{vec}\{v_1, \cdots, v_N\}\;, \qquad
&Q = \R{diag}\{Q_1, \cdots, Q_N\}\;\\
&R = \R{diag}\{R_1, \cdots, R_N\}\;, \qquad
&H = \R{diag}\{H_1, \cdots, H_N\},
\end{aligned}
\]
%
%We also introduce the matrix $G$:
and
\[
G =
\begin{bmatrix}
    \R{I}  & 0      &          &
    \\
    -G_2   & \R{I}  & \ddots   &
    \\
        & \ddots &  \ddots  & 0
    \\
        &        &   -G_N   & \R{I}
\end{bmatrix}\;
\]
Then model (\ref{LinearGaussModel}) can be written in the form of
\eqref{LinearProcess}-\eqref{LinearModel}, i.e.,
\begin{equation}
\label{fullStat}
\begin{aligned}
\mu
&=
Gx + w\\
z
&=
Hx + v\;,
\end{aligned}
\end{equation}
where $x \in \mB{R}^{nN}$ is the entire state sequence of
interest, $w$ is corresponding process noise,
$z$ is the vector of all measurements,
$v$ is the measurement noise, and
$\mu$ is a vector of size $nN$ with the first
$n$-block equal to $x_0$, the initial state estimate,
and the other blocks set to $0$. 
This is precisely the problem \eqref{LinearProcess}-\eqref{LinearModel} that
began our study.
The problem (\ref{MainObjective})
becomes the classical Kalman smoothing
problem with quadratic penalties.
In this case, the objective function can be written 
\[
\|Gx - \mu\|_{Q^{-1}}^2 + \|Hx - z\|_{R^{-1}}^2,
\]
 and the minimizer can be found by taking the gradient and setting it to zero: 
\[
(G^TQ^{-1}{G} + H^TR^{-1}H)x = r\;.
\]
One can view this as a single step of Newton's method, which converges to the solution because the objective is quadratic. 
Note also that once the linear system above is formed, it takes only $O(n^3N)$ operations to solve due to special block 
tridiagonal structure (for a generic system, it would take $O(n^3N^3)$ time).
In this section, we will show that IP methods can preserve this complexity for much more general penalties 
on the measurement and process residuals. We first make a brief remark 
related to the statistical interpretation of PLQ penalties. 

\begin{remark}
Suppose we decide  to move to an outlier robust formulation, where the 1-norm or Huber penalties are used, but  
the measurement variance is known to be $R$. Using the statistical interpretation developed in section~\ref{PLQPTwo},
the statistically correct objective function for the smoother is 
\[
\small
\frac{1}{2}\|Gx - \mu\|_{Q^{-1}}^2 + \sqrt{2}\|R^{-1}(Hx - z)\|_1\;.
\]
Analogously, the statistically correct objective when measurement error is the Huber penalty with parameter $\kappa$ is  
\[
\frac{1}{2}\|Gx - \mu\|_{Q^{-1}}^2 + c_2\rho(R^{-1/2}(Hx - z))\;,
\]
where 
\[
\small
c_2 = \frac{4\exp\left[-\kappa^2/2\right]\frac{1+\kappa^2}{\kappa^3} 
+ 
\sqrt{2\pi}[2\Phi(\kappa) - 1]
}
{2\exp\left[-\kappa^2/2\right]\frac{1}{\kappa} 
+ 
\sqrt{2\pi}[2\Phi(\kappa) - 1]}.
\]
The normalization constant comes from the results in Section 3.1, and ensures that the 
weighting between process and measurement  terms is still consistent with the situation 
{\it regardless of which shapes are used for the process and measurement penalties}. 
This is one application of the statistical interpretation. 
\end{remark}

Next, we show that when the penalties used on the process residual $Gx-w$ and measurement residual $Hx-z$
arise from general PLQ densities, the general Kalman smoothing problem takes the form \eqref{newopt},
studied in the previous section. The details are given in the following remark. 

\begin{remark}
\label{propKS}
Suppose that the noises $w$ and $v$ in the model~\eqref{fullStat}
are PLQ densities with means $0$, variances  $Q$ and $R$
(see Def. \ref{PLQDensityDef}). Then, for suitable
$U_w, M_w,b_w,B_w$ and $U_v, M_v,b_v,B_v$ and corresponding 
$\rho_w$ and $\rho_v$
we have 
\begin{equation}
\label{kalmanDensities}
\begin{aligned}
\B{p}(w)&\propto \exp\left[-\rho\left(U_w, M_w, b_w, B_w; Q^{-1/2}w\right)\right] \\
 \B{p}(v) &\propto \exp\left[-\rho(U_v, M_v, b_v, B_v; R^{-1/2}v)\right]\;
\end{aligned}
\end{equation}
while the MAP estimator
of $x$ in the model~\eqref{fullStat} is
\begin{equation}
\argmin_{x\in \mB{R}^{nN}}
\left\{
\begin{aligned}
\label{PLQsubproblem}
&\rho\left[U_w,M_w, b_w, B_w; Q^{-1/2}(Gx - \mu)\right] \\
&+\rho\left[U_v, M_v, b_v, B_v; R^{-1/2}(Hx - z)\right]
\end{aligned}
\right\}\;
\end{equation}
\end{remark}
If $U_w$ and $U_v$ are given as in~\eqref{UvUw}, 
then the system~\eqref{fullKKT} decomposes as 
%
%
%Since $w_k$ and $v_k$ are independent,
%problem~\eqref{PLQsubproblem} 
%is decomposable into a sum of terms. 
%This special structure is manifest in the
%block diagonal structure of $H, Q, R, B^v, B^w$,
%the bidiagonal structure of $G$, and
%the structure of sets $U^w$ and $U^v$,
%and is key in proving the linear complexity
%result described in the next part of this section.\\
%%%%%%%%%%%%%%%%%%%%%%%%%%%%%%%%%%%%%%%%%%%%%%%
%Just as in the general case, 
%we exploit the polyhedral structure of $U^w$ and $U^v$ 
%to explicitly obtain the KKT system, 
%which we present in the following lemma, 
%and derive in the Appendix.
%\begin{lemma}
%\label{lemKKT}
%Suppose that the sets $U^w$ and $U^v$ 
%are polyhedral, i.e. can be written
%\[
%U^w = \{u|(A^w)^\R{T}u \leq a^w \}, \quad U^v = \{u|(A^v)^\R{T}u\leq a^v\}\;.
%\]
%Then the necessary first-order conditions
%for optimality of~\eqref{PLQsubproblem} are given by
%
\begin{equation}
\label{PLQFinalConditions}
\begin{array}{lll}
&\begin{array}{llllll}
0 &=& A_w^\R{T}u_w + s_w - a_w\;;&&  0=  A_v^\R{T}u_v + s_v - a_v\\
0 &=& s_w^\R{T}q_w\;; && 0= s_v^\R{T}q_v
\end{array}\\
&\begin{array}{llllll}
0 &=& \tilde b_w + B_w Q^{-1/2}G{d} -  M_w{u}_w - A_wq_w\\
0 &=& \tilde b_v - B_v R^{-1/2}H{d} - M_v{u}_v -  A_v q_v\\
0 &=& G^\R{T}Q^{-\R{T}/2}B_w^\R{T} u_w -
H^\R{T}R^{-\R{T}/2}B_v^\R{T} u_v\\
0 &\leq& s_w, s_v, q_w, q_v.
\end{array}
\end{array}
\end{equation}
%\end{lemma}
%
%Note that this is a special example of the system~\eqref{fullKKT}, 
%with full detail shown here for the dynamic case. 
%Just as in the general case, we solve~\eqref{PLQFinalConditions}
%directly  using Interior Point (IP) methods, i.e. 
%by applying a damped Newton iteration to a
%relaxed version of (\ref{PLQFinalConditions}),
%specifically relaxing the complementarity conditions:
%\[
%\begin{aligned}
%(s^w)^\R{T}q^w = 0 & \rightarrow  D(q^w)D(s^w)\B{1} - \gamma\B{1} = 0 \\
%(s^v)^\R{T}q^v = 0 & \rightarrow  D(q^v)D(s^v)\B{1} - \gamma\B{1} = 0\;,
%\end{aligned}
%\]
%where $D(q^w), D(s^w), D(q^v), D(s^v)$ are diagonal matrices
%with diagonals $q^w$, $s^w$, $q^v$, and $s^v$, respectively.
%%
%The parameter $\gamma$ is aggressively decreased to $0$ as the IP
%iterations proceed. Typically, no more than 10 or 20 iterations
%of the relaxed system are required 
%to obtain a solution of~\eqref{PLQFinalConditions},
%and hence an optimal solution to~\eqref{PLQsubproblem}.
%The following key theorem is the main result of this section.
%It shows that the computational effort required (per IP iteration)
%is linear in the number of time steps, regardless of which PLQ density
%is used in the state space model.
See the Appendix and~\citep{AravkinThesis2010} for details on deriving the KKT system. 
By further exploiting the decomposition shown in~\eqref{LinearGaussModel},
we obtain the following theorem. 

\begin{theorem}[PLQ Kalman smoother theorem]
\label{thmPLQsmoother}
Suppose that all $w_k$ and $v_k$ in
the Kalman smoothing model (\ref{LinearGaussModel}) come from PLQ
densities that satisfy
\begin{equation}
\label{NullPropertyKS}
\R{null}(M_k^w)\cap \R{null}((A_k^w)^\R{T})  
= \{0\}\;,  
\R{null}(M_k^v)\cap \R{null}((A_k^v)^\R{T})  
=
\{0\}\;,\;\forall k\;,
\end{equation} i.e.
their corresponding penalties are finite-valued.
Suppose further that the corresponding set $\mathcal{F}_+$ 
from Theorem~\ref{IPMtheorem} is nonempty. 
Then~\eqref{PLQsubproblem} can be solved using an IP method,
with computational complexity $O[N(n^3 + m^3+l)]$, 
where $l$ is the largest 
column dimension of the matrices $\{A_k^\nu \}$ and $\{A_k^w \}$. 
\end{theorem}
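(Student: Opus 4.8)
The plan is to separate the claim into a \emph{solvability} part, which follows by quoting Theorem~\ref{IPMtheorem}, and a \emph{complexity} part, which is the real content and rests on exploiting the block structure inherited from the dynamic model~\eqref{LinearGaussModel}. First I would verify that the aggregate problem~\eqref{PLQsubproblem} satisfies the hypotheses~\eqref{key IP} of Theorem~\ref{IPMtheorem}. Since the aggregate $M$ and $A$ are block diagonal across the $N$ time steps (as in Remark~\ref{propKS} and the KKT decomposition~\eqref{PLQFinalConditions}), the global condition $\R{null}(M)\cap\R{null}(A^\R{T})=\{0\}$ reduces blockwise to the per-step conditions~\eqref{NullPropertyKS}, exactly as~\eqref{NullProperty} reduced it in the two-penalty case. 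Combined with the assumed $\mathcal{F}_+\ne\emptyset$, this places us in the hypotheses of Theorem~\ref{IPMtheorem}, so the central path exists, is bounded, and its cluster points solve~\eqref{fullKKT}; hence a predictor--corrector IP scheme applied to $F_\gamma$ converges to a solution.

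For the complexity claim I would bound the cost of a single IP iteration, namely one Newton solve of $F_\gamma^{(1)}\Delta=-F_\gamma$, which is invertible by part~(i) of Theorem~\ref{IPMtheorem}. The slack and complementarity variables enter $F_\gamma^{(1)}$ only through the identity and the diagonal matrices $D(s),D(q)$, so I would eliminate $\Delta s$ and $\Delta q$ by purely diagonal (hence $O(Nl)$) operations, reducing to the symmetric saddle-point system
\[
\begin{bmatrix}-T & B \\ B^\R{T} & 0\end{bmatrix}\begin{bmatrix}\Delta u \\ \Delta y\end{bmatrix}=\begin{bmatrix}\tilde r_u \\ \tilde r_y\end{bmatrix},\qquad T:=M+AD(s)^{-1}D(q)A^\R{T}.
\]
Because $s,q>0$ on $\mathcal{F}_+$ and $\R{null}(M)\cap\R{null}(A^\R{T})=\{0\}$, each time-step block of $T$ is positive definite, and since $M,A$ are block diagonal across $k$, so is $T$; thus $T^{-1}$ is assembled one time-step block at a time in $O(N(m^3+l))$ work. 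Eliminating $\Delta u=T^{-1}(B\Delta y-\tilde r_u)$ then leaves the reduced normal equations $(B^\R{T}T^{-1}B)\Delta y=\hat r$.

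The crux is the sparsity pattern of $B^\R{T}T^{-1}B$. Substituting $B=\bigl[\begin{smallmatrix}B_vR^{-1/2}H\\ B_wQ^{-1/2}G\end{smallmatrix}\bigr]$ and using that $H,R,Q$ are block diagonal while $G$ is block \emph{bidiagonal}, I would show the measurement contribution $H^\R{T}R^{-\R{T}/2}B_v^\R{T}T_v^{-1}B_vR^{-1/2}H$ is block diagonal, and the process contribution $G^\R{T}Q^{-\R{T}/2}B_w^\R{T}T_w^{-1}B_wQ^{-1/2}G$ is block \emph{tridiagonal}, with coupling between time steps $k$ and $k\pm1$ arising solely from the off-diagonal $-G_k$ terms. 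Hence $B^\R{T}T^{-1}B$ is a positive definite (by $T$ positive definite and $B$ injective) block tridiagonal matrix with $N$ diagonal blocks of size $n$, and such a system is factored and solved by a block-LU / Thomas recursion---the algebraic analogue of the Rauch--Tung--Striebel sweep---in $O(Nn^3)$ operations. Summing the three stages gives the per-iteration cost $O[N(n^3+m^3+l)]$, and since the IP iteration count is controlled by the central-path analysis independently of $N$, this is also the overall bound, matching the linear-in-$N$ scaling of the classical Gaussian smoother.

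I expect the main obstacle to be the bookkeeping in the third step: verifying that after the two eliminations the only inter-block coupling is the nearest-neighbor coupling carried by the bidiagonal $G$, so that no fill-in beyond the tridiagonal band occurs, and confirming that each diagonal block of $B^\R{T}T^{-1}B$ (hence each pivot of the block tridiagonal factorization) stays invertible along the central path. Both facts trace back to the positive definiteness of the time-step blocks of $T$ guaranteed by~\eqref{NullPropertyKS}, but making the structural claim precise requires carefully matching the block partition of the aggregate variables $(u,y)$ to the time index $k$ and tracking how the bidiagonal pattern of $G$ propagates through the triple product.
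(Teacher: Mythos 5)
Your proposal follows essentially the same route as the paper's own proof: eliminate the slack and multiplier blocks via the diagonal matrices $D(s),D(q)$, reduce to the block-diagonal matrices $T_w,T_v$ (invertible by \eqref{NullPropertyKS} and the independence of the $w_k,v_k$), and then observe that the Schur complement $B^\R{T}T^{-1}B$ equals the paper's $\Omega=\Omega_G+\Omega_H$, which is positive definite and block tridiagonal because $H$ is block diagonal and $G$ is block bidiagonal, yielding the $O[N(n^3+m^3+l)]$ per-iteration cost via a block tridiagonal solve. The argument and the complexity accounting match the paper's proof step for step.
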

Note that the first part of this theorem, 
the solvability of the problem using IP methods, 
already follows from Theorem \ref{IPMtheorem}. 
The main contribution of the result in the dynamical 
system context is the computational complexity. 
The proof is presented in the Appendix
and shows that IP methods for
solving~\eqref{PLQsubproblem}
preserve the key block tridiagonal structure of the
standard smoother. If the number of IP iterations is fixed
($10-20$ are typically used in practice), general smoothing
estimates can thus be computed in $O[N(n^3+ m^3 + l)]$ time.
Notice also that the number of required operations scales
linearly with $l$, which represents the 
complexity of the PLQ density encoding.\\
%%%%%%%%%%%%%%%%%%%%%%%%%%%%%%%%%%%%%%%%%
%

%%%%%%%%%%%%%%%%%%%%%%%%%%%%%%%%%%%%%%%%%
\section{Numerical example}
\label{MethodComparisonSection}
%%%%%%%%%%%%%%%%%%%%%%%%%%%%%%%%%%%%%%%%%

\subsection{Simulated data}

\begin{figure*}
\begin{center}
{\includegraphics[scale=0.6]{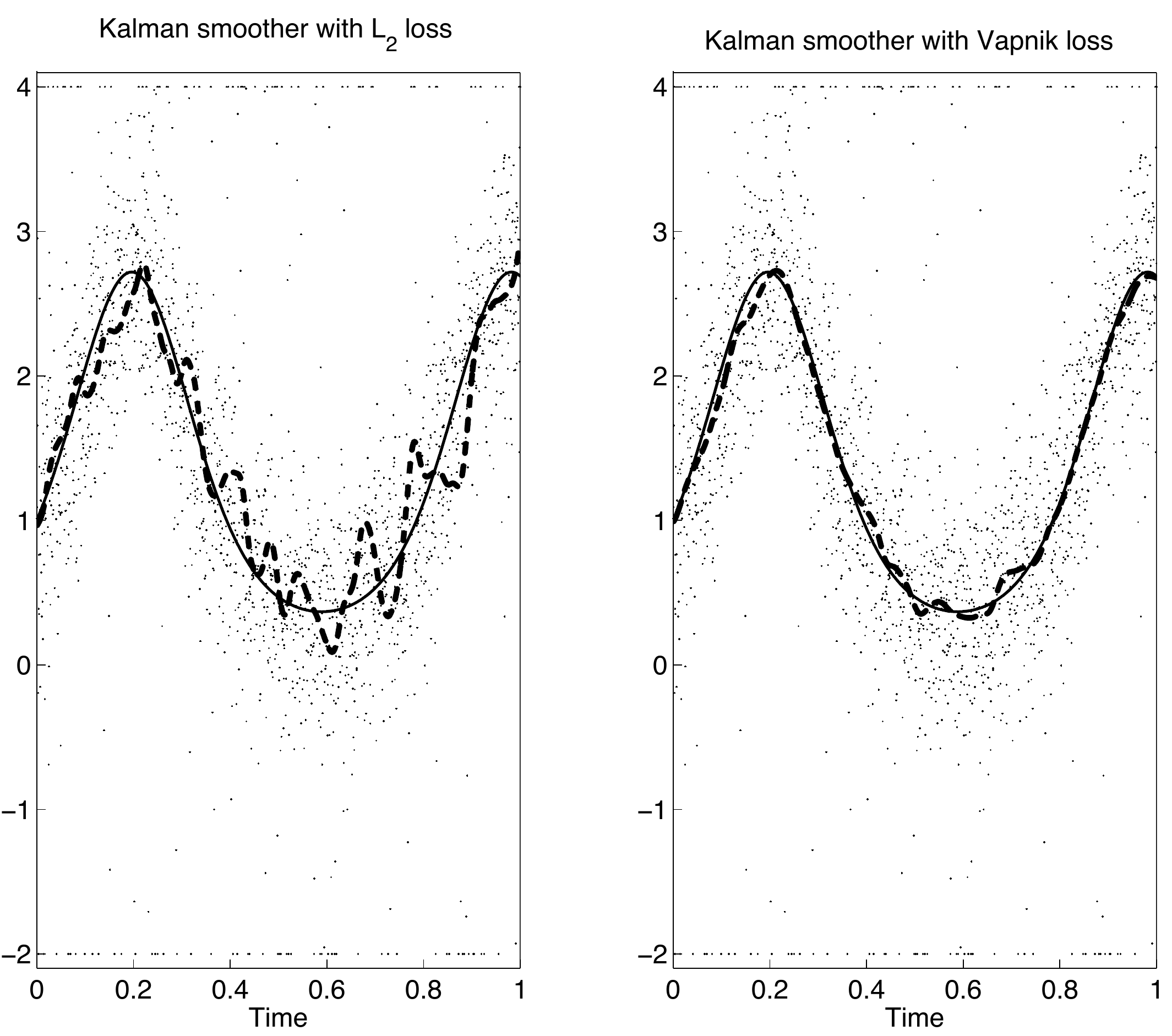}}
\end{center}
\caption{
    \label{FigLtwoVap}
    Simulation:
    measurements ($\cdot$) with outliers
    plotted on axis limits ($4$ and $-2$),
    true function (continuous line),
    smoothed estimate using either the quadratic loss (dashed line, left panel)
    or the Vapnik's $\epsilon$-insensitive loss (dashed line, right panel)
}
\end{figure*}

In this section we use a simulated example to test the
computational scheme described in the previous section.
We consider the following function
\begin{equation*}
f(t)=\exp\left[\sin(8t)\right]
\end{equation*}
taken from \citep{Dinuzzo07}. Our aim is to reconstruct $f$
starting from
2000 noisy samples collected uniformly over the unit interval.
The measurement noise $v_k$ was generated using a mixture of two Gaussian densities, with
$p=0.1$ denoting the fraction from each Gaussian; i.e.,
\[
v_k \sim (1 - p ) \B{N} (0, 0.25 ) + p \B{N} ( 0 , 25 ),
\]
Data are displayed as dots in Fig.~\ref{FigLtwoVap}.
Note that the purpose of the second component of the Gaussian mixture
is to simulate outliers in the output data and that all the measurements exceeding
vertical axis limits are plotted on upper and lower axis limits (4 and -2) to improve readability.\\
%%%%%%%%%%%%%%%%%%%%%%%%%%%%%%%%%%%%%%%%%%%%%%%%%%%%
The initial condition $f(0) = 1$ is assumed to be known, 
while the difference of the unknown function from the initial condition 
(i.e. $f(\cdot) - 1$) is modeled as a Gaussian process
given by an integrated Wiener process. This model captures the Bayesian
interpretation of cubic smoothing splines \citep{Wahba1990}, 
and admits a two-dimensional state space representation
where the first component of $x(t)$, which models $f(\cdot) - 1$,
corresponds to the integral of the second state component, modelled as Brownian motion.
To be more specific, letting $\Delta t = 1/2000$, the sampled version
of the state space model (see \citep{Jaz,Oks} for details) is defined by
\begin{eqnarray*}
G_k =
\begin{bmatrix}
    1        & 0
    \\
    \Delta t & 1
\end{bmatrix}, \qquad k=2,3,\ldots,2000\\
H_k =
\begin{bmatrix}
    0        & 1
\end{bmatrix}, \qquad k=1,2,\ldots,2000
\end{eqnarray*}
with the autocovariance of $w_k$ given by
\[
Q_k = \lambda^2
\begin{bmatrix}
    \Delta t   & \frac{\Delta t^2}{2}
    \\
    \frac{\Delta t^2}{2} & \frac{\Delta t^3}{3}
\end{bmatrix}, \qquad k=1,2,\ldots,2000
\; ,
\]
where $\lambda^2$ is an unknown scale factor
to be estimated from the data.\\
%%%%%%%%%%%%%%%%%%%%%%%%%%%%%%%%%%%%%%%%%
We compare the performance of two Kalman smoothers.
The first (classical) estimator uses a quadratic loss function
to describe the negative log of the measurement noise density and
contains only $\lambda^2$ as unknown parameter.
The second estimator is a Vapnik smoother relying
on the $\epsilon$-insensitive loss,
and so depends on two unknown parameters: $\lambda^2$ and $\epsilon$.
In both cases, the unknown parameters are estimated
by means of a cross validation strategy where the 2000 measurements
are randomly split into a training and a validation set of 1300
and 700 data points, respectively. The Vapnik smoother
was implemented by exploiting the efficient computational
strategy described in the previous section, see~\citep{AravkinIFAC} for 
specific implementation details.
Efficiency is particularly important here, because of the need for cross-validation. 
In this way, for each value of $\lambda^2$ and $\epsilon$
contained in a $10 \times 20$ grid on $[0.01,10000] \times [0,1]$,
with $\lambda^2$ logarithmically spaced,
the function estimate was rapidly obtained by the new smoother applied to the training set.
Then,  the relative average prediction error on the validation set was computed,
see Fig.~\ref{FigVal}.
The parameters leading to the best prediction were 
$\lambda^2=2.15\times 10^3$ and $\epsilon=0.45$, 
which give a sparse solution
defined by fewer than 400 support vectors. 
The value of $\lambda^2$
for the classical Kalman smoother was then estimated following the
same strategy described above. 
In contrast to the Vapnik penalty,
the quadratic loss does not induce any sparsity, so that, 
in this case, the number of support vectors
equals the size of the training set.\\
%%%%%%%%%%%%%%%%%%%%%%%%%%%%%%%%%%%%%%%%%
The left and right panels of Fig.~\ref{FigLtwoVap} display the function estimate
obtained using the quadratic and the Vapnik losses, respectively.
It is clear that the estimate obtained using the quadratic penalty is heavily affected by the outliers. 
In contrast, as expected, the estimate coming from the Vapnik based smoother 
performs well over the entire time period, 
and is virtually unaffected by the presence of large outliers.

\begin{figure}
\begin{center}
{\includegraphics[scale=0.3]{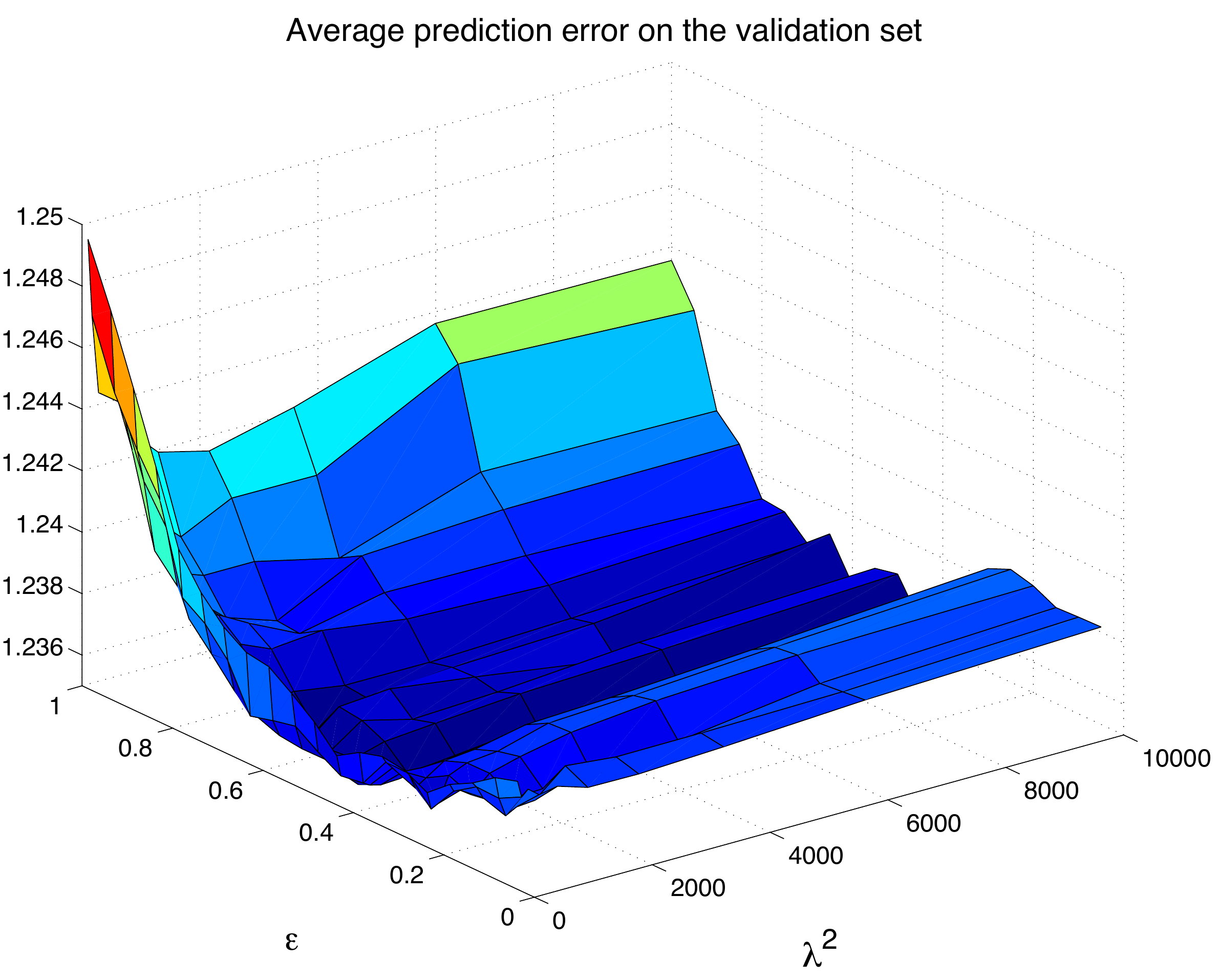}}
\end{center}
\caption{
    \label{FigVal}
    Estimation of the smoothing filter parameters using the Vapnik loss. 
    Average prediction error on the validation data set
    as a function of the variance process $\lambda^2$
    and $\epsilon$.
}
\end{figure}

\subsection{Real industrial data}

Let us now consider real industrial data coming from Syncrude Canada Ltd, also 
analyzed in~\cite{HLiu2004}. Oil production data is typically a multivariate 
time series capturing variables such as flow rate, pressure, particle velocity, and 
other observables. Because the data is proprietary, the exact nature of
the variables is not known. 
The data from~\cite{HLiu2004} comprises 
two anonymized time series variables, 
called V14 and V36, that have been selected from the process data. 
Each time series consists of 936 measurements, collected at times
$[1,2,\ldots,936]$ (see the top panels of Fig. \ref{Real}).  
Due to the nature of production data, we hypothesize 
that the temporal profile of the variables is 
smooth and that the observations contain outliers, 
as suggested by the fact that some observations differ markedly 
from their neighbors, especially in the case of V14.\\  
Our aim is to compare the prediction performance of two smoothers
that rely on $\ell_2$ and $\ell_1$ measurement loss functions.  
For this purpose, we consider 100 Monte Carlo runs. 
During each run, data are randomly divided into three disjoint sets:
 training and a validation data sets, both of size 350, and a test set of size 236.
We use the same state space model adopted in the previous subsection,
with $\Delta t = 1$, and use a non-informative 
prior to model the initial condition of the system.
The regularization parameter $\gamma$ (equal to the inverse of $\lambda^2$
assuming that the noise variance is 1) is chosen using standard cross validation 
techniques. For each value of $\gamma$, logarithmically spaced between $0.1$ and $1000$ (30 point grid), the smoothers are trained on the training set, 
and the $\gamma$ chosen corresponds to the smoother 
that achieves the best prediction on the validation set. 
 % $[0.1 0.2 0.5 1 2 \ldots  200 500 1000]$ via cross validation.
After estimating $\gamma$, the variable's profile is reconstructed 
for the entire time series (at all times $[1,2,\ldots,936]$), using
the measurements contained in the union of the training and the validation data sets.  
Then, the prediction capability of the smoothers is evaluated by 
computing the 236 relative percentage errors (ratio of residual and observation times 100) 
in the reconstruction of the test set.\\
In Fig.~\ref{Real} we display the boxplots of the overall 23600 relative errors 
stored after the 100 runs for V14 (bottom left panel) and V36 (bottom right panel).
One can see that the $\ell_1$-Kalman smoother outperforms the classical one,
especially in case of V14. 
This is not surprising, since in this case prediction is more difficult 
due to the larger numbers of outliers in the time series.
In particular, for V14, the average percentage errors are $1.4\%$ and $2.4\%$ 
while, for V36, they are $1\%$ and $1.2\%$ using $\ell_1$ and $\ell_2$,
respectively. 

\begin{figure*}
  \begin{center}
   \begin{tabular}{cc}
\hspace{.1in}
 { \includegraphics[scale=0.36]{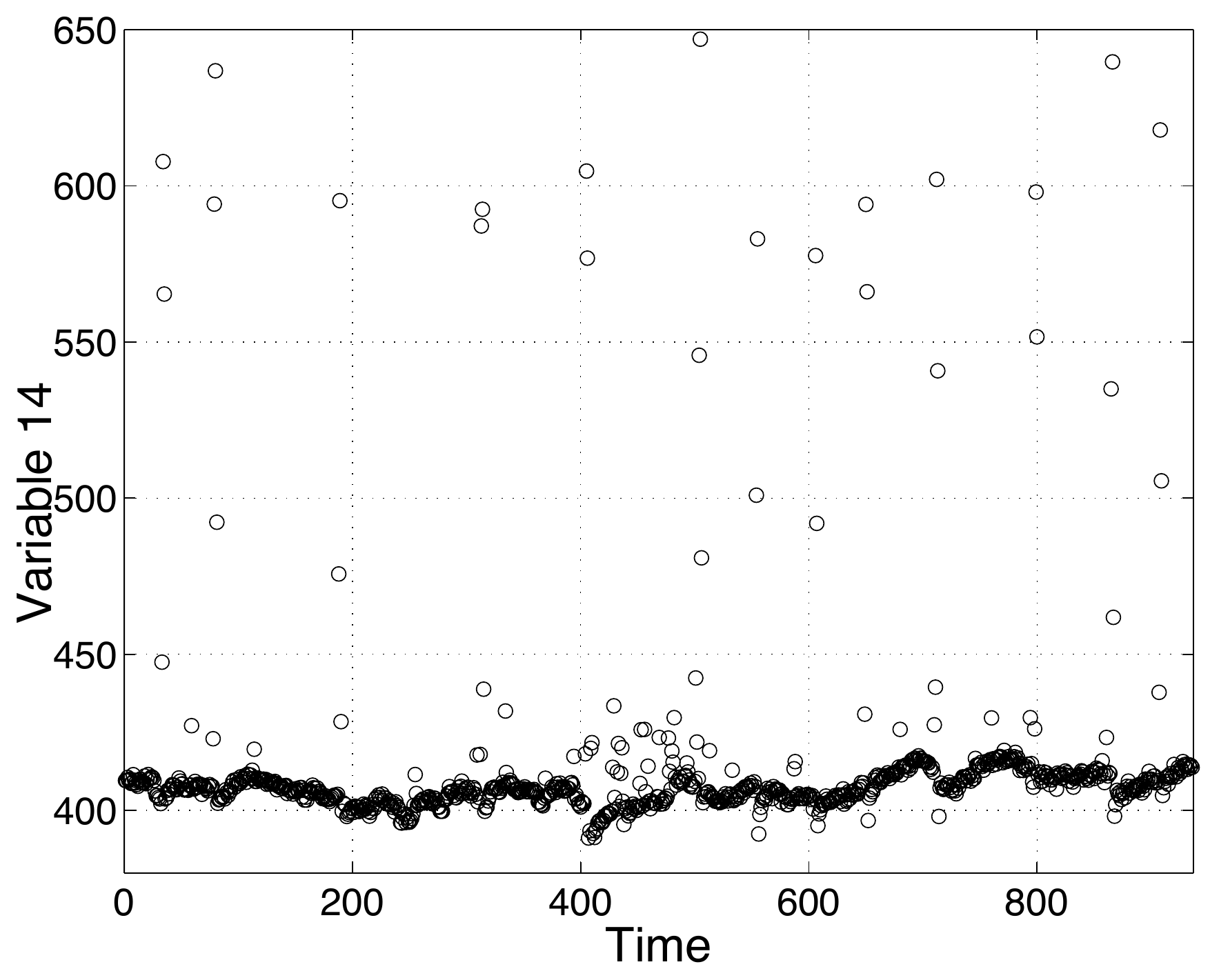}}
\hspace{.1in}
 { \includegraphics[scale=0.36]{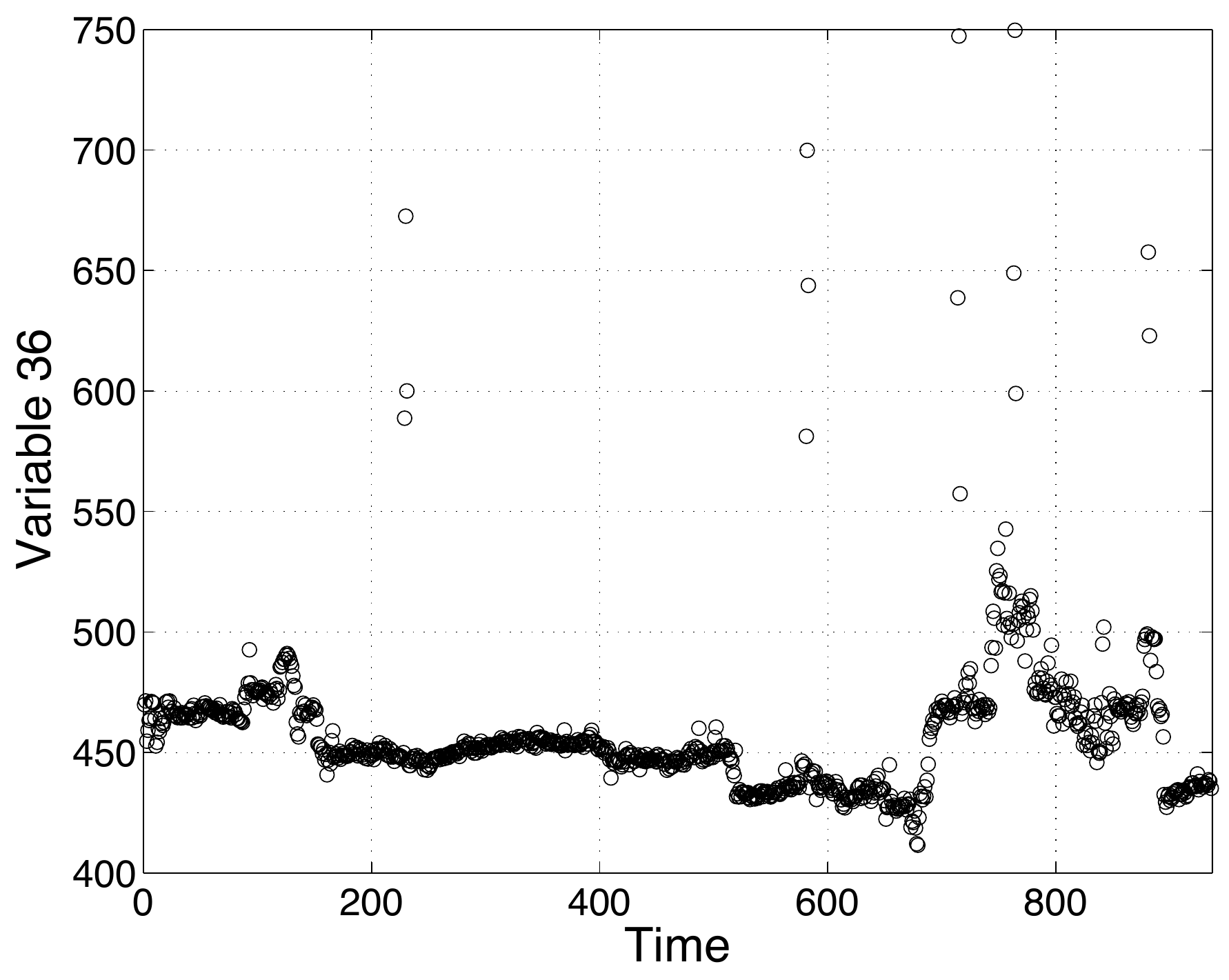}} \\
\hspace{.1in}
{\includegraphics[scale=0.36]{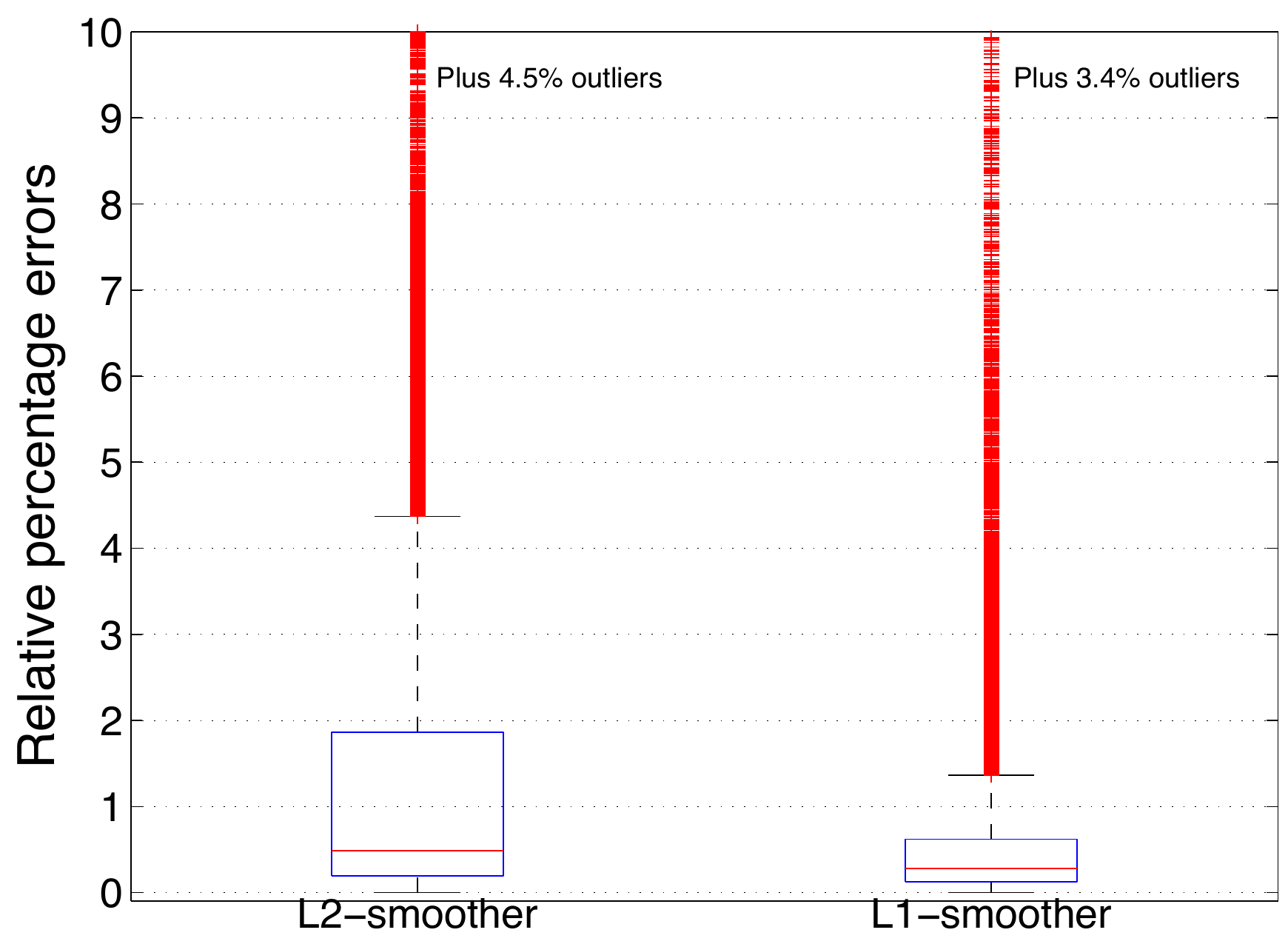}}
\hspace{.1in}
{\includegraphics[scale=0.36]{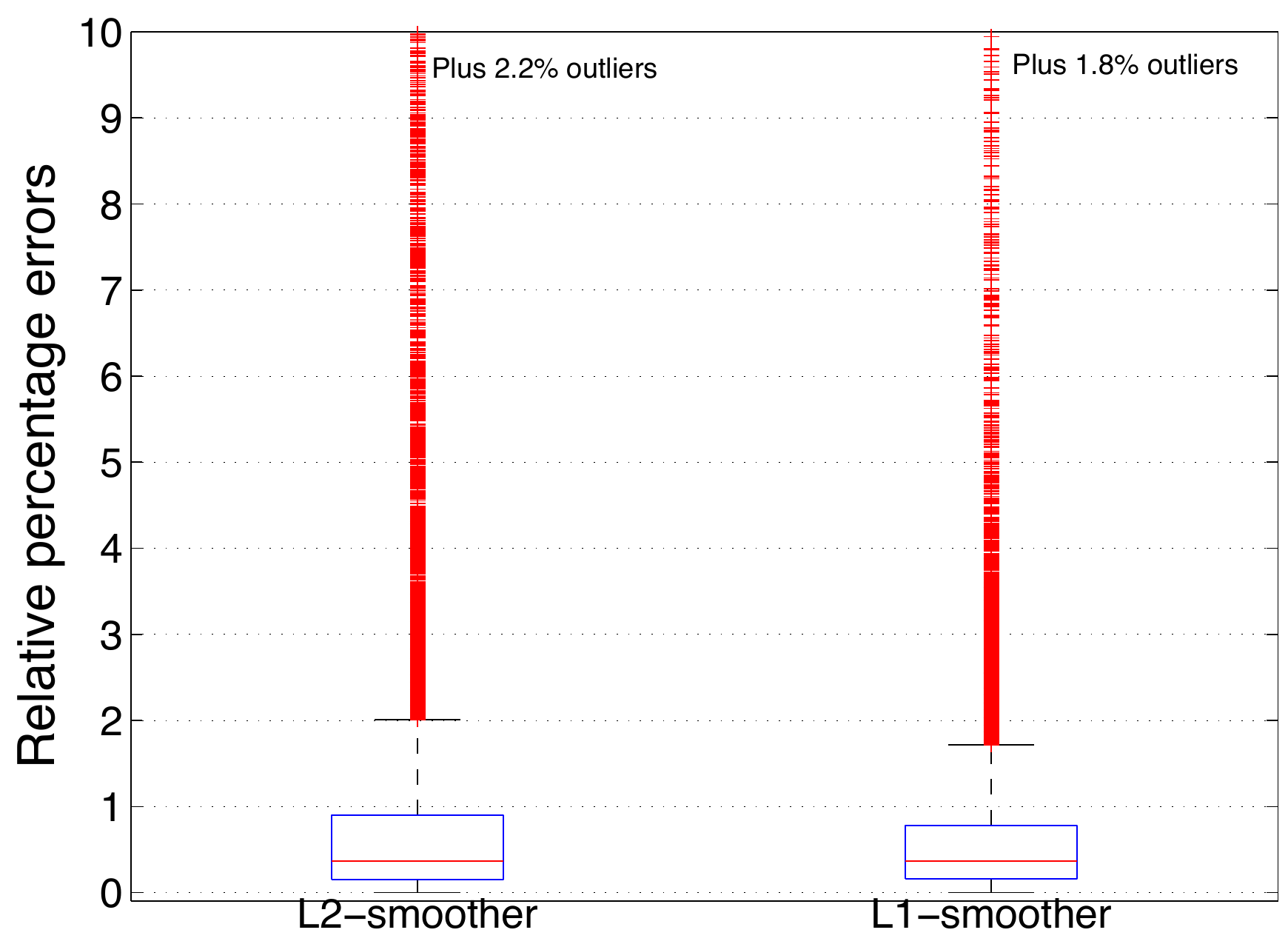}}
    \end{tabular}
    \caption{{\it{Left panels}}: data set for variable 14 (top) and relative percentage errors in the reconstruction of the test set obtained by  Kalman smoothers based on the $\ell_2$ and the $\ell_1$ loss (bottom). {\it{Right panels}}: data set for variable 36 (top) and relative percentage errors in the reconstruction of the test set obtained by  Kalman smoothers based on the $\ell_2$ and the $\ell_1$ loss (bottom).} \label{Real}
     \end{center}
\end{figure*}

%%%%%%%%%%%%%%%%%%%%%%%%%%%%%%%%%%%%%%%%%
\section{Conclusions}
\label{Conclusions}
%%%%%%%%%%%%%%%%%%%%%%%%%%%%%%%%%%%%%%%%%

\textcolor{black}{We have presented a new theory for robust and sparse estimation
using nonsmooth QS penalties. 
We give both primal and dual representations for these densities and show how
to obtain closed form expressions using Euclidean projections.
Using their dual representation, 
 we first derived conditions allowing the interpretation 
 of QS penalties as negative logs of
true probability densities, thus establishing 
  a statistical modeling framework.}
\textcolor{black}{In this regard, the coercivity condition
characterized in Th. \ref{coerciveRho} played a
central role. This  condition, necessary for the statistical 
interpretation, underscores the importance of 
an idea already useful in machine learning. 
Specifically, coercivity of the objective (\ref{probTwo}) is 
 a fundamental prerequisite in sparse and
robust estimation, as it precludes directions for which the sum of
the loss and the regularizer are insensitive to large parameter
changes. }Thus, the condition for a QS penalty to be a negative
log of a true density also ensures that the problem is well posed
in the machine learning context, i.e.  the learning machine 
has enough control over model complexity.\\
%%%%%%%%%%%%%%%%%%%%%%%%%%%%%%%%%%%%%%%
The QS class captures a variety of existing penalties when 
used either as a misfit measure or as a regularization functional. 
\textcolor{black}{
We have also shown how
to construct natural generalizations of these penalties within the QS class
that are based on general norm and cone
geometries.  Moreover, we show how the structure of these functions
can be understood through the use of Euclidean projections.}
\textcolor{black}{Moreover,
it is straightforward to use the presented results to design new formulations. 
Specifically, starting with the requisite {\it shape} of a new
penalty, one can use results of Section~\ref{PLQPTwo} to obtain a standardized
corresponding density, as well as the data structures $U, M, B, b$ required
to formulate and solve the optimization problem in Section~\ref{Optimization}. 
The statistical interpretation for these methods allows us to  
prescribe the mean and variance parameters of the corresponding model.}
\\
%%%%%%%%%%%%%%%%%%%%%%%%%%%%%%%%%%%%%%%
In the second part of the paper, we presented a broad computational
approach to solving estimation problems (\ref{probTwo}) 
using interior point methods.
\textcolor{black}
{In the process, we derived additional conditions that
guarantee the successful implementation of IP 
methods to compute the estimator (\ref{probTwo}) when $x$ and $v$ come from PLQ densities (a broad subclass of QS penalties), 
and provided a theorem characterizing the convergence of IP methods for this class. }
%\textcolor{green}{LET'S TALK ABOUT THE MEANING OF THE FOLLOWING SENTENCE.}
\textcolor{black}{
The key condition required for the successful execution of
IP iterations was a requirement on PLQ penalties to be finite
valued, which implies non-degeneracy of the corresponding
statistical distribution (the support cannot be contained in a
lower-dimensional subspace). The statistical interpretation is thus
strongly linked to the computational procedure.}\\
%%%%%%%%%%%%%%%%%%%%%%%%%%%%%%%%%%%%%%%%%%%%%%%%
\textcolor{black}{We applied both the statistical framework and the computational
approach to the broad class of state
estimation problems in discrete-time dynamic systems, 
extending the classical formulations to allow dynamics and measurement
noise to come from any PLQ densities. 
Moreover, we showed that the classical computational efficiency
results can be preserved when the general IP approach 
is used in the state estimation context; specifically, 
PLQ Kalman smoothing can always be performed with a number of operations
that is linear in the length of the time series, as in the quadratic
case.}
\textcolor{black}{ The computational framework presented therefore allows the broad
application of interior point methods to a wide class of smoothing
problems of interest to practitioners. The powerful algorithmic
scheme designed here, together with the breadth and significance of
the new statistical framework presented, underscores the practical
utility and flexibility of this approach. We believe that this
perspective on modeling, robust/sparse estimation and
Kalman smoothing will be useful in a number of applications in the
years ahead.}\\
%%%%%%%%%%%%%%%%%%%%%%%%%%%%%%%%%%%%%%%%%%%
\textcolor{black}{While we only considered convex formulations in this paper, it is important to note
that the presented approach makes it possible to solve a much broader class of non-convex
problems. In particular, if the functions $Hx$ and $Gx$ in~\eqref{probTwo} are replaced
by nonlinear functions $g(x)$ and $h(x)$, the methods in this paper can be used
to compute descent directions for the non-convex problem. }
For an example of this approach, see~\citep{Aravkin2011tac}, which considers non-convex 
Kalman smoothing problems with nonlinear process and measurement models
and solves by
using the standard methodology of convex composite optimization \cite{Burke85}.
As in the Gauss-Newton method, 
at each outer iteration the process and measurement models are linearized around the current
iterate, and the descent direction is found by solving 
a particular subproblem of type~\eqref{probTwo} using IP methods.\\
%%%%%%%%%%%%%%%%%%%%%%%%%%%%%%%%%%%%%%%%%%%
In many contexts, it would be useful to estimate the parameters 
that define QS penalties; for example the $\kappa$ in the Huber penalty
or the $\epsilon$ in the Vapnik penalty. In the numerical examples 
presented in this paper, we have relied on cross-validation to accomplish this task. 
An alternative method could be to compute the MAP points returned by our estimator 
for different filter parameters to gain information about the joint posterior of states and parameters.  
This strategy could help in designing a good proposal density for posterior simulation using 
e.g. particle smoothing filters \citep{Ristic}. 
We leave a detailed study of this approach to the QS modeling framework for future work.  
%however, much more practical methods of fitting nuisance parameters 
%in statistical formulations have recently been proposed~\citep{AravkinVanLeeuwen2012};
%and we leave a detailed study of this approach to the PLQ modeling framework for future work. 

%%%%%%%%%%%%%%%%%%%%%%%%%%%%%%%%%%%%%%%%%%%%%%%
\section{Appendix}
\label{Appendix}
%%%%%%%%%%%%%%%%%%%%%%%%%%%%%%%%%%%%%%%%%%%%%%%
%%
%%%%%%%%%%%%%%%%%%%%%%%%%%%%%%%%%%%%%%%%%%%%%%%%
%\subsection{Preliminaries}
%%%%%%%%%%%%%%%%%%%%%%%%%%%%%%%%%%%%%%%%%%%%%%%%
%
%We begin by supplementing subsection \ref{Prel} 
%with the definition and characterization of {\it lineality}, 
%a concept which will be useful in the rest of the Appendix.
%%
%\begin{definition}[lineality]
%\index{lineality} Define the lineality of convex cone
%$K$, denoted $\R{lin}(K)$, to be $K \cap - K$. Since $K$ is a convex
%cone, $\R{lin}(K)$ is the largest subspace contained in $K$.
%\end{definition}
%%
%\begin{lemma}[characterization of lineality]
%\label{linealityLemma} Let $K$ be any closed set containing the origin. Then
%$\R{lin}(K) = [\polar{K}]^{\perp}$ (see  \citep[Theorem 14.6]{RTR}).
%\end{lemma}
%%
%\begin{corollary}[characterization of $\R{aff}\;
%\polar{K}$]
%\label{affineHullCorollary}  Taking the perp of the characterization in Lemma
%\ref{linealityLemma}, the affine hull of the polar of a closed
%convex cone $K$ is given by $\R{aff} \; \polar{K} = \R{lin}(K)^\perp$.
%\end{corollary}
%
%%%%%%%%%%%%%%%%%%%%%%%%%%%%%%%%%%%%%%%%%%%%%%%
\subsection{Proof of Theorem \ref{domainCharTheorem}}
%%%%%%%%%%%%%%%%%%%%%%%%%%%%%%%%%%%%%%%%%%%%%%%
Let $\rho(y)=\rho(U,M,I,0;y)$ so that $\rho(U,M,B,b;y)=\rho(b+By)$. Then
$\dom{\rho(U,M,B,b;\cdot)}=B^{-1}(\dom{\rho}-b)$, hence the theorem follows if it
can be shown that  $\mathrm{bar}(U)+\Ran{M}\subset\dom{\rho}\subset [U^\infty\cap\R{null}(M)]^\circ$
with equality when $\mathrm{bar}(U)+\Ran{M}$ is closed. 
Observe that if there exists 
$w\in U^\infty\cap\R{null}(M)$ such that $\ip{y}{w}>0$, then trivially $\rho(y)=+\infty$ so 
$y\notin\dom{\rho}$. 
Consequently, $\dom{\rho}\subset [U^\infty\cap\R{null}(M)]^\circ$.
Next let $y\in \mathrm{bar}(U)+\Ran{M}$, then there is a $v\in \mathrm{bar}(U)$ and $w$ such that
$y=v+Mw$. Hence
\begin{eqnarray*}
\sup_{u\in U}[\ip{u}{y}-\half\ip{u}{Mu}]
&=&
\sup_{u\in U}[\ip{u}{v+Mw}-\half\ip{u}{Mu}]
\\ &=&
\sup_{u\in U}[\ip{u}{v}+\half w^TMw-\half(w-u)^TM(w-u)]
\\ &\le &
\support{v}{U} +\half w^TMw \ <\ \infty\ .
\end{eqnarray*}
Hence $\mathrm{bar}(U)+\Ran{M}\subset \dom{\rho}$.

If the set $\mathrm{bar}(U)+\Ran{M}$ is closed, then so is the set $\mathrm{bar}(U)$.
Therefore, by \cite[Corollary 14.2.1]{RTR}, $(U^\infty)^\circ =\mathrm{bar}(U)$, and, by
\cite[Corollary 16.4.2]{RTR}, $[U^\infty\cap\R{null}(M)]^\circ=\mathrm{bar}(U)+\Ran{M}$,
which proves the result.

%Next we assume that the set $\mathrm{bar}(U)+\Ran{M}$ is closed and prove the opposite inclusion.
%The closure assumption combined with \cite[Corollaries 14.2.1 and 16.4.2]{RTR}
%implies that $[U^\infty\cap\R{null}(M)]^\circ=\mathrm{bar}(U)+\Ran{M}$. Given 
%$y\in \mathrm{bar}(U)+\Ran{M}$, there is a $v\in \mathrm{bar}(U)$ and $w$ such that
%$y=v+Mw$. Hence
%\begin{eqnarray*}
%\sup_{u\in U}[\ip{u}{y}-\half\ip{u}{Mu}]
%&=&
%\sup_{u\in U}[\ip{u}{v+Mw}-\half\ip{u}{Mu}]
%\\ &=&
%\sup_{u\in U}[\ip{u}{v}+\half w^TMw-\half(w-u)^TM(w-u)]
%\\ &\le &
%\support{v}{U} +\half w^TMw \ <\ \infty\ .
%\end{eqnarray*}
%Consequently, $y\in\dom{\rho}$, so that $\dom{\rho}= [U^\infty\cap\R{null}(M)]^\circ$.

The polyhedral case $\mathrm{bar}(U)$ is a polyhedral convex set, and the sum of
such sets is also a polyhedral convex set \cite[Corollary 19.3.2]{RTR}.
\subsection{Proof of Theorem \ref{QS representations}}
%%%%%%%%%%%%%%%%%%%%%%%%%%%%%%%%%%%%%%%%%%%%%%%
To see the first equation in \eqref{QS primal} write
\(
\rho(y)=\sup_{u}\left[\ip{y}{u}-\left(\half\Vert L^Tu\Vert_2^2+\indicator{u}{U}\right)\right]\ ,
\)
and then apply the calculus of convex conjugate functions \cite[Section 16]{RTR} to find that
\[
\left(\half\Vert L^T\cdot\Vert_2^2+\indicator{\cdot}{U}\right)^*(y)=
\inf_{s\in\mB{R}^k}\left[ \half\Vert s\Vert^2_2+\support{y-Ls}{U}\right]\ .
\]
The second equivalence in \eqref{QS primal} follows from \cite[Theorem 14.5]{RTR}.

For the remainder, we assume that $M$ is positive definite.
In this case it is easily shown that $(MU)^\circ=M^{-1}U^\circ$. Hence, by
\cite[Theorem 14.5]{RTR}, $\gauge{\cdot}{MU}=\support{\cdot}{M^{-1}U^\circ}$.
We use these facts freely throughout the proof.

The formula \eqref{QS primal M} follows by observing that
\[
\half\Vert s\Vert^2_2+\support{y-Ls}{U}=
\half \Vert L^{-T}s\Vert^2_M+\support{M^{-1}y-L^{-T}s}{MU}
\]
and then making the substitution $v=L^{-T}s$.
To see \eqref{QS primal M 2}, note that the optimality conditions
for \eqref{QS primal M} are $Ms\in\partial \support{M^{-1}y-s}{MU}$, or equivalently,
$M^{-1}y-s\in\ncone{Ms}{MU}$, i.e. $s\in U$ and
\[
\ip{M^{-1}y-s}{u-s}_M=\ip{M^{-1}y-s}{M(u-s)}\le 0\ \forall\ u\in U,
\]
which, by \eqref{proj oc}, tells us that $s=\proj{M^{-1}y}{U}{M}$.
Plugging this into \eqref{QS primal M} gives \eqref{QS primal M 2}.

Using the substitution $v=Ls$, 
the argument showing \eqref{QS primal M inv} and \eqref{QS primal M inv 2}
differs only slightly from that for \eqref{QS primal M} and \eqref{QS primal M}
and so is omitted.

The formula \eqref{QS U proj} follows by completing the square in the $M$-norm in 
the definition \eqref{PLQpenalty}:
\begin{eqnarray*}
\ip{y}{u}-\half \ip{u}{Mu}
&=&
\ip{M^{-1}y}{u}_M-\half \ip{u}{u}_M
\\ &=&
\half y^TM^{-1}y-\half[\ip{M^{-1}y}{M^{-1}y}_M-2\ip{M^{-1}y}{u}_M + \ip{u}{u}_M]
\\ &=&
\half y^TM^{-1}y-\half\| M^{-1}y-u\|_M^2\ .
\end{eqnarray*}
The result as well as \eqref{QS U proj 2} now follow from Theorem \ref{projection theorem}.
Both \eqref{QS MU proj} and \eqref{QS MU proj 2} follow similary by completing the
square in the $M^{-1}$-norm. 

%%%%%%%%%%%%%%%%%%%%%%%%%%%%%%%%%%%%%%%%%%%%%%%
\subsection{Proof of Theorem \ref{PLQIntegrability}}
%%%%%%%%%%%%%%%%%%%%%%%%%%%%%%%%%%%%%%%%%%%%%%%

First we will show that if $\rho$ is convex coercive, 
then for any $\bar x \in \argmin f \neq \emptyset$, 
there exist constants $R$ and $K > 0$ such that
\begin{equation}
\label{boundRho}
\rho(x) \geq \rho(\bar x) + K \|x - \bar x\| \quad \forall \; x \notin R\mB{B}\;.
\end{equation}

Without loss of generality, we can assume that $0 = \rho(0) = \inf \rho$. 
Otherwise, replace $\rho(x)$ by $\hat \rho (x) = \rho(x + \bar x) - \rho(\bar x)$, 
where $\bar x$ is any global minimizer of $\rho$.

Let $\alpha > 0$. Since $\rho$ is coercive, there exists $R$ such that $\lev{\rho}{\alpha} \subset R\mB{B}$. 
We will show that $\frac{\alpha}{R}\|x\| \leq \rho(x)$ for all $x \notin R\mB{B}$. 

Indeed, for all $x\ne 0$, we have $\rho(\frac{R}{\|x\|}x) \geq \alpha$. Therefore,
if  $x \notin R\mB{B}$, then $0 < \frac{R}{\|x\|} < 1$, and  we have 
\[
\frac{\alpha}{R}\|x\| \leq \frac{\|x\|}{R}\rho\left(\frac{R}{\|x\|}x\right) \leq \frac{\|x\|}{R}\frac{R}{\|x\|} \rho(x) = \rho(x).
\] 

Then by~\eqref{boundRho},
\[
\begin{aligned}
\int \exp(-\rho(x)) dx &= \int_{\bar x + R\mB{B}} \exp(-\rho(x)) dx  + \int_{\|x- \bar x\| > R}  \exp(-\rho(x)) dx\\
& \leq C_1 +  C_2\int_{\|x- \bar x\| > R}  \exp(-K\|x - \bar x\|) dx < \infty\;.
\end{aligned}
\]

\subsection{Proof of Theorem \ref{coerciveRho}}
%%%%%%%%%%%%%%%%%%%%%%%%%%%%%%%%%%%%%%%%%%%%%%%

First observe that $B^{-1}\polar{[\mathrm{cone}(U)]} =
\polar{[B^\R{T}\mathrm{cone}(U)]}$ by \citep[Corollary 16.3.2]{RTR}.

Suppose that $\hat y \in B^{-1} \polar{[\R{cone}(U)]}$, and $\hat y
\neq 0$. Then $B\hat y \in \mathrm{cone}(U)$, and $B\hat y \neq 0$
since $B$ is injective, and we have
\[
\begin{array}{lll}
\rho(t \hat y) &=& \sup_{u \in U} \langle b + t B \hat y,
u\rangle -
\frac{1}{2}u^\R{T}M u  \\
&=& \sup_{u \in U} \langle b , u\rangle - \frac{1}{2}u^\R{T}M u +
t  \langle B\hat y, u \rangle
\\
&\leq & \sup_{u \in U} \langle b , u\rangle -
\frac{1}{2}u^\R{T}M u \\
&\leq & \rho(U, M, 0, I; b),
\end{array}
\]
so $\rho(t \hat y)$ stays bounded even as $t \rightarrow
\infty$, and so $\rho$ cannot be coercive.

Conversely, suppose that $\rho$ is not coercive. Then we can find a
sequence $\{y_k\}$ with $\|y_k\| > k$ and a constant $P$ so that
$\rho(y_k) \leq P$ for all $k > 0$. Without loss of generality, we
may assume that $\frac{y_k}{\|y_k\|}\rightarrow \bar y$.

Then by definition of $\rho$, we have for all $u \in U$
\[
\begin{array}{lll}
&\langle b + By_k, u \rangle - \frac{1}{2}u^\R{T}Mu \leq P\\
& \langle b + By_k, u \rangle \leq P + \frac{1}{2}u^\R{T} M u\\
& \langle \frac{b + By_k}{\|y_k\|}, u \rangle \leq \frac{P}{\|y_k\|}
+ \frac{1}{2\|y_k\|}u^\R{T} M u
\end{array}
\]
Note that $\bar y \neq 0$, so $B \bar y \neq 0$. When we take the
limit as $k \rightarrow \infty$, we get $\langle B\bar y, u \rangle
\leq 0$. From this inequality we see that $B \bar y \in \polar{[\R{cone}(U)]}$, 
and so $\bar y \in B^{-1}\polar{[\R{cone}(U)]}$.

\subsection{Proof of Theorem \ref{IPMtheorem}}
\begin{proof}
%An interior point approach applies damped Newton iterations to a relaxed
%version of \ref{fullKKT}:
%\begin{equation}
%\label{rKKT}
%F_\gamma(s, q, u, y)
%=
%\begin{bmatrix}
%s + A^\R{T}u - a \\
%D(q)D(s)\B{1} - \gamma \B{1}\\
%By - Mu - Aq \\
%B^\R{T}u + b
%\end{bmatrix}\;,
%\end{equation}
%where $D(q)$ and $D(s)$ are diagonal matrices 
%with vectors $q, s$ on the diagonal. 
%The relaxation parameter $\gamma$ is driven aggressively to $0$ 
%as the method proceeds. 
%Every Newton iteration solves
%\[
%F_\gamma^{(1)} 
%\begin{bmatrix} \Delta s \\ \Delta q \\ \Delta u \\ \Delta y
%\end{bmatrix}
%= -F_\gamma(s, q, u, y)\;,
%\]
%where
(i) %We have
Using standard elementary row operations, reduce the matrix
\begin{equation}
\label{rKKTder}
F_\gamma^{(1)}
:=
\begin{bmatrix}
I & 0 & A^\R{T} & 0\\
D(q) & D(s) & 0 & 0 \\
0 & -A & -M & B\\
0 & 0 & B^\R{T} & 0
\end{bmatrix}\;
%\Longrightarrow
\end{equation}
to
\[
\begin{bmatrix}
I & 0 & A^\R{T} & 0\\
0 & D(s) & -D(q)A^\R{T} & 0 \\
0 & 0 & -T & B\\
0 & 0 & B^\R{T} & 0
\end{bmatrix}\;,
\]
where
$T = M + AD(q)D(s)^{-1}A^\R{T}$.
The matrix $T$ is invertible since
$\R{null}(M)\cap \R{null}(C^\R{T}) =\{0\}$.
Hence, we can further reduce this matrix to the block upper triangular form
%the row operation
%\[
%\R{row}_4 \gets \R{row}_4 + B^\R{T}T^{-1}B
%\]
%yields the block upper triangular form
\[
\begin{bmatrix}
I & 0 & A^\R{T} & 0\\
0 & D(s) & -D(q)C^\R{T} & 0 \\
0 & 0 & -T & B\\
0 & 0 & 0 & -B^\R{T} T^{-1}B
\end{bmatrix}\;.
\]
Since $B$ is injective, the matrix $B^\R{T} T^{-1}B$ is also invertible. 
Hence this final block upper triangular is invertible proving
Part (i).
\smallskip

\noindent
(ii) Let $(s,q)\in\widehat{\mathcal{F}}_+$ and choose $(u_i,y_i)$ so that
$(s,q,u_i,y_i)\in \mathcal{F}_+$ for $i=1,2$. Set $u:=u_1-u_2$ and $y:=y_1-y_2$.
Then, by definition, 
\begin{equation}\label{null space contradiction}
0=A^\R{T}u,\ 0=By-Mu,\mbox{ and }0=B^\R{T}u\ .
\end{equation}
Multiplying the second of these equations on the left by $u$ and utilizing
the third as well as the positive semi-definiteness of $M$, we find that
$Mu=0$. Hence, $u\in \R{null}(M)\cap \R{null}(A^\R{T}) =\{0\}$, and so $By=0$.
But then $y=0$ as $B$ is injective.
\smallskip

\noindent
(iii) 
Let $(\hat s,\hat q,\hat u,\hat y)\in\mathcal{F}_+$ and 
$(s,q,u, y)\in\mathcal{F}_+(\tau)$. Then, by \eqref{fullKKT},
\begin{eqnarray*}
(s-\hat s)^\R{T}(q-\hat q)&=& [(a-A^\R{T}u)-(a-A^\R{T}\hat u)]^\R{T}(q-\hat q)\\
&=&(\hat u-u)^\R{T}(Aq-A\hat q)\\
&=&(\hat u-u)^\R{T}[(b+By-Mu)-(b+B\hat b-M\hat u)]\\
&=&(\hat u-u)^\R{T}M(\hat u-u)\\
&\ge&0.
\end{eqnarray*}
Hence,
\[
\tau+\hat s^\R{T}\hat q\ge s^\R{T}y+\hat s^\R{T}\hat q\ge s^\R{T}\hat y+y^\R{T}\hat s\ge \xi\norm{(s,q)}_1,
\]
where $\xi=\min\set{\hat s_i,\ \hat q_i}{i=1,\dots,\ell}>0$.
Therefore, the set
\[
\widehat{\mathcal{F}}_+(\tau)=\set{(s,q)}{(s,q,u,y)\in \mathcal{F}_+(\tau)}
\] 
is bounded.
Now suppose the set $\mathcal{F}_+(\tau)$ is not bounded. Then there exits
a sequence $\{(s_\nu,q_\nu,u_\nu,y_\nu)\}\subset \mathcal{F}_+(\tau)$ such that
$\norm{(s_\nu,q_\nu,u_\nu,y_\nu)}\uparrow +\infty$. Since $\widehat{\mathcal{F}}_+(\tau)$
is bounded, we can assume that $\norm{(u_\nu,y_\nu)}\uparrow +\infty$ while
$\norm{(s_\nu,q_\nu)}$ remains bounded. With no loss in generality, we may assume
that there exits $(u,y)\ne (0,0)$ such that $ (u_\nu,y_\nu)/\norm{(u_\nu,y_\nu)}\rightarrow (u,y)$.
By dividing \eqref{fullKKT} by $\norm{(u_\nu,y_\nu)}$ and taking the limit, we find that
\eqref{null space contradiction} holds. But then, as in \eqref{null space contradiction},
$(u,y)=(0,0)$. This contradiction yields the result.
\smallskip

\noindent
(iv) 
We first show existence.
This follows from a standard continuation argument.
Let $(\hat s,\hat q,\hat u,\hat y)\in\mathcal{F}_+$ and $v\in\mB{R}^\ell_{++}$. 
Define
\begin{equation}
\label{homotopy F}
F(s, q, u, y,t)
=
\begin{bmatrix}
s + A^\R{T}u - a \\
D(q)D(s)\B{1} - [(1-t) \hat v +tv]\\
By - Mu - Aq \\
B^\R{T}u + b
\end{bmatrix}\; ,
\end{equation}
where $\hat g:=(\hat s_1\hat y_1,\dots,\hat s_\ell\hat y_\ell)^\R{T}$.
Note that 
\[F(\hat s,\hat q,\hat u,\hat y,0)=0\mbox{ and, by Part (i), }
\nabla_{(s, q, u, y)}F(\hat s,\hat q,\hat u,\hat y,0)^{-1}\mbox{ exists}.
\] 
The Implicit Function Theorem implies that there is a $\tilde t>0$ and a 
differentiable mapping $t\mapsto (s(t), q(t), u(t), y(t))$ on $[0,\tilde t)$
such that 
\[
F[s(t), q(t), u(t), y(t),t]=0\mbox{ on }[0,\tilde t).
\] 
Let
$\bar t>0$ be the largest such $\tilde t$ on $[0,1]$. 
Since
\[
\set{[s(t), q(t), u(t), y(t)]}{t\in [0,\bar t)}\subset\mathcal{F}_+(\bar \tau),
\]
where $\bar\tau=\max\{\B{1}^\R{T}\hat g,\B{1}^\R{T}g\}$, 
Part (iii) implies that there is a sequence $t_i\rightarrow \bar t$ and a point
$(\bar s, \bar q, \bar u, \bar y)$ such that 
$[s(t_i), q(t_i), u(t_i), y(t_i)]\rightarrow (\bar s, \bar q, \bar u, \bar y)$.
By continuity $F(\bar s, \bar q, \bar u, \bar y,\bar t)=0$. If $\bar t=1$, we are done;
otherwise, apply the Implicit Function Theorem again at 
$(\bar s, \bar q, \bar u, \bar y,\bar t)$ to obtain a contradiction to the 
maximality of $\bar t$.

We now show uniqueness. By Part (ii), we need only establish the uniqueness of $(s,q)$.
Let $(s^\nu,q^\nu)\in \widehat{\mathcal{F}}_+$ be such that 
$g=(s_{j(1)}q_{j(1)},s_{j(2)}q_{j(2)},\dots,s_{j(\ell)} q_{j(\ell)})^\R{T}$, where 
$s_{j(i)}$ denotes the $i$th element of $s_j$, and $j=1,2$.
As in Part (iii), we have $(s_1-s_2)^\R{T}(q_1-q_2)=(u_1-u_2)^\R{T}M((u_1-u_2)\ge 0$, and,
for each $i=1,\dots,\ell$, $s_{1(i)}q_{1(i)}=s_{2(i)}q_{2(i)}=g_i>0$. If $(s_1,q_1)\ne(s_2,q_2)$,
then, for some $i\in\{1,\dots,\ell\}$, $(s_{1(i)}-s_{2(i)})(q_{1(i)}-q_{2(i)})\ge 0$ and 
either $s_{1(i)}\ne s_{2(i)}$ or $q_{1(i)}\ne q_{2(i)}$. If $s_{1(i)}>s_{2(i)}$, then $q_{1(i)}\ge q_{2(i)}>0$
so that $g_i=s_{1(i)}q_{1(i)}>s_{2(i)}q_{2(i)}=g_i$, a contradiction.
So with out loss in generality (by exchanging $(s_1,q_1)$ with $(s_2,q_2)$ if
necessary), we must have $q_{1(i)}>q_{2(i)}$. But then $s_{1(i)}\ge s_{2(i)}>0$, so that again
$g_i=s_{1(i)}q_{1(i)}>s_{2(i)}q_{2(i)}=g_i$, and again a contradiction. Therefore, $(s,q)$ is unique.
\smallskip

\noindent
(v) Apply Part (iv) to get a point on the central path and then use the continuation 
argument to trace out the central path. The differentiability follows from the implicit function
theorem.
\smallskip

\noindent
(vi) Part (iii) allows us to apply a standard compactness argument to get the existence of
cluster points and the continuity of $F_\gamma(s,q,u,y)$ in all of its arguments including
$\gamma$ implies that all of these cluster points solve \eqref{fullKKT}.
%so that
%$\mathcal{F}_+(\hat \tau)\ne\emptyset$ where 
%$\hat\tau:=\hat s^\R{T}\hat q/\nu>0$.
%We now show for every $\tau>0$ for which $\mathcal{F}_+(\tau)\ne\emptyset$,
%we must have that $\mathcal{F}_+(\tau)$ is bounded. 
%
%Let $\tau>0$ be such that $\mathcal{F}_+(\tau)\ne\emptyset$. We have already seen
%that any $\tau > \hat\tau$ assures the non-emptiness of $\mathcal{F}_+(\tau)$.
%Consider the function
\end{proof}

%%%%%%%%%%%%%%%%%%%%%%%%%%%%%%%%%%%%%%%%%%%%%%%
\subsection{Details for Remark \ref{propKS}}
%%%%%%%%%%%%%%%%%%%%%%%%%%%%%%%%%%%%%%%%%%%%%%%

The Lagrangian for (\ref{PLQsubproblem})
for feasible $(x, u_w, u_v)$ is
\begin{equation}
\label{PLQLagrangian}
\begin{aligned}
 L(x, u_w, u_v) &= \left\langle
\begin{bmatrix}
\tilde b_w
\\ \tilde b_v
\end{bmatrix},
\begin{bmatrix}
u_w
\\ u_v
\end{bmatrix}
\right\rangle - \frac{1}{2}
\begin{bmatrix}
u_w \\
u_v
\end{bmatrix}^\R{T}
\begin{bmatrix}
M_w & 0 \\
0 & M_v
\end{bmatrix}
\begin{bmatrix}
u_w \\
u_v
\end{bmatrix}
- \left\langle
\begin{bmatrix}
u_w\\
u_v
\end{bmatrix}\;,
\begin{bmatrix}
- B_wQ^{-1/2}G \\
B_vR^{-1/2}H
\end{bmatrix}
x \right\rangle\;
\end{aligned}
\end{equation}
where $\tilde b_w = b_w - B_wQ^{-1/2}\tilde x_0$ and
$\tilde b_v = b_v - B_vR^{-1/2}z$.
The associated optimality conditions for feasible $(x, u_w, u_v)$
are given by
\begin{equation}
\label{PLQOptimalityConditions}
\begin{array}{lll}
&G^\R{T}Q^{-\R{T}/2}B_w^\R{T}\bar u_w - H^\R{T}R^{-\R{T}/2}B_v^\R{T}\bar u_v  = 0\\
&\tilde b_w - M_w \bar{u}_w + B_w Q^{-1/2}G\bar{x} \in N_{U_w}(\bar{u}_w)\\
&\tilde b_v - M_v \bar{u}_v - B_v R^{-1/2}H\bar{x} \in
N_{U_v}(\bar{u}_v)\;,
\end{array}
\end{equation}
where $N_C(r)$ denotes the normal cone to the set $C$ at the point $r$
(see \citep{RTR} for details).

Since $U_w$ and $U_v$ are polyhedral, we can derive
explicit representations of the normal cones $N_{U_w}(\bar u_w)$ and $N_{U_v}(\bar u_v)$.
For a polyhedral set $U \subset \mB{R}^m$
and any point $\bar{u} \in U$, the normal cone $N_U(\bar{u})$ is
polyhedral. Indeed, relative to any representation
\[
U = \{u|A^\R{T}u \leq a\}
\]
and the active index set \( I(\bar{u}) := \{i| \langle A_{i},
\bar{u}\rangle = a_i\} \), where $A_i$ denotes the $i$th column of
$A$, we have
\begin{equation}
\label{NormalRep}
N_U(\bar{u}) = \left\{
\begin{aligned}
q_1A_1 + \dots + q_mA_m\;  | \;q_i \geq 0\;
\R{for} \; & i\in I(\bar{u})\\
 q_i = 0 \; \R{for}\;  & i \not\in I(\bar{u})
\end{aligned}
 \right\} .
\end{equation}
Using~\eqref{NormalRep},
 Then  we may rewrite the optimality conditions
(\ref{PLQOptimalityConditions}) more explicitly as
\begin{equation}
\label{PLQExpandedConditions}
\begin{aligned}
&G^\R{T}Q^{-\R{T}/2}B_w^\R{T}\bar u_w -
H^\R{T}R^{-\R{T}/2}B_v^\R{T}\bar u_v  = 0\\
& \tilde b_w - M_w \bar{u}_w + B_w Q^{-1/2}G\bar{d} = A_wq_w\\
&\tilde b_v - M_v \bar{u}_v - B_v R^{-1/2}H\bar{d} = A_v q_v \\
&\{q_v \geq 0 | q_{v(i)} = 0\; \R{for}\; i \not\in I(\bar u_v)\}\\
&\{q^w \geq 0 | q_{w(i)} = 0\; \R{for}\; i \not\in I(\bar u_w)\}\;\\
\end{aligned}
\end{equation}
where $q_{v(i)}$ and $q_{w(i)}$ denote the $i$th elements of $q_v$ and $q_w$.
Define slack variables $s_w \geq 0$ and $s_v \geq 0$ as follows:
\begin{equation}
\label{slack}
\begin{array}{lll}
s_w &=&  a_w - A_w^\R{T}u_w\\
s_v &=&  a_v - A_v^\R{T}u_v .
\end{array}
\end{equation}
Note that we know the entries of $q_{w(i)}$ and $q_{v(i)}$ are zero if and
only if the corresponding slack variables $s_{v(i)}$ and $s_{w(i)}$ are
nonzero, respectively. Then we have $q_w^\R{T}s_w = q_v^\R{T}s_v
= 0$. These equations are known as the complementarity conditions.
Together, all of these equations give system~\eqref{PLQFinalConditions}.
%

%%%%%%%%%%%%%%%%%%%%%%%%%%%%%%%%%%%%%%%%%%%%%%%%%%
\subsection{Proof of Theorem \ref{thmPLQsmoother}}
IP methods apply a damped Newton iteration to
find the solution of the  relaxed KKT system $F_{\gamma} = 0$,
where
\[
F_{\gamma} \left(
\begin{matrix}
s_w\\
s_v\\
q_w\\
q_v\\
u_w\\
u_v\\
x
\end{matrix}
\right) =
\begin{bmatrix}
A_w^\R{T}u_w + s_w - a_w\\
A_v^\R{T}u_v + s_v - a_v\\
D(q_w)D(s_w)\B{1} - \gamma\B{1}\\
D(q_v)D(s_v)\B{1} - \gamma\B{1}\\
\tilde b_w + B_w Q^{-1/2}Gd -  M_w u_w - A_wq_w\\
\tilde b_v - B_v R^{-1/2}Hd - M_v u_v -  A_v q_v\\
G^\R{T}Q^{-\R{T}/2}B_w^\R{T} u_w -
H^\R{T}R^{-\R{T}/2}B_v^\R{T}\bar u_v
\end{bmatrix}.
\]
This entails solving the system
\begin{equation}
\label{NewtonSystem} F_{\gamma}^{(1)} \left(
\begin{matrix}
s_w\\
s_v\\
q_w\\
q_v\\
u_w\\
u_v\\
x
\end{matrix}
\right)
\begin{bmatrix}
\Delta s_w\\
\Delta s_v\\
\Delta q_w\\
\Delta q_v\\
\Delta u_w\\
\Delta u_v\\
\Delta x
\end{bmatrix}
= -F_{\gamma} \left(
\begin{matrix}
s_w\\
s_v\\
q_w\\
q_v\\
u_w\\
u_v\\
x
\end{matrix}
\right),
\end{equation}
where  the derivative matrix $F_{\gamma}^{(1)}$ is given by
\begin{equation}
\label{Fprime}
\begin{bmatrix}
I & 0 & 0 & 0 & (A_w)^\R{T} & 0 & 0\\
0 & I & 0 & 0 & 0 & (A_v)^\R{T} & 0\\
D(q_w) & 0 & D(s_w) & 0 & 0 & 0 & 0\\
0 & D(q_v) & 0 & D(s_v) & 0 & 0 & 0 \\
0 & 0 & - A_w & 0 & -M_w & 0 & B_wQ^{-1/2}G \\
0 & 0 & 0 & -A_v & 0 & -M_v & -B_vR^{-1/2}H \\
0 & 0 & 0 & 0 & G^\R{T}Q^{-\R{T}/2}B_w^\R{T} &
-H^\R{T}R^{-\R{T}/2}B_v^\R{T} & 0
\end{bmatrix}
\end{equation}
We now show the row operations necessary to reduce the matrix
$F_{\gamma}^{(1)}$ in (\ref{Fprime})
to upper block triangular form. After each
operation, we show only the row that was modified.
\begin{equation*}
\begin{array}{lll}
&\R{row}_3 \leftarrow \R{row}_3 - D(q_w)\;\R{row}_1\\
&\begin{bmatrix}
0 & 0 & D(s_w) & 0 & -D(q_w)A_w^\R{T} & 0 & 0\\
\end{bmatrix}\\
&\R{row}_4 \leftarrow \R{row}_4 - D(q_v)\;\R{row}_2 \\
&\begin{bmatrix}
0 & 0 & 0 & D(s_v) & 0 & -D(q_v)A_v^\R{T} & 0 \\
\end{bmatrix}\\
&\R{row}_5 \leftarrow \R{row}_5 + A_wD(s_w)^{-1}\;\R{row}_3\\
&\begin{bmatrix}
0 & 0 & 0 & 0 & -T_w & 0 & B_wQ^{-1/2}G \\
\end{bmatrix}\\
&\R{row}_6 \leftarrow \R{row}_6 + A_vD(s_v)^{-1}\;\R{row}_4 \\
&\begin{bmatrix}
0 & 0 & 0 & 0 & 0 & -T_v  & -B_vR^{-1/2}H
\end{bmatrix}\;.
\end{array}
\end{equation*}
In the above expressions,
\begin{equation}
\label{KalmanT}
\begin{aligned}
T_w &:=  M_w + A_wD(s_w)^{-1}D(q_w)A_w^\R{T}\\
T_v &:= M_v + A_vD(s_v)^{-1}D(q_v)A_v^\R{T}\;,
\end{aligned}
\end{equation}
where $D(s_w)^{-1}D(q_w)$ and $D(s_v)^{-1}D(q_v)$ are always
full-rank diagonal matrices, since the vectors $s_w, q_w, s_v, q_v$.
Matrices $T_w$ and $T_v$ are invertible as long as the PLQ densities for $w$
and $v$ satisfy~\eqref{NullProperty}.

%
%\begin{lemma}
%\label{TInvertibility} (Invertibility of $T$) Let $\rho(U,
%M, 0, I; \cdot)$ be any PLQ penalty on $\mB{R}^k$, with $U = \{u \Big| A^\R{T}u \leq
%a\}$. Let $T_\rho := M + ADA^\R{T}$, where $D$ is any diagonal
%$k\times k$ matrix with positive entries on the diagonal.
%Then $T_\rho$ is invertible if and only if $\R{null}(M)\cap \R{null}(A^\R{T}) =\{0\}$. 
%%or $\R{dom}(\rho(U, M, 0, I; \cdot))$ is $\mB{R}^k$.
%\end{lemma}
%%
%\begin{proof}
%Note that
%\[
%\begin{array}{lll}
%\R{null}(M + ADA^\R{T}) &=&
%\{w \Big| w^\R{T}Mw + w^\R{T}ADA^\R{T}w = 0\}\\
%&=&
%\{w \Big| w \in \R{null}(M)\;,\; w\in \R{null}(A^\R{T})\}\\
%&=& \R{null}(M) \cap \R{null}(A^\R{T}).
%\end{array}
%\]
%%The first claim now follows from the fact that $U^{\infty} =
%%\R{null}(A^\R{T})$.
%%Recall that the effective domain of $\rho_{U, M, 0, I}$ is given by
%%$\polar{(\R{null}(M)\cap \horizon{U})}$, and $\polar{0} = \mB{R}^k$.
%\end{proof}
%
\begin{remark}[block diagonal structure of $T$ in i.d.
case]
\label{TblockDiagonal} Suppose that ${y}$ is a random vector, ${y} =
\R{vec}(\{{y_k}\})$, where each ${y_i}$ is itself a
random vector in $\mB{R}^{m(i)}$, from some PLQ density 

\noindent
$\B{p}(y_i) \propto \exp[-c_2\rho(U_i, M_i, 0, I; \cdot)]$, and all $y_i$
are independent. Let $U_i = \{u: A_i^\R{T}u \leq a_i\}$. Then the matrix
$T_\rho$ is given by
\(
T_\rho = M + ADA^\R{T}
\)
where
$M = \R{diag}[M_1, \cdots, M_N]$,
$A = \R{diag}[A_1, \cdots,A_N]$,
$D = \R{diag}[D_1, \cdots,D_N]$,
and $\{D_i\}$ are diagonal with positive entries.
Moreover, $T_\rho$ is block diagonal, with $i$th diagonal block given by
$M_i + A_iD_iA_i^\R{T}$.
\end{remark}
From Remark~\ref{TblockDiagonal}, 
the matrices $T_w$ and $T_v$ in (\ref{KalmanT}) are block diagonal provided that
$\{w_k\}$ and $\{v_k\}$ are independent vectors from any PLQ
densities.

We now finish the reduction of $F_{\gamma}^{(1)}$ to upper block
triangular form:
\begin{equation*}
\begin{aligned}
 \R{row}_7 &\leftarrow \R{row}_7 +
\Big(G^\R{T}Q^{-\R{T}/2}B_w^\R{T}T_w^{-1}\Big)\R{row}_5 
-\Big(H^\R{T}R^{-\R{T}/2}B_v^\R{T}T_v^{-1}\Big)\R{row}_6 \\
&\begin{bmatrix}
I & 0 & 0 & 0 & (A_w)^\R{T} & 0 & 0\\
0 & I & 0 & 0 & 0 & (A_v)^\R{T} & 0\\
0 & 0 & S_w & 0 & -Q_w(A_w)^\R{T} & 0 & 0\\
0 & 0 & 0 & S_v & 0 & -Q_v(A_v)^\R{T} & 0 \\
0 & 0 & 0 & 0 & -T_w & 0 & B_wQ^{-1/2}G \\
0 & 0 & 0 & 0 & 0 & -T_v  & -B_vR^{-1/2}H \\
0 & 0 & 0 & 0 & 0 & 0 & \Omega
\end{bmatrix}
\end{aligned}
\end{equation*}
where
\begin{equation}
\label{OmegaMatrix}
\begin{aligned}
\Omega &= \Omega_G + \Omega_H =
G^\R{T}Q^{-\R{T}/2}B_w^\R{T}T_w^{-1}B_wQ^{-1/2}G 
+
H^\R{T}R^{-\R{T}/2}B_v^\R{T}T_v^{-1}B_vR^{-1/2}H.
\end{aligned}
\end{equation}
Note that $\Omega$ is symmetric positive definite. Note also that
$\Omega$ is block tridiagonal, since
\begin{enumerate}
\item $\Omega_H$ is block diagonal.
\item $Q^{-\R{T}/2}B_w^\R{T}T_w^{-1}B_wQ^{-1/2}$ is block
diagonal, and $G$ is block bidiagonal, hence $\Omega_G$
is block tridiagonal.
\end{enumerate}
Solving  system (\ref{NewtonSystem}) requires inverting the block
diagonal matrices $T_v$ and $T_w$ at each iteration of the damped
Newton's method, as well as solving an equation of the form $\Omega
\Delta x = \varrho$. The matrices $T_v$ and $T_w$ are block diagonal, 
with sizes $Nn$ and $Nm$, assuming $m$ measurements at each time point. 
Given that they are invertible (see~\eqref{NullProperty}), these inversions 
take $O(Nn^3)$ and $O(Nm^3)$ time. 
 Since $\Omega $ is block
tridiagonal, symmetric, and positive definite, $\Omega
\Delta x = \varrho$ can be solved in $O(Nn^3)$ time using the block
tridiagonal algorithm in \citep{Bell2000}. 
The remaining four back
solves required to solve (\ref{NewtonSystem}) can each be done in
$O(Nl)$ time, where we assume that $A_{v(k)}\in \mB{R}^{n \times l}$ 
and $A_{w(k)} \in \mB{R}^{m \times l}$
at each time point $k$.

\bibliographystyle{plain}
\bibliography{biblio}

\end{document}